\documentclass{article}
\PassOptionsToPackage{numbers, compress}{natbib}
\usepackage[final]{neurips_2020}
\usepackage[utf8]{inputenc} 
\usepackage[T1]{fontenc}    
\usepackage{hyperref}       
\usepackage{url}            
\usepackage{booktabs}       
\usepackage{amsfonts}       
\usepackage{nicefrac}       
\usepackage{microtype}      
\usepackage{relsize}

\usepackage{mathtools}

\usepackage{xcolor}
\definecolor{linkcolor}{RGB}{83,83,182}
\definecolor{citecolor}{RGB}{128,0,128}
\hypersetup{
    colorlinks=true,
    citecolor=citecolor,
    linkcolor=linkcolor,
    urlcolor=linkcolor
}

\usepackage{svaiter}

\usepackage{color}

\DeclareMathOperator{\Ber}{Ber}

\newcommand{\Ll}{\mathcal{L}}
\newcommand{\Ww}{\mathcal{W}}

\newcommand{\Hh}{\mathcal{H}}

\newcommand{\Bb}{\mathcal{B}}
\def\Clip{c_{\textup{Lip.}}}

\def\Px{P}

\def\gcneq{\Phi}
\def\gcninv{\bar{\Phi}}
\def\cgcneq{\Phi}
\def\cgcninv{\bar{\Phi}}

\title{Convergence and Stability of Graph Convolutional \\ Networks on Large Random Graphs}

%

\author{%
  Nicolas Keriven\thanks{Equal contribution.} \\
  CNRS, GIPSA-lab, Grenoble, France \\
  \texttt{nicolas.keriven@cnrs.fr}
  \And
  Alberto Bietti$^*$ \\
  NYU Center for Data Science, New York, USA\thanks{Work done while AB was at Inria Paris.} \\
  \texttt{alberto.bietti@nyu.edu}
  \And
  Samuel Vaiter \\
  CNRS, IMB, Dijon, France \\
  \texttt{samuel.vaiter@u-bourgogne.fr}
}

\begin{document}

\maketitle

\begin{abstract}
We study properties of Graph Convolutional Networks (GCNs) by analyzing their behavior on standard models of random graphs, where nodes are represented by random latent variables and edges are drawn according to a similarity kernel. 
This allows us to overcome the difficulties of dealing with discrete notions such as isomorphisms on very large graphs, by considering instead more natural geometric aspects.
We first study the convergence of GCNs to their continuous counterpart as the number of nodes grows. Our results are fully non-asymptotic and are valid for relatively sparse graphs with an average degree that grows logarithmically with the number of nodes.
We then analyze the stability of GCNs to small deformations of the random graph model.
In contrast to previous studies of stability in discrete settings, our continuous setup allows us to provide more intuitive deformation-based metrics for understanding stability, which have proven useful for explaining the success of convolutional representations on Euclidean domains.
\end{abstract}

\section{Introduction}

Graph Convolutional Networks (GCNs~\cite{Bruna2013, Defferrard2016, Kipf2017}) are deep architectures defined on graphs inspired by classical Convolutional Neural Networks (CNNs~\cite{lecun1989backpropagation}). In the past few years, they have been successfully applied to, for instance, node clustering \citep{Chen2019c}, semi-supervised learning \citep{Kipf2017}, or graph regression \citep{Kearnes2016, Gilmer2017}, and remain one of the most popular variant of Graph Neural Networks (GNN). We refer the reader to the review papers \citep{Bronstein2017, Wu2019a} for more details.

Many recent results have improved the theoretical understanding of GNNs. While some architectures have been shown to be universal \citep{Maron2019a, Keriven2019} but not implementable in practice, several studies have characterized GNNs according to their power to distinguish (or not) graph \emph{isomorphisms} \citep{Xu2018, Chen2019, Maron2019b} or compute combinatorial graph parameters \citep{Chen2020}. However, such notions usually become moot for large graphs, which are almost never isomorphic to each other, but for which GCNs have proved to be successful in identifying large-scale structures nonetheless, \eg for segmentation or spectral clustering \cite{Chen2019c}. Under this light, a relevant notion is that of \emph{stability}: since GCNs are trained then tested on different (large) graphs, how much does a change in the graph structure affect its predictions? In the context of signals defined on Euclidean domains, including images or audio, convolutional representations such as scattering transforms or certain CNN architectures have been shown to be stable to \emph{spatial deformations}~\citep{Mallat2012,Bietti2017,Qiu2018}. However the notion of deformations is not well-defined on discrete graphs, and most stability studies for GCNs use purely discrete metrics that are less intuitive for capturing natural changes in structure~\citep{Gama2018,Gama2019a,Zou2019}.

In statistics and machine learning, there is a long history of modelling large graphs with random models, see for instance \cite{Bollobas2001, Goldenberg2009, Kolaczyk2010a, Matias2014} and references therein for reviews. \emph{Latent space models} represent each node as a vector of latent variables and independently connect the nodes according to a \emph{similarity kernel} applied to their latent representations. This large family of random graph models includes for instance Stochastic Block Models (SBM) \citep{Holland1983}, graphons \cite{Lovasz2012}, random geometric graphs \cite{Penrose2008}, or $\varepsilon$-graphs \cite{Calder2019}, among  many others \cite{Matias2014}. A key parameter in such models is the so-called \emph{sparsity factor} $\alpha_n$ that controls the number of edges in $\mathcal{O}(n^2\alpha_n)$ with respect to the number of nodes $n$. The \emph{dense} case $\alpha_n \sim 1$ is the easiest to analyze, but often not realistic for real-world graphs. On the contrary, many questions are still open in the \emph{sparse} case $\alpha_n \sim 1/n$ \citep{Abbe2018}. A middle ground, which will be the setting for our analysis, is the so-called \emph{relatively sparse} case $\alpha_n \sim \log n/n$, for which several non-trivial results are known \citep{Lei2015, Keriven2020}, while being more realistic than the dense case.

\paragraph{Outline and contributions.} In this paper, we analyze the convergence and stability properties of GCNs on large random graphs. We define a ``continuous'' counterpart to discrete GCNs acting on graph models in Section \ref{sec:notations}, study notions of invariance and equivariance to isomorphism of random graph models, and give convergence results when the number of nodes grows in Section \ref{sec:conv}. In particular, our results are fully non-asymptotic, valid for relatively sparse random graphs, and unlike many studies \cite{VonLuxburg2008, Rosasco2010} we do not assume that the similarity kernel is smooth or bounded away from zero. In Section \ref{sec:stability}, we analyze the stability of GCNs to small deformation of the underlying random graph model. Similar to CNNs \citep{Mallat2012, Bietti2017}, studying GCNs in the continuous world allows us to define intuitive notions of model deformations and characterize their stability. Interestingly, for GCNs equivariant to permutation, we relate existing discrete notions of distance between graph signals to a Wasserstein-type metric between the corresponding continuous representations, which to our knowledge did not appear in the literature before.

\paragraph{Related work on large-scale random graphs.} There is an long history of studying the convergence of graph-related objects on large random graphs. A large body of works examine the convergence of the eigenstructures of the graph adjacency matrix or Laplacian in the context of spectral clustering \cite{Belkin2007,VonLuxburg2008,Lei2015,Tang2018a} or learning with operators \cite{Rosasco2010}. The theory of graphons \cite{Lovasz2012} defines (dense) graph limits for more general metrics, which is also shown to lead to spectral convergence \cite{Diao2016}.
Closer to our work, notions of Graph Signal Processing (GSP) such as the graph Fourier Transform have been extended to graphons \cite{Ruiz2020a} or sampling of general Laplacian operators \cite{Levie2019}. Partial results on the capacity of GCNs to distinguish dense graphons are derived in \cite{Magner2020}, however their analysis based on random walks differs greatly from ours.
In general, many of these studies are asymptotic \cite{VonLuxburg2008, Ruiz2020a}, valid only in the dense case \cite{VonLuxburg2008, Rosasco2010, Ruiz2020a, Levie2019, Magner2020}, or assume kernels that are smooth or bounded away from zero \cite{Rosasco2010}, and thus exclude several important cases such as SBMs, $\varepsilon$-graphs, and non-dense graphs altogether. By specifying models of (relatively sparse) random graphs, we derive non-asymptotic, fully explicit bounds with relaxed hypotheses.

\paragraph{Related work on stability.}
The study of stability to deformations has been pioneered by Mallat~\cite{Mallat2012} in the context of the scattering transform for signals on Euclidean domains such as images or audio signals~\cite{bruna2013invariant,anden2014deep},
and was later extended to more generic CNN architectures~\cite{Bietti2017,Qiu2018}.
A more recent line of work has studied stability properties of GCNs or scattering representations on discrete graphs, by considering certain well-chosen discrete perturbations and metrics~\citep{Gama2018,Gama2019,Gama2019a,Zou2019}, which may however have limited interpretability without an underlying model.
In contrast, our continuous setup allows us to define more intuitive geometric perturbations based on deformations of random graph models and to obtain deformation stability bounds that are similar to those on Euclidean domains~\cite{Mallat2012}.
We note that~\cite{Levie2019} also considers GCN representations with continuous graph models, but the authors focus on the different notion of ``transferability'' of graph filters on different discretizations of the \emph{same} underlying continuous graph structure, while we consider \emph{explicit deformations} of this underlying structure and obtain non-asymptotic bounds for the resulting random graphs.

\section{Preliminaries}\label{sec:notations}

\paragraph{Notations.}
The norm $\norm{\cdot}$ is the Euclidean norm for vector and spectral norm for matrices, and $\norm{\cdot}_F$ is the Frobenius norm.
We denote by $\Bb(\Xx)$ the space of bounded real-valued functions on $\Xx$ equipped with the norm $\norm{f}_\infty = \sup_x \abs{f(x)}$.
Given a probability distribution $P$ on $\Xx$, we denote by $L^2(P)$ the Hilbert space of $P$-square-integrable functions endowed with its canonical inner product.
For multivariate functions $f=[f_1,\ldots, f_d]$ and any norm $\norm{\cdot}$, we define $\norm{f} = (\sum_{i=1}^d \norm{f_i}^2)^\frac12$.
For two probability distributions $P,Q$ on $\RR^d$, we define the Wasserstein-2 distance $\Ww_2^2(P,Q) = \inf\lbrace\mathbb{E}\norm{X-Y}^2~|~X \sim P, Y\sim Q\rbrace$, where the infimum is over all joint distributions of $(X,Y)$. We denote by $f_\sharp P$ the push-forward of $P$ by $f$, that is, the distribution of $f(X)$ when $X \sim P$.

A graph $G = (A,Z)$ with $n$ nodes is represented by a symmetric adjacency matrix $A \in \{0,1\}^{n\times n}$ such that $a_{ij} = 1$ if there is an edge between nodes $i$ and $j$, and a matrix of signals over the nodes $Z \in \RR^{n \times d_z}$, where $z_i \in \RR^{d_z}$ is the vector signal at node $i$.
We define the normalized Laplacian matrix as $L=L(A) = D(A)^{-\frac12} A D(A)^{-\frac12}$, where $D(A) = \diag(A 1_n)$ is the degree matrix, and $(D(A)^{-\frac12})_i = 0$ if $D(A)_i = 0$. The normalized Laplacian is often defined by $\Id - L$ in the literature, however this does not change the considered networks since the filters include a term of order $0$.

\paragraph{Graph Convolutional Networks (GCN).}

GCNs are defined by alternating filters on graph signals and non-linearities.
We use analytic filters (said of order-$k$ if $\beta_{\ell}=0$ for $\ell\geq k+1$):
\begin{equation}\label{eq:def_filter}
h:\RR\to\RR,\quad h(\lambda) = \textstyle \sum_{k\geq 0} \beta_k \lambda^k .
\end{equation}
We write $h(L) \!=\! \sum_k \beta_k L^k$, \ie we apply $h$ to the eigenvalues of $L$ when it is diagonalizable.

A GCN with $M$ layers is defined as follows.
The signal at the input layer is $Z^{(0)} = Z$ with dimension $d_0=d_z$ and columns $z_j^{(0)} \in\RR^{n}$.
Then, at layer $\ell$, the signal  $Z^{(\ell)} \in \RR^{n \times d_\ell}$ with columns $z_j^{(\ell)} \in \RR^{n}$ is propagated as follows:
\begin{equation}\label{eq:GCN_discrete}
\mathsmaller{\forall j = 1,\ldots d_{\ell+1}, \quad z^{(\ell+1)}_j = \rho\pa{\sum_{i=1}^{d_\ell} h^{(\ell)}_{ij}(L) z^{(\ell)}_i + b_{j}^{(\ell)} 1_n } \in \RR^{n}},
\end{equation}
where $h^{(\ell)}_{ij}(\lambda) = \sum_k \beta_{ijk}^{(\ell)} \lambda^k$ are learnable analytic filters, $b^{(\ell)}_j \in \RR$ are learnable biases, and the activation function $\rho:\RR \to \RR$ is applied pointwise.
Once the signal at the final layer $Z^{(M)}$ is obtained, the output of the entire GCN is either a signal over the nodes denoted by $\gcneq_A(Z) \in  \RR^{n \times d_{out}}$ or a single vector denoted by $\gcninv_A(Z) \in \RR^{d_{out}}$ obtained with an additional pooling over the nodes:
\begin{equation}\label{eq:GCN_discrete_output}
\mathsmaller{\gcneq_A(Z) \eqdef Z^{(M)} \theta + 1_n b^\top , \quad \gcninv_A(Z) \eqdef \frac{1}{n}\sum_{i=1}^n \gcneq_A(Z)_i} ,
\end{equation}
where $\theta \in \RR^{d_M \times d_{out}}$, $b \in \RR^{d_{out}}$ are the final layer weights and bias, and $\gcneq_A(Z)_i\in \RR^{d_{out}}$ is the output signal at node $i$. This general model of GCN encompasses several models of the literature, including all spectral-based GCNs \cite{Bruna2013, Defferrard2016}, or GCNs with order-$1$ filters \cite{Kipf2017} which are assimilable to message-passing networks \cite{Gilmer2017}, see \cite{Wu2019a, Bronstein2017} for reviews. For message-passing networks, note that almost all our results would also be valid by replacing the sum over neighbors by another aggregation function such as $\max$.
We assume (true for ReLU, modulus, or sigmoid) that the function $\rho$ satisfies:
\begin{equation}\label{eq:rho}
\abs{\rho(x)} \leq \abs{x}, \quad \abs{\rho(x) - \rho(y)} \leq \abs{x-y} .
\end{equation}

Two graphs $G=(A,Z)$, $G'=(A',Z')$ are said to be \emph{isomorphic} if one can be obtained from the other by relabelling the nodes. In other words, there exists a \emph{permutation matrix} $\sigma \in \Sigma_n$, where $\Sigma_n$ is the set of all permutation matrices, such that $A = \sigma \cdot A' \eqdef \sigma A' \sigma^\top$ and $Z = \sigma \cdot Z' \eqdef \sigma Z'$, where ``$\sigma \cdot\!\phantom{.}$'' is a common notation for permuted matrices or signal over nodes.
In graph theory, functions that are \emph{invariant} or \emph{equivariant} to permutations are of primary importance (respectively, permuting the input graph does not change the output, or permutes the output). These properties are hard-coded in the structure of GCNs, as shown by the following proposition (proof in Appendix \ref{app:permut}).
\begin{proposition}\label{prop:permut}
We have $\gcneq_{\sigma \cdot A} (\sigma \cdot Z) = \sigma \cdot \gcneq_A(Z)$ and $\gcninv_{\sigma \cdot A} (\sigma \cdot Z) = \gcninv_A(Z)$.
\end{proposition}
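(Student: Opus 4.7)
The plan is a direct induction over layers, showing that each operation in the recursion \eqref{eq:GCN_discrete} commutes appropriately with the permutation action. The starting observation is that the degree matrix and normalized Laplacian are equivariant: since $(\sigma A \sigma^\top) 1_n = \sigma A 1_n$ (because $\sigma^\top 1_n = 1_n$), we have $D(\sigma \cdot A) = \sigma D(A) \sigma^\top$, and therefore $L(\sigma \cdot A) = \sigma L(A) \sigma^\top$. Because $\sigma$ is orthogonal, $(\sigma L \sigma^\top)^k = \sigma L^k \sigma^\top$ for every $k$, hence for any analytic filter $h$ we get $h(L(\sigma \cdot A)) = \sigma h(L(A)) \sigma^\top$.

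The induction hypothesis is that the signals at layer $\ell$ satisfy $Z^{(\ell)}_{\sigma \cdot A, \sigma \cdot Z} = \sigma Z^{(\ell)}_{A, Z}$, where I index the activations by the input graph. For the base case $\ell=0$ this is just the definition $Z^{(0)} = Z$. For the inductive step, apply the filter and observe
\[
h^{(\ell)}_{ij}(L(\sigma \cdot A)) \, (\sigma z^{(\ell)}_i) = \sigma h^{(\ell)}_{ij}(L(A)) \sigma^\top \sigma z^{(\ell)}_i = \sigma \, h^{(\ell)}_{ij}(L(A)) z^{(\ell)}_i ,
\]
while the bias term $b^{(\ell)}_j 1_n$ is invariant under $\sigma$ since $\sigma 1_n = 1_n$, so it can be absorbed as $\sigma (b^{(\ell)}_j 1_n)$. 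Summing over $i$ and applying the pointwise nonlinearity $\rho$, which commutes with permutation of coordinates, yields $z^{(\ell+1)}_j$ permuted by $\sigma$, which closes the induction.

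The equivariance of $\gcneq$ then follows from the final layer formula in \eqref{eq:GCN_discrete_output}: $\gcneq_{\sigma \cdot A}(\sigma \cdot Z) = (\sigma Z^{(M)}) \theta + 1_n b^\top = \sigma( Z^{(M)} \theta + 1_n b^\top) = \sigma \cdot \gcneq_A(Z)$, using once more $\sigma 1_n = 1_n$. Invariance of $\gcninv$ is then immediate by averaging over nodes: summing the rows of $\sigma \cdot \gcneq_A(Z)$ is the same as summing the rows of $\gcneq_A(Z)$ (only the order is changed), so dividing by $n$ gives $\gcninv_{\sigma \cdot A}(\sigma \cdot Z) = \gcninv_A(Z)$.

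There is no real obstacle here; the only subtlety worth flagging is the convention $(D(A)^{-1/2})_i = 0$ when the degree vanishes, but since the zero pattern of $D$ is also permuted equivariantly, the identity $D(\sigma \cdot A)^{-1/2} = \sigma D(A)^{-1/2} \sigma^\top$ still holds, and the argument above goes through unchanged.
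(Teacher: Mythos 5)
Your proof is correct and follows essentially the same route as the paper's: equivariance of the normalized Laplacian (including the zero-degree convention), hence of analytic filters, then a layer-by-layer induction using $\sigma 1_n = 1_n$ and commutation with the pointwise nonlinearity, with invariance following from the final pooling. The paper states this argument more tersely; your version just spells out the induction.
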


\paragraph{Random graphs.}
Let $(\Xx, d)$ be a compact metric space.
In this paper, we consider latent space graph models where each node $i$ is represented by an unobserved latent variable $x_i \in \Xx$, and nodes are connected randomly according to some \emph{similarity kernel}. While the traditional graphon model~\cite{Lovasz2012} considers (without lost of generality) $\Xx = [0,1]$, it is often more intuitive to allow general spaces to represent meaningful variables \cite{DeCastro2017}. We consider that the observed signal $z_i \in \RR^{d_z}$ is a function of the latent variable $x_i$, without noise for now. In details, a \emph{random graph model} $\Gamma=(P,W,f)$ is represented by a probability distribution $\Px$ over $\Xx$, a symmetric kernel $W:\Xx\times \Xx \to [0,1]$ and a bounded function $f:\Xx\to \RR^{d_z}$.
A random graph $G$ with $n$ nodes is then generated as follows: 
\begin{align}
&\forall j<i\leq n:\quad x_i \stackrel{iid}{\sim} \Px, \quad z_{i} = f(x_i), \quad a_{ij} \sim \Ber(\alpha_n W(x_i, x_j)) . \label{eq:rg_model}
\end{align}
where $\Ber$ is the Bernoulli distribution. We define $d_{W,P} \eqdef \int W(\cdot, x) d P(x)$ the \emph{degree function} of $\Gamma$.
As outlined in the introduction, the sparsity factor $\alpha_n \in [0,1]$ plays a key role. The so-called \emph{relatively sparse} case $\alpha_n \sim \frac{\log n}{n}$ will be the setting for our analysis.

Let us immediately make some assumptions that will hold throughout the paper. We denote by $N(\Xx, \varepsilon, d)$ the $\varepsilon$-covering numbers (that is, the minimal number of balls of radius $\varepsilon$ required to cover $\Xx$) of $\Xx$, and assume that they can be written under the form $N(\Xx, \varepsilon, d) \leq \varepsilon^{-d_x}$ for some constant $d_x>0$ (called the \emph{Minkowski} dimension of $\Xx$), and $\text{diam}(\Xx)\leq 1$. Both conditions can be obtained by a rescaling of the metric $d$.
Let $c_{\min}, c_{\max}>0$ be constants. A function $f:\Xx \!\to\! \RR$ is said to be $(\Clip,n_\Xx)$-piecewise Lipschitz if there is a partition $\Xx_1, \ldots, \Xx_{n_\Xx}$ of $\Xx$ such that, for all $x,x'$ in the same $\Xx_i$, we have $\abs{f(x)-f(x')}\leq \Clip d(x,x')$. All considered random graph models $\Gamma = (P,W,f)$ satisfy that for all $x\in\Xx$,
\begin{align}
\norm{W(\cdot,x)}_\infty \leq c_{\max},\quad d_{W,P}(x) \geq c_{\min}, \quad \text{$W(\cdot,x)$ is $(\Clip,n_\Xx)$-piecewise Lipschitz.} \label{eq:assumption_RG}
\end{align} 
Unlike other studies \cite{VonLuxburg2008, Rosasco2010}, we do \emph{not} assume that $W$ itself is bounded away from $0$ or smooth, and thus include important cases such as SBMs (piecewise constant $W$) and $\varepsilon$-graphs (threshold kernels).

\paragraph{Continuous GCNs.} Since $d_{W,P} > 0$, we define the normalized Laplacian operator $\Ll_{W, P}$ by
\begin{equation}\label{eq:def_laplacian_operator_P}
\Ll_{W,P}f \eqdef \int \frac{W(\cdot, x)}{\sqrt{d_{W,P}(\cdot) d_{W,P}(x)}}f(x)dP(x) .
\end{equation}
Analytic filters on operators are $h(\Ll) = \sum_k \beta_k \Ll^k$, with $\Ll^k = \Ll \circ \ldots \circ \Ll$. We do \emph{not} assume that the filters are of finite order (even if they usually are in practice \cite{Defferrard2016}), however we will always assume that $\sum_{k} k \abs{\beta_k} \pa{2c_{\max}/ c_{\min}}^k$ converges.
Similar to the discrete case, we define continuous GCNs (c-GCN) that act on random graph models, by replacing the input signal $Z$ with $f$, the Laplacian $L$ by $\Ll_{W,P}$, and propagating functions instead of node signals \ie we take $f^{(0)} = f$ the input function with coordinates $f^{(0)}_1, \ldots, f^{(0)}_{d_z}$ and:
\begin{equation}\label{eq:GCN_cont}
\mathsmaller{\forall j=1,\ldots, d_{\ell+1}, \quad f^{(\ell+1)}_j = \rho \circ \pa{\sum_{i=1}^{d_\ell} h^{(\ell)}_{ij}(\Ll_{W,P}) f^{(\ell)}_i + b_{j}^{(\ell)}1(\cdot) }} ,
\end{equation}
where $1(\cdot)$ represent the constant function $1$ on $\Xx$.
Once the final layer function $f^{(M)}:\Xx \to \RR^{d_M}$ is obtained, the output of the c-GCN is defined as in the discrete case, either as a multivariate function $\cgcneq_{W,P}(f):\Xx \to \RR^{d_{out}}$ or a single vector $\cgcninv_{W,P}(f)\in \RR^{d_{out}}$ obtained by pooling:
\begin{equation}\label{eq:GCN_cont_output}
\cgcneq_{W,P}(f) \eqdef \theta^\top f^{(M)} + b 1(\cdot) , \quad \cgcninv_{W,P}(f) = \int \cgcneq_{W,P}(f)(x)dP(x) .
\end{equation}
Hence the \emph{same} parameters $\{(\beta_{ijk}^{(\ell)}, b_j^{(\ell)})_{ijk\ell}, \theta, b \}$ define both a discrete and a continuous GCN, the latter being (generally) not implementable in practice but useful to analyze their discrete counterpart.

For a random graph model $\Gamma = (P, W, f)$, and any invertible map $\phi: \Xx \to \Xx$, we define $\phi \cdot W \eqdef W(\phi(\cdot), \phi(\cdot))$ and $\phi \cdot f \eqdef f \circ \phi$.
Recalling that $(\phi^{-1})_\sharp P$ is the distribution of $\phi^{-1}(x)$ when $x\sim P$, it is easy to see that $\phi \cdot \Gamma \eqdef ((\phi^{-1})_\sharp P, \phi \cdot W, \phi \cdot f)$ defines the same probability distribution as $\Gamma = (P,W,f)$ over discrete graphs. Therefore, we say that $\Gamma$  and $\phi \cdot \Gamma$ are \emph{isomorphic}, which is a generalization of isomorphic graphons \cite{Lovasz2012} when $P$ is the uniform measure on $[0,1]$. Note that, technically, $\phi$ needs only be invertible on the support of $P$ for the above definitions to hold. 
As with discrete graphs, functions on random graph models can be invariant or equivariant, and c-GCNs satisfy these properties (proof in Appendix \ref{app:permut}).
\begin{proposition}\label{prop:permut-c}
For all $\phi$, $\cgcneq_{\phi \cdot W, (\phi^{-1})_\sharp P} (\phi \cdot f) = \phi \cdot \cgcneq_{W, P} (f)$ and $\cgcninv_{\phi \cdot W, (\phi^{-1})_\sharp P} (\phi \cdot f) = \cgcninv_{W, P} (f)$.
\end{proposition}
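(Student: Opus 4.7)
The plan is to show that the Laplacian operator, and then every layer, is equivariant in the sense $(\cdot)_{\phi \cdot W,(\phi^{-1})_\sharp P}(\phi \cdot g) = \phi \cdot (\cdot)_{W,P}(g)$, so the equivariance of $\cgcneq$ follows immediately from the definition and the invariance of $\cgcninv$ follows by integrating and performing one change of variables.

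First I would check how the degree function transforms. Using the definition of the pushforward, for every $x$,
\begin{equation*}
d_{\phi \cdot W,(\phi^{-1})_\sharp P}(x) = \int W(\phi(x),\phi(y))\,d((\phi^{-1})_\sharp P)(y) = \int W(\phi(x),u)\,dP(u) = d_{W,P}(\phi(x)),
\end{equation*}
so $d_{\phi\cdot W,(\phi^{-1})_\sharp P} = \phi \cdot d_{W,P}$, which ensures in particular that the lower bound $c_{\min}$ is preserved and the normalized Laplacian of the transformed model is well-defined.

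Next I would establish equivariance of the Laplacian operator itself. Plugging into \eqref{eq:def_laplacian_operator_P} and again using the change of variables $u = \phi(y)$ in the pushforward integral, for any bounded $g$ and any $x$,
\begin{equation*}
\Ll_{\phi\cdot W,(\phi^{-1})_\sharp P}(\phi \cdot g)(x) = \int \frac{W(\phi(x),\phi(y))\, g(\phi(y))}{\sqrt{d_{W,P}(\phi(x))\, d_{W,P}(\phi(y))}}\, d((\phi^{-1})_\sharp P)(y) = (\Ll_{W,P}g)(\phi(x)),
\end{equation*}
i.e., $\Ll_{\phi\cdot W,(\phi^{-1})_\sharp P} \circ (\phi \cdot) = (\phi \cdot) \circ \Ll_{W,P}$. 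By iterating this $k$ times we get $\Ll^k_{\phi\cdot W,(\phi^{-1})_\sharp P}(\phi \cdot g) = \phi \cdot (\Ll^k_{W,P} g)$, and by linearity (combined with the absolute convergence of $\sum_k \beta_k \Ll^k$ under the standing assumption on the filter coefficients) the same identity holds for any analytic filter $h(\Ll)$.

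Then I would prove by induction on $\ell$ that $f^{(\ell)}_{\phi\cdot W,(\phi^{-1})_\sharp P}(\phi\cdot f) = \phi \cdot f^{(\ell)}_{W,P}(f)$. The base case $\ell = 0$ is by definition since the input signal is $\phi \cdot f$ on one side and $f$ on the other. For the induction step, the three operations composing one layer all commute with $\phi \cdot$: filters do by the previous step, the constant bias term is trivially invariant since $1(\phi(x)) = 1(x)$, and the pointwise nonlinearity $\rho$ commutes with precomposition by $\phi$. Applying this at $\ell = M$ and noting that the final affine map $u \mapsto \theta^\top u + b\, 1(\cdot)$ also commutes with $\phi \cdot$ yields the equivariance $\cgcneq_{\phi\cdot W,(\phi^{-1})_\sharp P}(\phi \cdot f) = \phi \cdot \cgcneq_{W,P}(f)$. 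For the invariant output, integrating against $(\phi^{-1})_\sharp P$ and using the same change of variables one more time,
\begin{equation*}
\cgcninv_{\phi\cdot W,(\phi^{-1})_\sharp P}(\phi \cdot f) = \int \cgcneq_{W,P}(f)(\phi(x))\, d((\phi^{-1})_\sharp P)(x) = \int \cgcneq_{W,P}(f)(u)\, dP(u) = \cgcninv_{W,P}(f).
\end{equation*}

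There is no real obstacle here, only bookkeeping: the only step that requires any care is the change of variables under $(\phi^{-1})_\sharp P$, and the fact that $\phi$ need only be invertible on the support of $P$ for all the above integrals to be unambiguous.
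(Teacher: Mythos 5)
Your proposal is correct and follows essentially the same route as the paper's own proof: first showing $d_{\phi\cdot W,(\phi^{-1})_\sharp P} = \phi\cdot d_{W,P}$, then the intertwining $\Ll_{\phi\cdot W,(\phi^{-1})_\sharp P}(\phi\cdot g) = \phi\cdot(\Ll_{W,P}g)$, extending to powers and analytic filters, commuting with the pointwise nonlinearity layer by layer, and handling the invariant case by a final change of variables in the pooling integral. Your write-up is merely a bit more explicit (the layer induction, the final affine map, the preservation of the $c_{\min}$ bound) than the paper's terser argument, but there is no substantive difference.
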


In the rest of the paper, most notation-heavy multiplicative constants are given in the appendix. They depend on $c_{\min}, c_{\max}, \Clip$ and the operator norms of the matrices $B_k^{(\ell)} = (\beta_{ijk}^{(\ell)})_{ij}$.

\section{Convergence of Graph Convolutional Networks}\label{sec:conv}

In this section, we show that a GCN applied to a random graph $G \sim \Gamma$ will be close to the corresponding c-GCN applied to $\Gamma$.
In the invariant case, $\gcninv_A(Z)$ and $\cgcninv_{W,P}(f)$ are both vectors in $\RR^{d_{out}}$. In the equivariant case, we will show that the output signal $\gcneq_A(Z)_i \in \RR^{d_{out}}$ at each node is close to the function $\cgcneq_{W,P}(f)$ evaluated at $x_i$. To measure this, we consider the (square root of the) Mean Square Error at the node level: for a signal $Z \in \RR^{n \times d_{out}}$, a function $f:\Xx \to \RR^{d_{out}}$ and latent variables $X$, we define
$
\textup{MSE}_X\pa{Z, f} \eqdef (n^{-1}\sum_{i=1}^n \norm{Z_i - f(x_i)}^2)^{1/2}
$.
In the following theorem we use the shorthand $D_\Xx(\rho) \eqdef \frac{\Clip}{c_{\min}} \sqrt{d_x} + \frac{c_{\max} + \Clip}{c_{\min}}\sqrt{\log\frac{n_\Xx}{\rho}}$.
\begin{theorem}[Convergence to continuous GCN]\label{thm:conv}
Let $\Phi$ be a GCN and $G$ be a graph with $n$ nodes generated from a model $\Gamma$, denote by $X$ its latent variables. There are two universal constants $c_1, c_2$ such that the following holds.
Take any $\rho>0$, assume $n$ is large enough such that $n \geq  c_1 D_\Xx(\rho)^2 + \frac{1}{\rho}$, and the sparsity level is such that $\alpha_n \geq c_2 c_{\max} c_{\min}^{-2} \cdot n^{-1}\log n$.
Then, with probability at least $1-\rho$,
\begin{align*}
\textup{MSE}_X\pa{\gcneq_A(Z), \cgcneq_{W,P}(f)} &\leq  R_n \eqdef C_1 D_\Xx\pa{\tfrac{\rho}{\sum_\ell d_\ell}}n^{-\frac12} + C_2 (n\alpha_n)^{-\frac12} , \\
\norm{\gcninv_A(Z) - \cgcninv_{W,P}(f)} &\leq R_n + C_3 \sqrt{\log(1/\rho)} n^{-\frac12} .
\end{align*}
\end{theorem}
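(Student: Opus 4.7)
The plan is to introduce an intermediate ``sampled'' operator that interpolates between the discrete Laplacian $L(A)$ and the continuous $\Ll_{W,P}$, then lift the resulting operator-norm bounds through the filters, non-linearities, and layers of the GCN. Concretely, condition on the latent variables $X=(x_1,\ldots,x_n)$, let $P_n = \frac{1}{n}\sum_i \delta_{x_i}$, and use the evaluation map $S_n: \Bb(\Xx)\to \RR^n$ with $(S_n g)_i = g(x_i)$. The matrix $\tilde L$ that represents $\Ll_{W,P_n}$ at the sample points serves as the right intermediate object: $L(A)-\tilde L$ is pure Bernoulli edge-sampling error, while $\tilde L - S_n \Ll_{W,P}$ captures the error from replacing $P$ by $P_n$.

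I would bound these two pieces separately. For $L(A)-\tilde L$, matrix Bernstein / Lei--Rinaldo type concentration for sparse adjacency matrices handles the numerator, while a Bernstein bound on each row controls the degree normalization $D(A)/(n\alpha_n) \to d_{W,P_n}$; the sparsity assumption $\alpha_n \geq c_2 c_{\max} c_{\min}^{-2} n^{-1}\log n$ combined with the lower bound $d_{W,P}\geq c_{\min}$ from~(\ref{eq:assumption_RG}) is exactly what permits inverting $\sqrt{D(A)}$ safely with high probability and produces the $(n\alpha_n)^{-1/2}$ term. For $\tilde L - S_n \Ll_{W,P}$, I would establish uniform concentration of $x \mapsto \frac{1}{n}\sum_j W(x,x_j)g(x_j)/\sqrt{d_{W,P_n}(x_j)}$ around its expectation via an $\varepsilon$-net on $\Xx$, using the Minkowski bound $N(\Xx,\varepsilon,d)\leq \varepsilon^{-d_x}$ together with the piecewise Lipschitz structure of $W(\cdot, x)$ (the net must refine the partition $\{\Xx_i\}_{i=1}^{n_\Xx}$, which is what contributes the $\log n_\Xx$ term inside $D_\Xx$). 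Optimizing $\varepsilon$ yields the $D_\Xx(\rho) n^{-1/2}$ rate.

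With an operator-norm bound on $L(A) - S_n \Ll_{W,P}$ in hand, filters are controlled via a telescoping identity for $L^k - \Ll^k$, and the convergence assumption $\sum_k k|\beta_k|(2c_{\max}/c_{\min})^k < \infty$ then gives a bound on $\|h(L) - h(\Ll_{W,P})\|$ that is linear in the operator error. An induction over layers using the filter bound, the non-expansiveness of $\rho$ from~(\ref{eq:rho}), the triangle inequality, and a parallel control of $\|f^{(\ell)}\|_\infty$ propagates the estimate to the final layer; the bound depends polynomially on the operator norms $\|B_k^{(\ell)}\|$, which are absorbed into the constants $C_1, C_2$. Each uniform concentration step incurs a union bound over the $\sum_\ell d_\ell$ scalar functions propagated through the network, which is why the confidence level gets divided by $\sum_\ell d_\ell$ inside $D_\Xx$ in the final expression. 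For the invariant case, write $\gcninv_A(Z) - \cgcninv_{W,P}(f)$ as $\big(\tfrac{1}{n}\sum_i[\gcneq_A(Z)_i - \cgcneq_{W,P}(f)(x_i)]\big) + \big(\tfrac{1}{n}\sum_i \cgcneq_{W,P}(f)(x_i) - \int \cgcneq_{W,P}(f)\,dP\big)$: the first summand is at most the MSE by Cauchy--Schwarz, and the second is Hoeffding-type concentration of a bounded iid sum, yielding the extra $\sqrt{\log(1/\rho)}\, n^{-1/2}$ term.

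The main obstacle is the first concentration step in the relatively sparse regime $\alpha_n \sim n^{-1}\log n$: standard sparse-matrix concentration only becomes near-optimal above this threshold, and the normalized Laplacian makes matters substantially harder because one must simultaneously lower-bound every individual degree with high probability before inverting $\sqrt{D(A)}$, while also controlling the interaction with the Bernoulli fluctuations. A secondary subtlety is that the mere piecewise Lipschitz (rather than globally Lipschitz or smooth) kernel, which accommodates important cases such as SBMs and $\varepsilon$-graphs, forces the cover in the uniform concentration step to adapt to the partition $\{\Xx_i\}$, giving rise to the $\sqrt{\log(n_\Xx/\rho)}$ term in $D_\Xx(\rho)$.
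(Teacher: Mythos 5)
Your overall architecture matches the paper's proof closely: you condition on $X$, interpose the noise-free normalized Laplacian $L(W(X))$ (equivalently $\Ll_{W,X}$) between $L(A)$ and $\Ll_{W,P}$, control the sampling error for each \emph{fixed} function propagated by the continuous network via a covering/chaining argument exploiting the piecewise Lipschitz property of $W$ in its first argument (hence the $\sqrt{\log(n_\Xx/\rho)}$ term and the union bound over the $\sum_\ell d_\ell$ scalar coordinates), telescope $L^k-\Ll^k$ through the filters, run the layer recursion with the non-expansive $\rho$, and finish the invariant case with a Hoeffding step after splitting off the MSE. This is exactly the paper's route, with Dudley's inequality playing the role of your $\varepsilon$-net and the filter/recursion lemmas doing the lifting.

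The genuine gap is in the edge-randomness term. You propose to bound $\norm{L(A)-L(W(X))}$ by combining a Lei--Rinaldo/matrix-Bernstein bound on the adjacency fluctuations with a row-wise Bernstein bound on the degrees before inverting $\sqrt{D(A)}$. At the sparsity level assumed in the theorem, $\alpha_n \asymp n^{-1}\log n$, this route does not deliver the claimed $C_2(n\alpha_n)^{-1/2}$: individual expected degrees are only $\Theta(\log n)$, so a per-row Bernstein bound controls the relative degree error merely at order $\sqrt{\log n/(n\alpha_n)}$, and the resulting bound on the normalized Laplacian is of order $\sqrt{\log n/(n\alpha_n)}$ (Oliveira-type), which is $O(1)$ at the threshold rather than the stated $O(1/\sqrt{n\alpha_n})$. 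Removing this extra $\sqrt{\log n}$ factor for the \emph{normalized} Laplacian is precisely the "fairly involved" ingredient the paper imports as a black box from \cite{Keriven2020} (Theorem \ref{thm:conc-sparse-Lapl}), and it requires exploiting cancellations in the degree fluctuations rather than a worst-case per-degree bound. With the tools you name, you would prove the theorem only with an extra $\sqrt{\log n}$ on the $(n\alpha_n)^{-1/2}$ term, or under a stronger sparsity assumption than the one stated. (Relatedly, the condition $n\geq 1/\rho$ in the statement originates from the polynomial tail of that third-party result, which your sketch does not account for.)
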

\paragraph{Discussion.}
The constants $C_i$ are of the form $C'_i\norm{f}_\infty + C''_i$ and detailed in the appendix. When the filters are normalized and there is no bias, they are proportional to $M \norm{f}_\infty$. In particular, they do not depend on the dimension $d_x$.

The proof use standard algebraic manipulations, along with two concentration inequalities. The first one exploits Dudley's inequality \cite{Vershynin2018} to show that, for a fixed function $f$ and in the absence of random edges, $\Ll_{W,P} f$ is well approximated by its discrete counterpart. Note here that we do not seek a \emph{uniform} proof with respect to a functional space, since the c-GCN is fixed. This allows us to obtain non-asymptotic rate while relaxing usual smoothness hypotheses \cite{Rosasco2010}. This first concentration bound leads to the standard rate in $\mathcal{O}(1/\sqrt{n})$.

The second bound uses a fairly involved recent concentration inequality for normalized Laplacians of relatively sparse graphs with random edges derived in \citep{Keriven2020}, which gives the term in $\mathcal{O}(1/\sqrt{\alpha_n n})$. Although this second term has a strictly worse convergence rate except in the dense case $\alpha_n \sim 1$, its multiplicative constant is strictly better, in particular it does not depend on the Minkowski dimension $d_x$. The condition $n\geq 1/\rho$, which suggests a polynomial concentration instead of the more traditional exponential one, comes from this part of the proof.

It is known in the literature that using the normalized Laplacian is often more appropriate than the adjacency matrix. If we where to use the latter, a normalization by $(\alpha_n n)^{-1}$ would be necessary \cite{Lei2015}. However, $\alpha_n$ is rarely known, and can change from one case to the other. The normalized Laplacian is adaptative to $\alpha_n$ and does not require any normalization.

\paragraph{Example of applications.} Invariant GCNs are typically used for regression or classification at the graph level. Theorem~\ref{thm:conv} shows that the output of a discrete GCN directly approaches that of the corresponding c-GCN. Equivariant GCNs are typically used for regression at the node level. Consider an ideal function $f^*:\Xx \to \RR^{d_{out}}$ that is well approximated by an equivariant c-GCN $\cgcneq_{W,P}(f)$ in terms of $L^2(P)$-norm. Then, the error between the output of the discrete GCN $\gcneq_A(Z)$ and the sampling of $f^*$ satisfies with high probability
$
\textup{MSE}_X\pa{\gcneq_A(Z), f^*} \leq \norm{\cgcneq_{W,P}(f) - f^*}_{L^2(P)} + R_n + \mathcal{O}(n^{-\frac{1}{4}})
$
using a triangle inequality, Theorem \ref{thm:conv} and Hoeffding's inequality.

\paragraph{Noisy or absent signal.}
Until now, we have considered that the function $f$ was observed without noise. Noise can be handled by considering the Lipschitz properties of the GCN. For instance, in the invariant case, by Lemma \ref{lem:lip_gcn} in Appendix \ref{app:technical}, we have $\norm{\gcninv_A(Z_1) - \gcninv_A(Z_2)} \lesssim \frac{1}{\sqrt{n}} \norm{Z_1 - Z_2}_F$. Hence, if the input signal is the noisy $z_i = f(x_i) + \nu_i$, where $\nu$ is centered iid noise, a GCN deviates from the corresponding c-GCN by an additional $n^{-1/2} \norm{(\nu_i)_i}_F$, which converges to the standard deviation of the noise. Interestingly, the noise can be filtered out: for instance, if one inputs $\bar Z = LZ$ into the GCN, then by a concentration inequality it is not difficult to see that the smoothed noise term converges to $0$, and the GCN converges to the c-GCN with smoothed input function $\bar f = \Ll_{W,P} f$.

In some cases such as spectral clustering \cite{Chen2019c}, one does not have an input signal over the nodes, but has only access to the structure of the graph. In this case, several heuristics have been used in the literature, but a definitive answer is yet to emerge. For instance, a classical strategy is to use the (normalized) degrees of the graph $ Z=A 1_n/(\alpha_n n)$ as input signal \citep{Bruna2013, Chen2019c} (assuming for simplicity that $\alpha_n$ is known or estimated). In this case, using our proofs (Lemma \ref{lem:chaining-pointwise} in the appendix) and the spectral concentration in \cite{Lei2015}, it is not difficult to show that a discrete GCN will converge to its countinuous version with the degree function $f = d_{W,P}$ as input. We will see in Prop. \ref{prop:degree-NF} in the next section that this leads to desirable stability properties.


\section{Stability of GCNs to model deformations}\label{sec:stability}

Stability to deformations is an essential feature for the generalization properties of deep architectures. \Citet{Mallat2012} studied the stability to small deformations of the wavelet-based scattering transform, which was extended to more generic learned convolutional network, \eg\cite{Bietti2017,Qiu2018}, and tries to establish bounds of the following form for a signal representation~$\Phi(\cdot)$:
\begin{equation}
\label{eq:classical_stability}
\|\Phi(f_\tau) - \Phi(f)\| \lesssim N(\tau) \|f\|,
\end{equation}
where~$f_\tau(x) = f(x - \tau(x))$ is the deformed signal and~$N(\tau)$ quantifies the size of the deformation, typically through norms of its jacobian~$\nabla \tau$, such as~$\|\nabla \tau\|_\infty = \sup_x \norm{\nabla \tau(x)}$. As we have seen in the introduction, it is not clear how to extend the notion of deformation on discrete graphs \cite{Gama2018, Gama2019a}. We show here that it can be done in the continuous world.
We first derive generic stability bounds for discrete random graphs involving a Wasserstein-type metric between the corresponding c-GCNs, then derive bounds of the form \eqref{eq:classical_stability} for c-GCNs by studying various notions of deformations of random graph models. We note that ``spatial'' deformations $x \mapsto x - \tau(x)$ are or course not the only possible choice for these models, and leave other types of perturbations to future work.

\paragraph{From discrete to continuous stability.}
We first exploit the previous convergence result to deport the stability analysis from discrete to continuous GCNs.
Let $G_1$ and $G_2$ be two random graphs with $n$ nodes drawn from models $\Gamma_1$ and $\Gamma_2$, and a GCN $\Phi$. In the invariant case, we can directly apply Theorem \ref{thm:conv} and the triangle inequality to obtain that
$
\norm{\gcninv_{A_1}(Z_1) - \gcninv_{A_2}(Z_2)} \leq \norm{\cgcninv_{W_1,P_1}(f_1) - \cgcninv_{W_2,P_2}(f_2)} + 2R_n
$, and study the robustness of $\cgcninv_{W,P}(f)$ to deformations of the model.
The equivariant case is more complex. A major difficulty, compared for instance to \cite{Levie2019}, is that, since we consider two different samplings $X_1$ and $X_2$, there are no implicit ordering over the nodes of $G_1$ and $G_2$, and one cannot directly compare the output signals of the equivariant GCN \eg in Frobenius norm.
To compare two graph representations, a standard approach in the study of stability (and graph theory in general) has been to define a metric that minimizes over permutations $\sigma$ of the nodes (\eg\cite{Gama2018,Gama2019a}), thus we define $\text{MSE}_\Sigma(Z,Z') \eqdef \min_{\sigma \in \Sigma_n} (n^{-1} \sum_i \|Z_i - Z'_{\sigma(i)}\|^2)^{1/2}$. Theorem~\ref{thm:stab-eq} relates this to a Wasserstein metric between the c-GCNs (proof in Appendix \ref{app:wass}).
\begin{theorem}[Finite-sample stability in the equivariant case]\label{thm:stab-eq}
Adopt the notations of Theorem \ref{thm:conv}. For $r=1,2$, define the distribution $Q_r = \cgcneq_{W_r,P_r}(f_r)_\sharp P_r$. With probability $1-\rho$, we have
\begin{align}
 \mathsmaller{\textup{MSE}_\Sigma\pa{\gcneq_{A_1}(Z_1), \gcneq_{A_2}(Z_2)} \leq \Ww_2(Q_1, Q_2) + R_n + C_1 \pa{n^{-\frac{1}{d_{z}}} + \pa{C_2 +  \sqrt[4]{\log\frac{1}{\rho}} } n^{-\frac{1}{4}}}} \label{eq:stab-eq}
\end{align}
where $C_1$ and $C_2$ are defined in the appendix. When $f_1$ and $f_2$ are piecewise Lipschitz, the last terms can be replaced by $C'_1 (n^{-1/d_{x}} + (C'_2 +  \sqrt[4]{\log(1/\rho)} ) n^{-1/4})$ for some $C'_1, C'_2$.
\end{theorem}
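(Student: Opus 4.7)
The plan is to chain triangle inequalities on both the discrete and continuous sides with Theorem~\ref{thm:conv}, then identify the permutation-minimized MSE as an empirical Wasserstein-2 distance and bound it against $\Ww_2(Q_1,Q_2)$ via classical empirical-Wasserstein rates and McDiarmid-type concentration.

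First I would apply Theorem~\ref{thm:conv} at confidence $\rho/2$ to each random graph and union-bound. Writing $Y_r^n \in \RR^{n \times d_{out}}$ for the node-signal whose $i$-th row is $\cgcneq_{W_r,P_r}(f_r)(x_{r,i})$, one obtains $\textup{MSE}_{X_r}(\gcneq_{A_r}(Z_r), Y_r^n) \leq R_n$ simultaneously for $r=1,2$ with probability at least $1-\rho$. Since permuting by $\sigma\in\Sigma_n$ the rows of $\gcneq_{A_2}(Z_2)$ and $Y_2^n$ together preserves their relative MSE, a triangle inequality in the $\ell^2$-average norm followed by minimization over $\sigma$ yields
\[
\textup{MSE}_\Sigma\pa{\gcneq_{A_1}(Z_1),\gcneq_{A_2}(Z_2)} \leq 2R_n + \textup{MSE}_\Sigma(Y_1^n, Y_2^n),
\]
and the extra $2R_n$ is absorbable into the statement's $R_n$ by rescaling constants. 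Between two empirical measures on $\RR^{d_{out}}$ with uniform weights $1/n$, any optimal transport plan must be supported on a permutation, hence $\textup{MSE}_\Sigma(Y_1^n,Y_2^n) = \Ww_2(\hat Q_1^n,\hat Q_2^n)$ with $\hat Q_r^n \eqdef n^{-1}\sum_i \delta_{\cgcneq_{W_r,P_r}(f_r)(x_{r,i})}$; the $\Ww_2$ triangle inequality then gives $\Ww_2(\hat Q_1^n,\hat Q_2^n) \leq \Ww_2(Q_1,Q_2) + \sum_{r=1,2} \Ww_2(\hat Q_r^n,Q_r)$, reducing the task to bounding the two empirical-to-population distances.

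For those, the c-GCN output is uniformly bounded (by $\|f_r\|_\infty$, the filter-summability condition, and $|\rho(x)|\le|x|$), so $\Ww_2^2(\hat Q_r^n, Q_r)$ is a bounded-differences functional of the i.i.d. latent variables with per-coordinate fluctuation $O(1/n)$. McDiarmid's inequality then yields $\Ww_2^2(\hat Q_r^n, Q_r) \leq \mathbb{E}\,\Ww_2^2(\hat Q_r^n, Q_r) + C\sqrt{\log(1/\rho)/n}$, and taking square roots produces the $C_2 n^{-1/4}\sqrt[4]{\log(1/\rho)}$ contribution. For the expectation, when $f_r$ is piecewise Lipschitz, $\cgcneq_{W_r,P_r}(f_r)$ is itself piecewise Lipschitz on $\mathcal{X}$ (since $\Ll_{W_r,P_r}$ inherits the piecewise-Lipschitz property from $W_r$ and the pointwise nonlinearity preserves it), so the pushforward bound $\Ww_2(F_\sharp\mu,F_\sharp\nu)\leq\|F\|_{\textup{Lip}}\Ww_2(\mu,\nu)$ combined with the standard empirical-Wasserstein rate on a Minkowski-dimension-$d_x$ space gives $\mathbb{E}\,\Ww_2(\hat Q_r^n,Q_r) \lesssim n^{-1/d_x}$. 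For general bounded $f_r$, the $\beta_0$ identity terms in the filters destroy $x$-Lipschitzness; instead one factors $\cgcneq_{W_r,P_r}(f_r)(x) = G_r(x, f_r(x))$ with $G_r$ Lipschitz in both arguments (an induction over layers exploiting that $\Ll_{W_r,P_r}$ smooths any bounded input into a Lipschitz function), then couples $\hat P_r^n$ with $P_r$ via an optimal $f_r$-value matching, inheriting the rate $n^{-1/d_z}$ from $\Ww_2((f_r)_\sharp\hat P_r^n,(f_r)_\sharp P_r)$ on the bounded image of $f_r$ in $\RR^{d_z}$.

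The main obstacle is this last argument: the factorization $\cgcneq(f)(x) = G(x,f(x))$ must be controlled layer-by-layer with explicit Lipschitz constants in each argument, and the transport coupling tuned so that the final rate involves the input dimension $d_z$ rather than the ambient output dimension $d_{out}$, despite the absence of $x$-regularity of $f_r$. The remaining ingredients---Theorem~\ref{thm:conv}, the permutation-to-Wasserstein identification, and McDiarmid concentration---are routine in comparison.
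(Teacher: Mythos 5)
Your skeleton up to the empirical Wasserstein terms is exactly the paper's: apply Theorem \ref{thm:conv} to both graphs, use the Monge--Kantorovich identification of $\textup{MSE}_\Sigma(Y_1^n,Y_2^n)$ with $\Ww_2(\hat Q_1,\hat Q_2)$ for uniformly weighted empirical measures, and then the $\Ww_2$ triangle inequality; your McDiarmid step is essentially a re-derivation of the concentration half of the cited result of Weed and Bach. The genuine gap is in how you bound $\mathbb{E}\,\Ww_2^2(\hat Q_r, Q_r)$. In the piecewise Lipschitz case you invoke the push-forward inequality $\Ww_2(F_\sharp\mu,F_\sharp\nu)\le\|F\|_{\textup{Lip}}\Ww_2(\mu,\nu)$, but under assumption \eqref{eq:assumption_RG} the kernel (and hence $\cgcneq_{W_r,P_r}(f_r)$, cf.\ Lemma \ref{lem:lip_cgcn}) is only \emph{piecewise} Lipschitz; a piecewise Lipschitz map is not Lipschitz, and mass transported across piece boundaries (think of an SBM, where the kernel is piecewise constant) can be displaced by as much as $2\norm{g}_\infty$ at arbitrarily small transport cost in $\Xx$, so the push-forward bound fails and your claimed rate $n^{-1/d_x}$ does not follow. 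Your general-case argument has two further problems: the premise that $\Ll_{W_r,P_r}$ ``smooths any bounded input into a Lipschitz function'' is false for merely piecewise Lipschitz kernels, and even granting a factorization $g(x)=G(x,f(x))$ with $G$ jointly Lipschitz, a coupling that matches samples only by their $f$-values leaves the first argument of $G$ uncontrolled: two points with equal $f$-values can be far apart in $\Xx$, so $\norm{g(x_i)-g(x)}$ is not bounded by the $f$-value discrepancy, and the rate $n^{-1/d_z}$ does not follow from $\Ww_2((f_r)_\sharp\hat P_r,(f_r)_\sharp P_r)$.

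The paper's route avoids both issues and is simpler: since the $y_{r,i}=\cgcneq_{W_r,P_r}(f_r)(x_{r,i})$ are themselves i.i.d.\ draws from $Q_r$, one applies the empirical--Wasserstein concentration bound (Theorem \ref{thm:wass}) directly to $\hat Q_r$ on the metric space given by the \emph{support} of $Q_r$, so only covering numbers of $g\Xx$ are needed, never a Lipschitz transport property of $g$. In the general case boundedness of $g$ (Lemma \ref{lem:bound_cgcn}) gives $N(g\Xx,\varepsilon)\le(\norm{g}_\infty/\varepsilon)^{d_z}$ and hence the $n^{-1/d_z}$ term; in the piecewise Lipschitz case, Lemma \ref{lem:lip_cgcn} shows $g$ is $(C,n_\Xx n_f)$-piecewise Lipschitz, so its image is a union of $n_\Xx n_f$ Lipschitz images of pieces of $\Xx$ and $N(g\Xx,\varepsilon)\le n_\Xx n_f(C/\varepsilon)^{d_x}$, yielding the improved $n^{-1/d_x}$ rate. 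If you want to keep your McDiarmid decomposition, you must still replace your two expectation bounds by this covering-number argument on the support of $Q_r$ (or an equivalent chaining bound); as written, the last step of your proof does not go through.
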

In other words, we express stability in terms of a Wasserstein metric between the push-forwards of the measures $P_r$ by their respective c-GCNs. By definition, the l.h.s.~of~\eqref{eq:stab-eq} is invariant to permutation of the graphs $G_r$. Moreover, for $\phi \in \Sigma_P$ by Prop. \ref{prop:permut-c} we have $\cgcneq_{\phi\cdot W, P}(\phi \cdot f)_\sharp P = \cgcneq_{W, P}(f)_\sharp(\phi_\sharp P) = \cgcneq_{W,P}(f)_\sharp P$, and therefore the r.h.s.~of~\eqref{eq:stab-eq} is also invariant to continuous permutation $\phi$.

We recover the rate $R_n$ from Theorem \ref{thm:conv}, as well as a term in $1/n^{1/4}$ and a term that depends on the dimension. In the relatively sparse case, the term in $1/\sqrt{\alpha_n n}$ in $R_n$ still has the slowest convergence rate. The proof uses classic manipulations in Optimal Transport \cite{Peyre2019}, as well as concentration results of empirical distributions in Wasserstein norm \cite{Weed2017}. In particular, it is known that the latter yields slow convergence rates with the dimension $n^{-1/d}$. While the $Q_r$'s live in $\RR^{d_z}$, when the c-GCNs are Lipschitz we can replace $d_z$ by the Minkowski dimension of $\Xx$, which may be advantageous when $\Xx$ is a low-dimensional manifold.

In the rest of this section, we analyze the stability of c-GCNs to deformation of random graph models, directly through the Wasserstein bound above (or simple Euclidean norm in the invariant case). Finite-sample bounds are then obtained with Theorem \ref{thm:conv} and \ref{thm:stab-eq}.

\paragraph{Stability of continuous GCNs: assumptions.}
Assume  from now on that $\Xx \subset \RR^{d}$ equipped with the Euclidean norm. Given a diffeomorphism~$\tau : \Xx \to \Xx$, we consider spatial deformations of random graph models of the form $(\Id - \tau)$, and aim at obtaining bounds of the form \eqref{eq:classical_stability} for c-GCNs. Given a reference random graph model~$\Gamma = (P, W, f)$, we may consider perturbations to~$P$,~$W$, or~$f$, and thus define $W_\tau \eqdef (\Id- \tau) \cdot W$, $P_\tau \eqdef (\Id - \tau)_\sharp P$ and $f_\tau \eqdef (\Id - \tau) \cdot f$. Of course, after deformation, we still consider that the assumptions on our random graph models \eqref{eq:assumption_RG} are verified. As can be expected, translation-invariant kernels~$W$ such as Gaussian kernels or $\varepsilon$-graph kernels are particularly adapted to such deformations, therefore we will often make the following assumption:
\begin{equation}
  W(x, x') = w(x - x'), \quad C_{\nabla w} \eqdef \sup_{x \in \Xx} \int \norm{\nabla w\pa{\tfrac{x - x'}{2}}} \cdot \norm{x' - x} d P(x') < \infty . \tag{A1} \label{eq:assume_TI}
\end{equation}
We also define $C_W \eqdef \sup_x \int |W(x, x')| d P(x')\leq c_{\max}$. While $C_W$ and~$C_{\nabla w}$ are easily bounded when~$W, \nabla w$ are bounded, they are typically much smaller than such naive bounds when~$W$ and~$\nabla w$ are well localized in space with fast decays, \eg for the Gaussian kernel or a smooth $\epsilon$-graph kernel with compact support (for instance, in the latter case, $C_W$ is proportional to $\varepsilon c_{\max}$ instead of $c_{\max}$).

In the case where $P$ is replaced by $P_\tau$, some of our results will be valid beyond translation-invariant kernels. We will instead assume that $P_\tau$ has a density with respect to $P$, close to one: for all $x$,
\begin{equation}
\mathsmaller{q_\tau(x) \eqdef \frac{d P_\tau}{d P}(x), \quad q_\tau(x), q_\tau(x)^{-1} \leq C_{P_\tau} < \infty, \quad N_{P}(\tau) \eqdef \norm{ q_\tau - 1 }_\infty} . \tag{A2} \label{eq:assume_density}
\end{equation}
When~$(\Id - \tau) \in \Sigma_P$, then we have~$N_P(\tau) = 0$, so that~$N_P(\tau)$ measures how much it deviates from such neutral elements and quantifies the size of deformations.
In particular, when~$P$ is proportional to the Lebesgue measure and~$\|\nabla \tau\|_\infty < 1$, we have $q_\tau(x) = \det(I - \nabla \tau(x))^{-1}$; then, for small enough~$\|\nabla \tau\|_\infty$, we obtain~$N_{P}(\tau) \lesssim d\|\nabla \tau\|_\infty$, recovering the more standard quantity of~\citet{Mallat2012}.
In this case, we also have the bound~$C_{P_\tau} \leq 2^d$ if we assume~$\|\nabla \tau\|_\infty \leq 1/2$.

In the rest of the section, we will assume for simplicity that the considered GCNs $\Phi$ have zero bias at each layer. Unless otherwise written, $\norm{f}$ refers to $L^2(P)$-norm. All the proofs are in Appendix \ref{app:stability}.

\paragraph{Deformation of translation-invariant kernels.}
We first consider applying deformations to the kernel~$W$, which amounts to a perturbation to the edge structure of the graph.
For GCNs, this affects the Laplacian operator used for the filters, and could be seen as a perturbation of the ``graph shift operator'' in the framework of~\citet{Gama2019a}.
The following result shows that in this case the stability of GCN representations, both invariant and equivariant, is controlled by~$\|\nabla \tau\|_\infty$.
\begin{theorem}[Kernel deformation]
\label{thm:deform_w}
Consider a GCN representation~$\Phi$ with no bias and a random graph~$\Gamma = (P, W, f)$. Define $Q = \Phi_{W,P}(f)_\sharp P$ and $Q_\tau = \Phi_{W_\tau,P}(f)_\sharp P$.
Assume~\eqref{eq:assume_TI} and~$\|\nabla \tau\|_\infty \leq 1/2$.
We have
\begin{equation}
\label{eq:deform_w}
\left. \begin{matrix*}[r]
\norm{\cgcninv_{W_\tau,P}(f) - \cgcninv_{W,P}(f)} \\
\Ww_2(Q,Q_\tau)
\end{matrix*}\right\rbrace  \leq C (C_W + C_{\nabla w}) \|f\| \|\nabla \tau\|_\infty ,
\end{equation}
where~$C$ is given in the appendix.
\end{theorem}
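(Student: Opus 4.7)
The plan is to reduce the theorem to a perturbation estimate for the normalized Laplacian operator, propagate it through the c-GCN via a Lipschitz-in-$\mathcal{L}_{W,P}$ argument, and then derive both the invariant bound and the Wasserstein bound from a single $L^2(P)$ estimate on the equivariant output. Both quantities on the left-hand side of~\eqref{eq:deform_w} are controlled by $\norm{\cgcneq_{W,P}(f) - \cgcneq_{W_\tau,P}(f)}_{L^2(P)}$: for the invariant term this follows from Jensen's inequality applied to the pooling integral; for the Wasserstein term, the diagonal coupling $(X, X)$ with $X \sim P$ gives $\Ww_2^2(g_\sharp P, h_\sharp P) \leq \int \norm{g(x) - h(x)}^2 dP(x)$ for any measurable $g, h$.

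The core lemma is $\norm{\mathcal{L}_{W,P} - \mathcal{L}_{W_\tau,P}}_{L^2(P) \to L^2(P)} \lesssim (C_W + C_{\nabla w}) \norm{\nabla \tau}_\infty$, with an implicit constant depending only on $c_{\min}$. Writing $W_\tau(x,y) - W(x,y) = w((x-y) - s) - w(x-y)$ with $s \eqdef \tau(x) - \tau(y)$, the fundamental theorem of calculus gives
\begin{equation*}
|W_\tau(x,y) - W(x,y)| \leq \norm{\nabla \tau}_\infty \, \norm{x-y} \int_0^1 \norm{\nabla w((x-y) - ts)} dt.
\end{equation*}
Since $\norm{\nabla \tau}_\infty \leq 1/2$ forces $\norm{s} \leq \norm{x-y}/2$, the segment $\{(x-y) - ts : t \in [0,1]\}$ stays in a ball around $(x-y)/2$ of comparable radius; a change of variables on $y$ then converts the Schur-type estimate $\sup_x \int |W_\tau - W|(x,y) dP(y)$ into the constant $C_{\nabla w} \norm{\nabla \tau}_\infty$ from~\eqref{eq:assume_TI}. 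The same scheme applied to $W$ itself rather than $\nabla w$ controls the degree perturbation $\norm{d_{W,P} - d_{W_\tau,P}}_\infty$ and brings in the $C_W$ term. Substituting into the splitting
\begin{equation*}
\mathcal{L}_{W_\tau,P} - \mathcal{L}_{W,P} = (\mathcal{L}_{W_\tau,P} - \tilde{\mathcal{L}}) + (\tilde{\mathcal{L}} - \mathcal{L}_{W,P}), \quad \tilde{\mathcal{L}} f \eqdef \int \frac{W_\tau(\cdot, x)}{\sqrt{d_{W,P}(\cdot) d_{W,P}(x)}} f(x) dP(x),
\end{equation*}
and using $d_{W,P}, d_{W_\tau,P} \geq c_{\min}$ in the denominator-perturbation piece yields the claimed operator bound.

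Propagation through the c-GCN is then a Lipschitz-in-operator-norm argument, the continuous counterpart of the lemma underlying Theorem~\ref{thm:conv}: applying $|\rho(x) - \rho(y)| \leq |x-y|$ at each layer, bounding $\norm{h^{(\ell)}_{ij}(\mathcal{L}_{W,P}) - h^{(\ell)}_{ij}(\mathcal{L}_{W_\tau,P})}$ by a telescoping sum of $\norm{\mathcal{L}_{W,P} - \mathcal{L}_{W_\tau,P}}$, and invoking the convergence of $\sum_k k |\beta_k| (2 c_{\max}/c_{\min})^k$, one obtains $\norm{\cgcneq_{W,P}(f) - \cgcneq_{W_\tau,P}(f)}_{L^2(P)} \leq C \norm{f} \norm{\mathcal{L}_{W,P} - \mathcal{L}_{W_\tau,P}}$ with $C$ polynomial in $M$ and the filter weights. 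Combining with the operator estimate gives~\eqref{eq:deform_w}.

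The main obstacle, I expect, is the sharpness of the kernel perturbation step: a crude bound using $\norm{\nabla w}_\infty$ would defeat the very purpose of assumption~\eqref{eq:assume_TI}, which allows $C_{\nabla w}$ to be of the order of the kernel bandwidth rather than a global Lipschitz constant. Converting the segment parametrization into an evaluation of $\nabla w$ near the midpoint $(x-y)/2$ with a controllable Jacobian is where the smallness hypothesis $\norm{\nabla \tau}_\infty \leq 1/2$ enters essentially. Everything else is a relatively mechanical chain of triangle inequalities, Schur tests, and the coupling argument for Wasserstein.
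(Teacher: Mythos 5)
Your proposal is correct and follows essentially the same route as the paper: both Wasserstein and invariant bounds are reduced to the $L^2(P)$ distance between equivariant outputs (diagonal coupling and Jensen), the Laplacian perturbation is controlled by the fundamental theorem of calculus on $w$, Schur's test, and a degree perturbation bounded via $c_{\min}$, and the result is propagated through the layers by the same Lipschitz-in-operator-norm recursion. The only minor deviations are cosmetic: the paper uses a three-term splitting $D_{W_\tau}^{-1/2}A_\tau(D_{W_\tau}^{-1/2}-D^{-1/2}) + D_{W_\tau}^{-1/2}(A_\tau-A)D^{-1/2} + (D_{W_\tau}^{-1/2}-D^{-1/2})AD^{-1/2}$ rather than your two-term one, and it handles the midpoint step not by a change of variables but by invoking the decay of $\norm{\nabla w(x)}$ in $\norm{x}$ to bound the gradient along the segment by $\norm{\nabla w((x-x')/2)}$, which is exactly the quantity appearing in $C_{\nabla w}$.
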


\paragraph{Deformation of the distribution.}
Let us now consider perturbations of~$P$ to $P_\tau$, which corresponds to a change in the node distribution. In practice, this may correspond to several, fairly different, ``practical'' situations. We describe two different frameworks below.

In shape analysis, $P$ may be supported on a manifold, and $P_\tau$ can then represent a deformation of this manifold, \eg a character that rigidly moves a body part. In this case in particular, we can expect $\norm{\tau}_\infty$ to be large, but $\norm{\nabla \tau}_\infty$ to be small (\ie large translation but small deformation). Moreover, if the kernel is translation-invariant, there will be little change in the structure of the generated graph. If additionally the input signal of the c-GCN is approximately deformed along with~$P$, then one can expect the outputs to be stable, which we prove in the following theorem.
\begin{theorem}[Distribution deformation, translation-invariant case]
\label{thm:deform_p_TI}
Consider a GCN representation~$\Phi$ with no bias and a random graph~$\Gamma = (P, W, f)$, along with a function~$f'$. Define $Q = \Phi_{W,P}(f)_\sharp P$ and $Q_\tau = \Phi_{W,P_\tau}(f')_\sharp P_\tau$.
Assume~\eqref{eq:assume_TI} and~$\|\nabla \tau\|_\infty \leq 1/2$. We have
\begin{equation}
\left. \begin{matrix*}[r]
\norm{\cgcninv_{W,P}(f) - \cgcninv_{W,P_\tau}(f')} \\
\Ww_2(Q,Q_\tau)
\end{matrix*}\right\rbrace  \leq C (C_W + C_{\nabla w}) \|f\| \|\nabla \tau\|_\infty + C' \norm{f'_\tau - f},
\end{equation}
where~$C,C'$ are given in the appendix.
\end{theorem}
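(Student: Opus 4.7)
The strategy is to reduce the distribution deformation to the already-established kernel deformation result (Theorem~\ref{thm:deform_w}) plus a simple input-Lipschitz bound, using the continuous isomorphism property of Proposition~\ref{prop:permut-c}. Let $\phi \eqdef \Id - \tau$, which is invertible on $\Xx$ since $\|\nabla\tau\|_\infty \leq 1/2$. By construction, $P_\tau = \phi_\sharp P$ and $(\phi^{-1})_\sharp P_\tau = P$. Applying Proposition~\ref{prop:permut-c} to the model $(P_\tau, W, f')$ with the map $\phi$, I obtain
\begin{equation*}
\phi \cdot \cgcneq_{W, P_\tau}(f') = \cgcneq_{\phi\cdot W,\, P}(\phi \cdot f') = \cgcneq_{W_\tau,\, P}(f'_\tau),
\end{equation*}
since $\phi\cdot W = W_\tau$ by definition. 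Composing with $\phi^{-1}$ and pushing $P_\tau$ forward through $\phi^{-1}$ shows that $Q_\tau = \cgcneq_{W_\tau, P}(f'_\tau)_\sharp P$, and integrating against $P_\tau = \phi_\sharp P$ gives $\cgcninv_{W,P_\tau}(f') = \cgcninv_{W_\tau, P}(f'_\tau)$.

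Once rewritten on the common measure $P$, I decompose the error via the triangle inequality by introducing the intermediate object $\cgcneq_{W_\tau,P}(f)$ (and its pooled version):
\begin{equation*}
\cgcneq_{W,P}(f) - \cgcneq_{W,P_\tau}(f') = \bigl[\cgcneq_{W,P}(f) - \cgcneq_{W_\tau,P}(f)\bigr] + \bigl[\cgcneq_{W_\tau,P}(f) - \cgcneq_{W_\tau,P}(f'_\tau)\bigr].
\end{equation*}
The first bracket is pure kernel deformation at fixed input $f$ and fixed distribution $P$, so Theorem~\ref{thm:deform_w} yields a bound $C(C_W + C_{\nabla w})\|f\|\,\|\nabla\tau\|_\infty$ both for the invariant case (Euclidean norm) and for the equivariant one ($\Ww_2$, using the trivial coupling $(x,x)$ under $P$). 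The second bracket holds the operator $\Ll_{W_\tau,P}$ fixed and only changes the input function, so by the input-Lipschitz property of c-GCNs (Lemma~\ref{lem:lip_gcn} of the appendix, applied with the deformed kernel, which still satisfies the standing hypotheses \eqref{eq:assumption_RG}), it is bounded by $C'\|f - f'_\tau\|_{L^2(P)}$, again both pointwise and for the pooled version.

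For the Wasserstein statement I use the standard fact that $\Ww_2(u_\sharp P, v_\sharp P) \leq \|u - v\|_{L^2(P)}$ via the identity coupling, so the same two-step decomposition, together with the triangle inequality for $\Ww_2$, transfers directly to $\Ww_2(Q, Q_\tau)$. The main technical point to verify is that the constants $C, C'$ in Theorem~\ref{thm:deform_w} and in Lemma~\ref{lem:lip_gcn} depend only on quantities stable under the deformation (namely $c_{\min}, c_{\max}, \Clip$, the filter norms $\|B_k^{(\ell)}\|$, and the constants $C_W, C_{\nabla w}$); this is ensured by our running assumption that the deformed model still satisfies \eqref{eq:assumption_RG} and \eqref{eq:assume_TI}. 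The most delicate part is therefore not the algebra but checking that $W_\tau$ does not inflate these constants (in particular $c_{\min}$ for the normalized Laplacian): this is where the bound $\|\nabla\tau\|_\infty \leq 1/2$ is used, to keep $\phi$ bi-Lipschitz with controlled Jacobian and to prevent the degree function of $W_\tau$ from degenerating.
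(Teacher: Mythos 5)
Your proof is correct and follows essentially the same route as the paper's: the key step is identical (use Proposition~\ref{prop:permut-c} to rewrite $\cgcneq_{W,P_\tau}(f')$ as $\cgcneq_{W_\tau,P}(f'_\tau)$ on the common measure $P$, then bound the Euclidean/Wasserstein distances by the $L^2(P)$ norm via the identity coupling, and split the remaining error into a kernel-change term controlled by $\norm{\Ll-\Ll_{W_\tau}}$ and a signal-change term $\norm{f'_\tau-f}$), the only difference being that you perform this split at the network-output level via Theorem~\ref{thm:deform_w} plus an input-Lipschitz bound, whereas the paper invokes Lemma~\ref{lem:changeW_stab}, which packages both terms in one per-layer recursion. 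One small citation fix: the continuous input-Lipschitz property you need is the $C'$ term of Lemma~\ref{lem:changeW_stab} (with unchanged Laplacian), not Lemma~\ref{lem:lip_gcn}, which is its discrete (Frobenius-norm) analogue; the fact and the constant $C'=\norm{\theta}\prod_\ell H_2^{(\ell)}$ are nonetheless exactly as you use them, relying as you note on the deformed model still satisfying \eqref{eq:assumption_RG} so that $\norm{\Ll_{W_\tau,P}}_{L^2(P)}\leq 1$.
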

When $f=f'$ are both constant, or when $f' = (\Id-\tau)^{-1} \cdot f$, that is,~$f'$ is the mapping of the original signal~$f$ on the deformed structure, then we have $\norm{f'_\tau - f}=0$. As mentioned before, in the absence of input signal, a standard choice is to take the degree functions as inputs \cite{Bruna2013, Chen2019c}. The next result shows that this choice also leads to the desired stability.
\begin{proposition}\label{prop:degree-NF}
Assume~\eqref{eq:assume_TI} and~$\|\nabla \tau\|_\infty \leq 1/2$. If $f=d_{W,P}$ and $f' = d_{W,P_\tau}$, then we have $\norm{f'_\tau - f} \leq C_{\nabla w} \norm{\nabla \tau}_\infty$. 
\end{proposition}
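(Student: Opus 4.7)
The plan is to evaluate the difference $f'_\tau - f$ pointwise and exploit the translation-invariant form of $W$ to turn the perturbation entirely into a displacement of the argument of $w$. First, I would use a change of variables in the definition of $d_{W,P_\tau}$: writing $\phi = \mathrm{Id} - \tau$, the pushforward gives
\begin{equation*}
f'(y) = \int w(y - z)\,dP_\tau(z) = \int w(y - \phi(x'))\,dP(x'),
\end{equation*}
so $f'_\tau(x) = f'(\phi(x)) = \int w(\phi(x) - \phi(x'))\,dP(x')$. Subtracting $f(x) = \int w(x-x')\,dP(x')$ gives
\begin{equation*}
(f'_\tau - f)(x) = \int \bigl[w\bigl((x-x') - (\tau(x)-\tau(x'))\bigr) - w(x-x')\bigr]\,dP(x').
\end{equation*}
Set $u = x - x'$ and $v = \tau(x) - \tau(x')$; by the mean-value bound $\|v\| \leq \|\nabla \tau\|_\infty \|u\|$.

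Next, I would apply the fundamental theorem of calculus to the inner difference, $w(u - v) - w(u) = -\int_0^1 \nabla w(u - sv)\cdot v\,ds$, giving
\begin{equation*}
|w(u-v) - w(u)| \;\leq\; \|v\|\int_0^1 \|\nabla w(u - sv)\|\,ds \;\leq\; \|\nabla \tau\|_\infty\,\|x-x'\|\sup_{s\in[0,1]}\|\nabla w(u-sv)\|.
\end{equation*}
The hypothesis $\|\nabla\tau\|_\infty \leq 1/2$ ensures $\|v\| \leq \|u\|/2$, so the point $u - sv$ stays in the ball $B(u,\|u\|/2)$; in particular $\|u-sv\| \geq \|u\|/2 = \|u/2\|$. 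Using that for well-localized translation-invariant kernels $\|\nabla w\|$ decays radially away from the scale $\|u/2\|$ (e.g.\ Gaussian or smooth $\varepsilon$-graph kernels with compact support), one bounds $\sup_{s\in[0,1]}\|\nabla w(u-sv)\| \leq \|\nabla w((x-x')/2)\|$.

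Combining the pieces, I get the pointwise estimate
\begin{equation*}
|(f'_\tau - f)(x)| \;\leq\; \|\nabla \tau\|_\infty \int \bigl\|\nabla w\bigl(\tfrac{x-x'}{2}\bigr)\bigr\|\cdot\|x-x'\|\,dP(x') \;\leq\; C_{\nabla w}\,\|\nabla \tau\|_\infty,
\end{equation*}
directly invoking the definition of $C_{\nabla w}$ from \eqref{eq:assume_TI}. Since this is an $L^\infty$ bound, it dominates the $L^2(P)$-norm, yielding the claim. The main technical subtlety, and the only step that is not purely mechanical, is the passage from $\sup_s\|\nabla w(u-sv)\|$ to $\|\nabla w(u/2)\|$; this is where the constraint $\|\nabla \tau\|_\infty \leq 1/2$ plays its essential role, by forcing the MVT segment to stay away from the singularity near the origin of $w$ and within a region where the midpoint $u/2$ provides the dominant value of $\|\nabla w\|$ for the kernels targeted by the analysis.
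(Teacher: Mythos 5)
Your proof is correct and follows essentially the same route as the paper: your change of variables is exactly the identity $f'_\tau = (\Id-\tau)\cdot d_{W,P_\tau} = d_{W_\tau,P}$ that the paper obtains from Proposition~\ref{prop:permut-c}, and your fundamental-theorem-of-calculus bound with $\norm{u - sv} \geq \norm{u}/2$ and the radial decay of $\norm{\nabla w}$ reproduces the estimate $\abs{d_{W_\tau,P}(x) - d_{W,P}(x)} \leq C_{\nabla w}\norm{\nabla \tau}_\infty$ established in the proof of Theorem~\ref{thm:deform_w}. The only difference is presentational: you carry out the kernel computation inline and note explicitly that the $L^\infty$ bound dominates the $L^2(P)$ norm, whereas the paper simply cites the earlier computation.
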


Let us now take a look at the case where $W$ is not translation-invariant. We will then assume that $P_ \tau$ has a density with respect to $P$, and in particular that it has the same support: one may for instance imagine a social network with a slightly changing distribution of user preferences, SBMs with changing community sizes, geometric random graphs \cite{Penrose2008}, or graphons \cite{Lovasz2012}. The analysis here being slightly more complex, we focus on invariant c-GCNs.
\begin{theorem}[Distribution deformation, non-translation-invariant case]
\label{thm:deform_p}
Consider a GCN representation~$\Phi$ with no bias and a random graph~$\Gamma = (P, W, f)$.
Assume~\eqref{eq:assume_density}. We have
\begin{equation}
\label{eq:deform_p}
\norm{\cgcninv_{W,P}(f) - \cgcninv_{W,P_\tau}(f)} \leq \pa{C C_{P_\tau}^3 C_W + C'} \|f\| N_P(\tau),
\end{equation}
where~$C, C'$ are given in the appendix.
\end{theorem}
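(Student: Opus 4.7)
Since the considered GCN has zero bias, $\cgcninv_{W,P}(f) = \theta^\top\!\!\int f^{(M)}_{W,P}\,dP$, and similarly for $P_\tau$. Using $dP_\tau = q_\tau\,dP$ and adding/subtracting,
\begin{equation*}
\cgcninv_{W,P}(f) - \cgcninv_{W,P_\tau}(f) \;=\; \theta^\top\!\!\int (f^{(M)}_{W,P} - f^{(M)}_{W,P_\tau})\,dP \;+\; \theta^\top\!\!\int f^{(M)}_{W,P_\tau}(1-q_\tau)\,dP.
\end{equation*}
The second summand is bounded by $\|\theta\|\,\|f^{(M)}_{W,P_\tau}\|_{L^1(P)}\,N_P(\tau)$; combining $\|\cdot\|_{L^1(P)}\leq\|\cdot\|_{L^2(P)}\leq\sqrt{C_{P_\tau}}\|\cdot\|_{L^2(P_\tau)}$ with the standard layer-by-layer bound $\|f^{(\ell)}_{W,P_\tau}\|_{L^2(P_\tau)}\lesssim\|f\|_{L^2(P_\tau)}$ (obtained from $|\rho(x)|\leq|x|$ and $\|h^{(\ell)}_{ij}(\Ll_{W,P_\tau})\|_{L^2(P_\tau)}\leq\sum_k|\beta^{(\ell)}_{ijk}|$, using self-adjointness of $\Ll_{W,P_\tau}$ in $L^2(P_\tau)$), this yields the $C'\|f\|N_P(\tau)$ term with $C'$ independent of $C_W$. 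It remains to control the first summand, which by Cauchy--Schwarz reduces to $\|\theta\|\,\|f^{(M)}_{W,P} - f^{(M)}_{W,P_\tau}\|_{L^2(P)}$.

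\textbf{Key operator lemma.}
The central ingredient is the bound $\|\Ll_{W,P}-\Ll_{W,P_\tau}\|_{L^2(P)\to L^2(P)} \lesssim C_{P_\tau}^{3} C_W N_P(\tau)$, with $c_{\min},c_{\max}$-dependent factors absorbed into the constant. Writing
\[
(\Ll_{W,P} - \Ll_{W,P_\tau})g(x) = \int W(x,y)\,\Delta(x,y)\,g(y)\,dP(y),\quad \Delta(x,y) \eqdef \frac{1}{\sqrt{d_{W,P}(x)d_{W,P}(y)}} - \frac{q_\tau(y)}{\sqrt{d_{W,P_\tau}(x)d_{W,P_\tau}(y)}},
\]
I split $\Delta$ into a \emph{degree-difference} part (treating $q_\tau\equiv 1$) and a residual \emph{measure-change} part proportional to $1-q_\tau$. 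The key pointwise estimate is $|d_{W,P}(x)-d_{W,P_\tau}(x)|=|\!\int\!W(x,y)(1-q_\tau(y))dP(y)|\leq C_W N_P(\tau)$; combined with the lower bound $d_{W,P_\tau}\geq c_{\min}/C_{P_\tau}$ from~\eqref{eq:assume_density} and the mean-value inequality applied to $1/\sqrt{ab}$, this controls $\Delta(x,y)$ uniformly. A Schur-type test using $\sup_x\int W(x,y)dP(y)\leq C_W$ then upgrades the pointwise bound on $W\cdot\Delta$ into the claimed operator-norm bound.

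\textbf{Propagation through the layers.}
Let $\Delta_\ell \eqdef \max_j\|f^{(\ell)}_{W,P,j} - f^{(\ell)}_{W,P_\tau,j}\|_{L^2(P)}$. The $1$-Lipschitz property of $\rho$ yields, layer by layer,
\[
\Delta_{\ell+1} \;\leq\; \sum_i \|h^{(\ell)}_{ij}(\Ll_{W,P})\|_{L^2(P)}\Delta_\ell \;+\; \sum_i \|h^{(\ell)}_{ij}(\Ll_{W,P})-h^{(\ell)}_{ij}(\Ll_{W,P_\tau})\|_{L^2(P)}\,\|f^{(\ell)}_{W,P_\tau,i}\|_{L^2(P)}.
\]
The analytic-filter difference is handled by expanding $h=\sum_k\beta_k\lambda^k$ and telescoping $\Ll_{W,P}^k - \Ll_{W,P_\tau}^k = \sum_{j=0}^{k-1}\Ll_{W,P}^j(\Ll_{W,P}-\Ll_{W,P_\tau})\Ll_{W,P_\tau}^{k-1-j}$, using $\|\Ll_{W,P}\|_{L^2(P)}\leq 1$ (self-adjoint with spectrum in $[-1,1]$) and the uniform bound $\|\Ll_{W,P_\tau}^{j}\|_{L^2(P)\to L^2(P)}\leq C_{P_\tau}$ obtained by passing through $L^2(P_\tau)$. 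This produces $\|h(\Ll_{W,P})-h(\Ll_{W,P_\tau})\|_{L^2(P)}\leq C_{P_\tau}\bigl(\sum_k k|\beta_k|\bigr)\|\Ll_{W,P}-\Ll_{W,P_\tau}\|_{L^2(P)}$, whose convergence is exactly the paper's standing assumption on the filters. Induction over $\ell$, combined with $\|f^{(\ell)}_{W,P_\tau,i}\|_{L^2(P)}\lesssim\sqrt{C_{P_\tau}}\|f\|$, gives $\Delta_M\lesssim C_{P_\tau}^{3}C_W N_P(\tau)\|f\|$, and plugging back into the first decomposition finishes the proof.

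\textbf{Main obstacle.}
The most delicate step is the operator lemma, which must simultaneously accommodate a kernel $W$ not bounded below (so no spectral-gap argument is available) and the fact that $\Ll_{W,P_\tau}$ is self-adjoint in $L^2(P_\tau)$ but \emph{not} in $L^2(P)$. The powers of $C_{P_\tau}$ have three distinct origins that must be tracked carefully: two come from the lower bound $d_{W,P_\tau}\geq c_{\min}/C_{P_\tau}$ appearing twice in $1/\sqrt{d_{W,P_\tau}(x)d_{W,P_\tau}(y)}$, and one from the $L^2(P_\tau)\!\leftrightarrow\!L^2(P)$ conversion needed to exploit the self-adjoint spectrum of $\Ll_{W,P_\tau}$ in the telescoping step. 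Summed, they yield the stated $C_{P_\tau}^3$. The $C_W$ factor appears naturally, both in the pointwise bound on $|d_{W,P}-d_{W,P_\tau}|$ and in the Schur test, with the additional powers of $C_W$ absorbed into $C$ via $C_W\leq c_{\max}$.
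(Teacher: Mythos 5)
Your proof is correct and follows essentially the same route as the paper: split the pooled difference into a representation-difference term plus a measure-change term, bound $\|\Ll_{W,P}-\Ll_{W,P_\tau}\|_{L^2(P)}$ through the degree perturbation $|d_{W,P}-d_{W,P_\tau}|\le C_W N_P(\tau)$, the density factor $q_\tau$, and Schur's test, and propagate through the layers by telescoping powers of the Laplacians using $\|\Ll_{W,P_\tau}^k\|_{L^2(P)\to L^2(P)}\le C_{P_\tau}$ via the $L^2(P)\leftrightarrow L^2(P_\tau)$ equivalence --- the paper performs the operator bound by factoring $\Ll_{W,P_\tau}=D_{P_\tau}^{-1/2}A D_{P_\tau}^{-1/2}J_\tau$ and bounding three operator products, which is equivalent to your pointwise bound on $W\cdot\Delta$. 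The only caveat is constant bookkeeping: because you lower-bound $d_{W,P_\tau}\ge c_{\min}/C_{P_\tau}$ from \eqref{eq:assume_density} instead of invoking the paper's standing convention that the deformed model still satisfies \eqref{eq:assumption_RG}, your tally of $C_{P_\tau}$ powers (an operator lemma already claimed at $C_{P_\tau}^3$, plus one power from the telescoping and a further factor from the signal-norm conversion) strictly exceeds the stated $C_{P_\tau}^3$; this only weakens the explicit constant, and using the standing assumption (which makes the operator-difference bound cost a single power of $C_{P_\tau}$) recovers the theorem exactly as stated.
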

As mentioned above, in the case where $P$ is the Lebesgue measure, \eg for graphons \cite{Lovasz2012}, then we recover the quantity $N_P(\tau) \lesssim d\norm{\nabla \tau}_\infty$.

\paragraph{Deformations of the signal.}
Finally, we consider deformations of the signal on the graph and show a bound similar to the ones in the Euclidean case~\eqref{eq:classical_stability}. As can be seen in the proofs, this case is in fact a combination of the previous results~\eqref{eq:deform_w} and~\eqref{eq:deform_p}; hence we must assume both \eqref{eq:assume_TI} and \eqref{eq:assume_density} and obtain a dependence on both $\norm{\nabla \tau}_\infty$ and $N_P(\tau)$. Once again we focus on invariant c-GCNs with pooling, similar to classical scattering transform \cite{Mallat2012}.
\begin{proposition}[Signal deformation]
\label{prop:deform_f}
Consider a GCN representation~$\Phi$ with no bias and a random graph~$\Gamma = (P, W, f)$.
Assume~\eqref{eq:assume_TI}, \eqref{eq:assume_density}, and~$\|\nabla \tau\|_\infty \leq 1/2$.
We have
\begin{equation}
\norm{\cgcninv_{W,P}(f) - \cgcninv_{W,P}(f_\tau)} \leq (CC_{P_\tau}^{1/2}(C_W + C_{\nabla w})\norm{\nabla \tau}_\infty + \pa{CC_{P_\tau}^3 C_W + C'} N_P(\tau)) \norm{f} ,
\end{equation}
where~$C,C'$ are given in the appendix.
\end{proposition}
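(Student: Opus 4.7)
The central idea is to use continuous equivariance (Proposition~\ref{prop:permut-c}) to convert the signal deformation $f \mapsto f_\tau$ into a combined kernel-and-distribution deformation, then bound the result by a triangle inequality that invokes Theorems~\ref{thm:deform_w} and~\ref{thm:deform_p}, as suggested by the remark ``this case is in fact a combination of the previous results.''

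\textbf{Step 1 (equivariance rewrite).} Since $\|\nabla\tau\|_\infty \le 1/2$, the map $\phi := \Id-\tau$ is a diffeomorphism of $\Xx$; set $\psi := \phi^{-1}$ and $\tilde\tau := \Id - \psi$, so that $\psi = \Id - \tilde\tau$. Apply Proposition~\ref{prop:permut-c} to the model $(P, W, f_\tau)$ with the invertible map $\psi$. Using $\psi \cdot f_\tau = (f\circ\phi)\circ\psi = f$, $\psi \cdot W = W(\psi(\cdot),\psi(\cdot)) = W_{\tilde\tau}$, and $(\psi^{-1})_\sharp P = \phi_\sharp P = P_\tau$, one obtains
\[
\cgcninv_{W, P}(f_\tau) \;=\; \cgcninv_{W_{\tilde\tau}, P_\tau}(f).
\]
A Neumann-series expansion of $\nabla\psi = (I - \nabla\tau \circ \psi)^{-1}$ gives $\|\nabla\tilde\tau\|_\infty \le \|\nabla\tau\|_\infty/(1-\|\nabla\tau\|_\infty) \le 2\|\nabla\tau\|_\infty$, so $\tilde\tau$ is itself a small deformation to which Theorem~\ref{thm:deform_w} applies; in particular the translation-invariance hypothesis~\eqref{eq:assume_TI} on $W$ is preserved, since Theorem~\ref{thm:deform_w} acts on the reference kernel $W$ rather than on $W_{\tilde\tau}$.

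\textbf{Step 2 (triangle inequality and bounds).} Insert the intermediate term $\cgcninv_{W, P_\tau}(f)$ and split
\[
\|\cgcninv_{W,P}(f) - \cgcninv_{W, P}(f_\tau)\|
\;\le\; \underbrace{\|\cgcninv_{W,P}(f) - \cgcninv_{W, P_\tau}(f)\|}_{\text{(I)}} + \underbrace{\|\cgcninv_{W, P_\tau}(f) - \cgcninv_{W_{\tilde\tau}, P_\tau}(f)\|}_{\text{(II)}}.
\]
Term (I) is a pure distribution perturbation $P \to P_\tau$ with kernel $W$, and Theorem~\ref{thm:deform_p} gives the contribution $(C\, C_{P_\tau}^3 C_W + C')\,\|f\|\, N_P(\tau)$. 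Term (II) is a pure kernel perturbation $W \to W_{\tilde\tau}$ with base measure $P_\tau$; applying Theorem~\ref{thm:deform_w} to the model $(P_\tau, W, f)$ with deformation $\tilde\tau$ yields a bound proportional to $(C_{W,P_\tau} + C_{\nabla w, P_\tau})\,\|f\|_{L^2(P_\tau)}\,\|\nabla\tilde\tau\|_\infty$. Converting norms via $q_\tau \le C_{P_\tau}$ gives $\|f\|_{L^2(P_\tau)} \le C_{P_\tau}^{1/2}\|f\|$ and $\|\nabla\tilde\tau\|_\infty \le 2\|\nabla\tau\|_\infty$, producing the $C\, C_{P_\tau}^{1/2}(C_W+C_{\nabla w})\,\|f\|\,\|\nabla\tau\|_\infty$ contribution and completing the bound.

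\textbf{Main obstacle.} The delicate point is the constant bookkeeping in (II): a naive application of Theorem~\ref{thm:deform_w} with base measure $P_\tau$ replaces $C_W, C_{\nabla w}$ by $C_{W,P_\tau}, C_{\nabla w, P_\tau}$, which in general pick up an extra factor of $C_{P_\tau}$ from the change of measure, and would yield a $C_{P_\tau}^{3/2}$ scaling on the $\|\nabla\tau\|_\infty$ term rather than the stated $C_{P_\tau}^{1/2}$. Reconciling this requires re-examining the proof of Theorem~\ref{thm:deform_w} to keep the kernel-integral constants anchored to $P$ (exploiting translation invariance of $W$ and the fact that the relevant Schur-type bounds only need $C_W, C_{\nabla w}$ uniformly in $x$, independently of which measure is used on the ``other'' side), while only the $L^2$-norm of $f$ picks up the $C_{P_\tau}^{1/2}$. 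Once this tighter dependence is extracted, summing the contributions of (I) and (II) gives exactly the stated inequality.
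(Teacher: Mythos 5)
Your high-level plan (equivariance plus a triangle inequality through $\cgcninv_{W,P_\tau}(f)$, combining the kernel- and distribution-deformation results) is the same as the paper's, and your treatment of term (I) via Theorem~\ref{thm:deform_p} is exactly what is done there. The gap is in term (II). After rewriting $\cgcninv_{W,P}(f_\tau) = \cgcninv_{W_{\tilde\tau},P_\tau}(f)$ you propose to bound the kernel perturbation $W \to W_{\tilde\tau}$ \emph{under the base measure $P_\tau$ and with the inverse deformation $\tilde\tau$}, and this is where the argument does not go through as stated. First, Theorem~\ref{thm:deform_w} requires $\norm{\nabla\tilde\tau}_\infty \leq 1/2$ (its proof uses the reverse triangle inequality to get $\norm{x-x'+t(\tilde\tau(x')-\tilde\tau(x))} \geq \norm{x-x'}/2$), but under the hypothesis $\norm{\nabla\tau}_\infty \leq 1/2$ you only control $\norm{\nabla\tilde\tau}_\infty \leq \norm{\nabla\tau}_\infty/(1-\norm{\nabla\tau}_\infty) \leq 1$, so the theorem is not applicable to $\tilde\tau$ without an extra restriction on $\tau$. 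Second, the fix you sketch for the constants does not hold up: Schur's test (Lemma~\ref{lem:schur}) for an operator on $L^2(P_\tau)$ needs \emph{both} the row and column integrals taken against $P_\tau$, and the Laplacian normalization changes from $d_{W,P}$ to $d_{W,P_\tau}$ as well, so the kernel-integral constants genuinely inflate by factors of $C_{P_\tau}$ (giving roughly $C_{P_\tau}^{3/2}$ or worse on the $\norm{\nabla\tau}_\infty$ term), not the stated $C_{P_\tau}^{1/2}$ — exactly the obstacle you flag, but left unresolved.

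The paper avoids this entirely by choosing the other representation of the same quantity: by Proposition~\ref{prop:permut-c}, your term (II) equals $\norm{\cgcninv_{W_\tau,P}(f_\tau) - \cgcninv_{W,P}(f_\tau)}$, i.e.\ a kernel perturbation $W \to W_\tau$ taken \emph{under the original measure $P$, with the original deformation $\tau$, and with the common input $f_\tau$}. Lemma~\ref{lem:changeW_stab} with $f=f'=f_\tau$ together with the bound on $\norm{\Ll_{W,P}-\Ll_{W_\tau,P}}$ from the proof of Theorem~\ref{thm:deform_w} then gives $C(C_W+C_{\nabla w})\norm{\nabla\tau}_\infty \norm{f_\tau}_{L^2(P)}$, and the only place the change of measure enters is through $\norm{f_\tau}_{L^2(P)}^2 = \int \norm{f}^2 dP_\tau \leq C_{P_\tau}\norm{f}_{L^2(P)}^2$, which produces exactly the stated $C_{P_\tau}^{1/2}$ factor. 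So your argument is repairable with one additional use of equivariance (moving the deformation onto the kernel rather than onto the measure), but as written it both requires an unverified condition on $\tilde\tau$ and yields a strictly weaker constant than the one claimed in the proposition.
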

When~$P$ is proportional to the Lebesgue measure, since~$N_P(\tau)$ is controlled by~$\|\nabla \tau\|_\infty$, the GCN is invariant to translations and stable to deformations, similar to Euclidean domains~\citep{Mallat2012}.
We note that studies of stability are often balanced by discussions on how the representation preserves signal (\eg\cite{Mallat2012,Bietti2017,Gama2019}).
In our context, the empirical success of GCNs suggests that these representations maintain good discrimination and approximation properties, though a theoretical analysis of such properties for GCNs is missing and provides an important direction for future work.


\section{Numerical experiments}

\begin{figure}
\centering
\includegraphics[height=3cm]{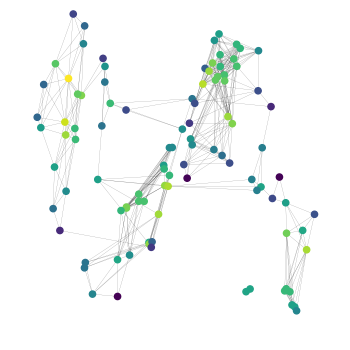}
\includegraphics[height=3cm]{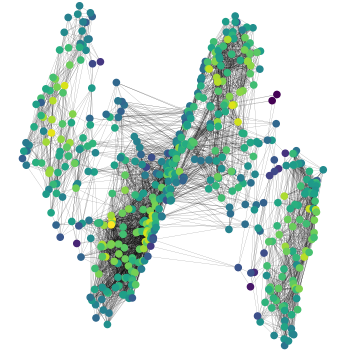}
\includegraphics[height=3cm]{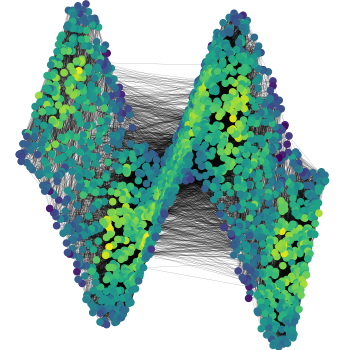}
\includegraphics[height=3cm]{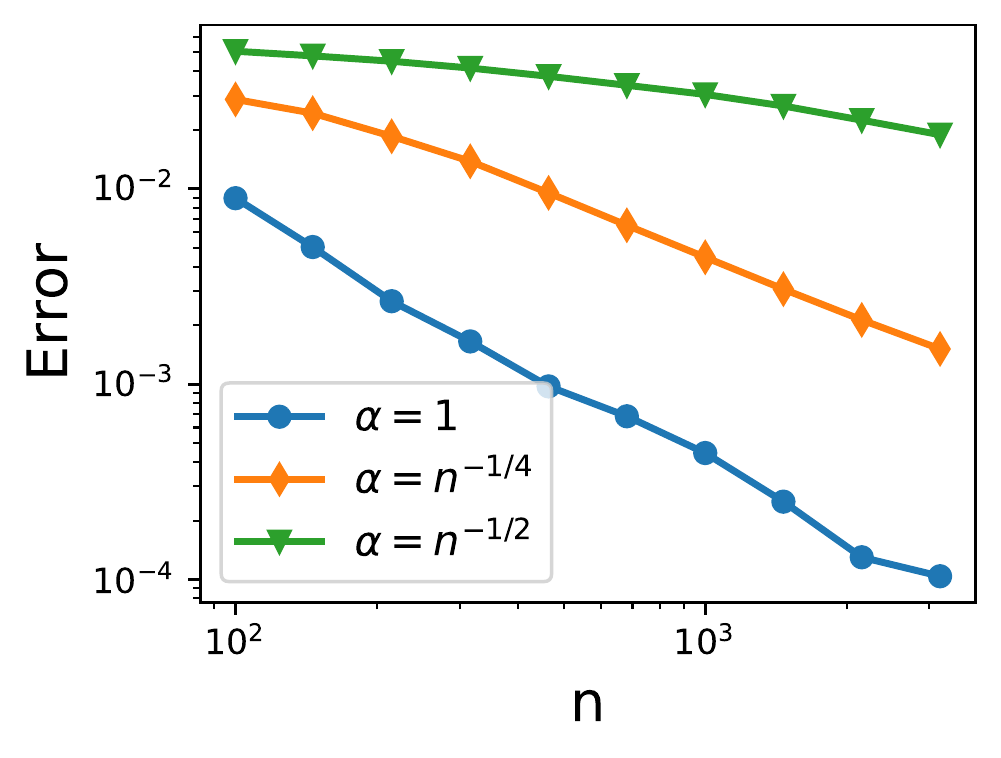}
\caption{\small Illustration of convergence of a GCN on random graphs with 3D latent positions and input signal $f=1$. Left: output signal with growing number of nodes. Right: convergence with different sparsity levels $\alpha_n$.}
\label{fig:conv}
\end{figure}

In this section, we provide simple numerical experiments on synthetic data that illustrate the convergence and stability of GCNs.
We consider untrained GCNs with random weights in order to assess how these properties result from the choice of architecture rather than learning.
The code is accessible at \url{https://github.com/nkeriven/random-graph-gnn}.

\paragraph{Convergence.} Fig.~\ref{fig:conv} shows the convergence of an equivariant GCN toward a continuous function on a $\varepsilon$-graph with nodes sampled on a 3-dimensional manifold. We take a constant input signal $f=1$ here to only assess the effect of the manifold shape. We then examine the effect of the sparsity level on convergence on the corresponding invariant GCN, taking an average of several experiments with high $n$ and $\alpha_n=1$ as an approximation of the ``true'' unknown limit value. As expected, the convergence is slower for sparse graphs, however we indeed observe convergence to the \emph{same} output for all values of $\alpha_n$.

\paragraph{Stability.} In Fig.~\ref{fig:stab}, we illustrate the stability of GCNs to deformations. We first examine the variations in the output of an equivariant GCN when only re-drawing the random edges, with or without modifying the latent positions (in a deterministic manner). We indeed observe that regions that are only translated, such as the ``flat'' parts of the surface, yield stable output, while deformed regions lead to a deformed output signal. We then verify that a larger deformation leads to a larger distance in output.

\begin{figure}
\centering
\includegraphics[height=3cm]{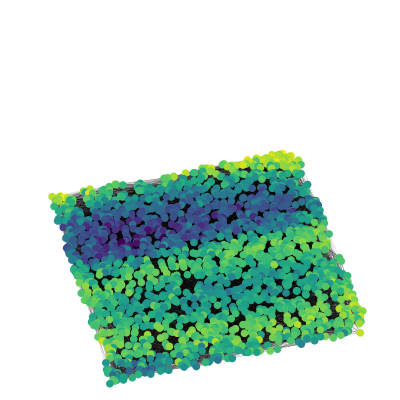}
\includegraphics[height=3cm]{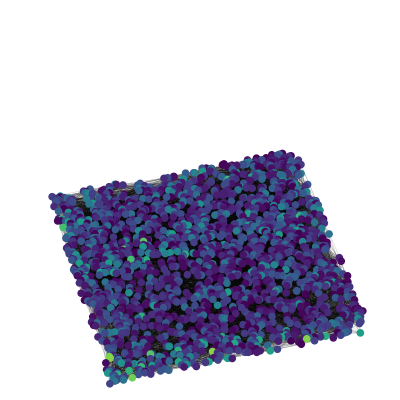}
\includegraphics[height=3cm]{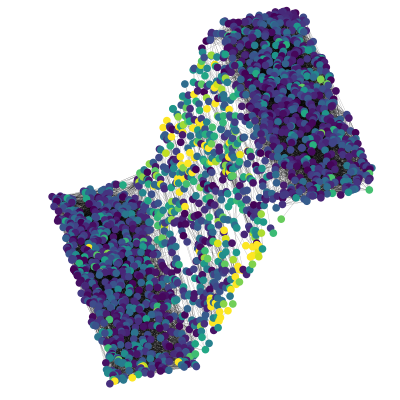}
\includegraphics[height=3cm]{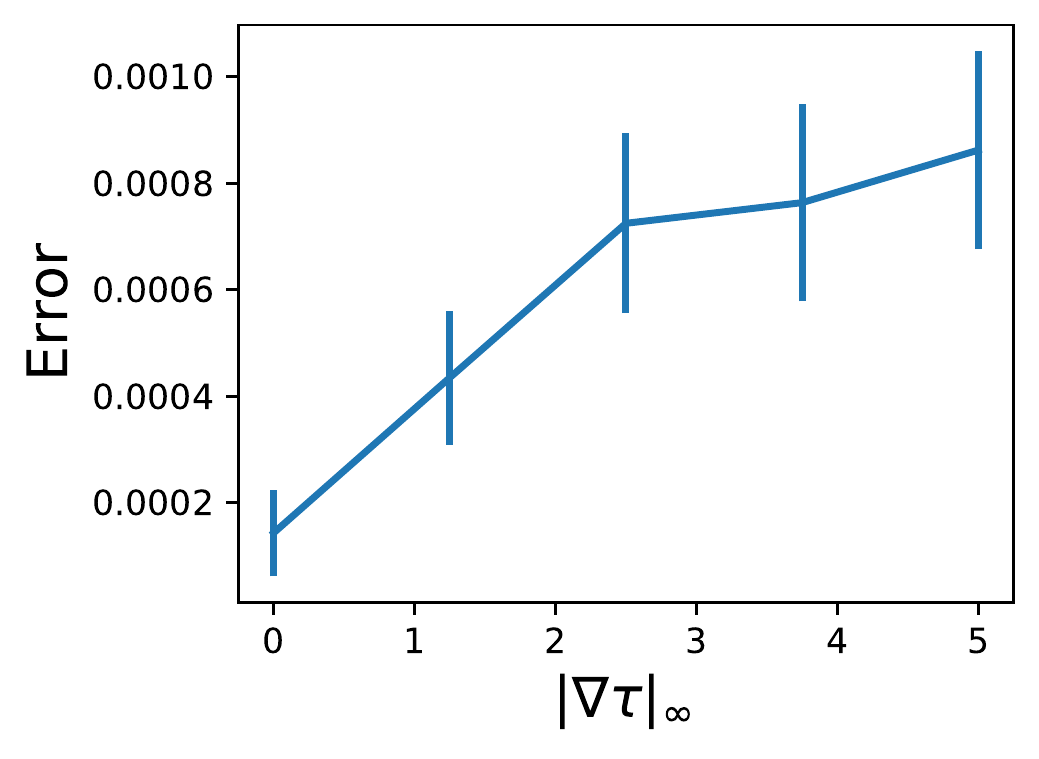}
\caption{\small Illustration of stability of a GCN on random graphs with 3D latent positions. From left to right: output signal; difference with the output on the same latent positions but a new drawing of the random edges; difference with the output on deterministically deformed latent positions and corresponding drawing of random edges; difference in output signal of an invariant GCN with respect to the amplitude of the deformation, averaged over 20 experiments.}
\label{fig:stab}
\end{figure}

\section{Conclusion and outlooks}

GCNs have proved to be efficient in identifying large-scale structures in graphs and generalizing across graphs of different sizes, which can only partially be explained with discrete graph notions like isomorphisms \cite{Xu2018} or stability with permutation-minimizing metrics \cite{Gama2019}. In contrast, we have shown that combining them with random models of large graphs allows us to define intuitive notions of deformations and stability in the continuous world like the Euclidean case \cite{Mallat2012,Bietti2017,Qiu2018}, with direct applications in community-based social networks or shape analysis on point clouds. For this we derived non-asymptotic convergence bounds, valid on relatively sparse random graphs with non-smooth kernels, and new tools like a Wasserstein-type stability bounds on equivariant c-GCNs.

We believe our work to be a first step toward a better understanding of GNNs on large graphs, with many potential outlooks.
First, it would be useful to improve the dependence of our bounds on regularity properties of the filters, as done in~\cite{Gama2019a} for the discrete setting, while preserving the mild dependence on the number of filters.
In the same vein, finer results may be obtained in particular cases: \eg the case where $\Xx$ is a sub-manifold can be studied under the light of Riemannian geometry, stability bounds on SBMs may be expressed with a direct dependence on their parameters, or more explicit stability bounds may be obtained when the (c-)GCN is a structured architecture like the scattering transform on graphs \cite{Gama2018}. Convergence results can also be obtained for many other models of random graphs like $k$-Nearest Neighbor graphs \cite{Calder2019}.
Finally, while we focus on stability in this paper, as mentioned above the \emph{approximation power} of GCNs (beyond untractable universality \cite{Keriven2019}) can also be expressed through that of their continuous counterpart, and characterizing which functions are computable by a c-GCN (\eg with growing width or number of layers) is of foremost importance.

\clearpage
\section*{Broader Impact}

Graph Neural Networks have been used to many applications, including computer vision, generative models in NLP or protein prediction to cite only a few.
Thus, our work is included in a wide literature whose societal impact and ethical considerations are not one-sided.
We provide here a theoretical understanding of the behaviour of large random graphs, with the most natural application is community detection in social science~\citep{Barabasi2016}.
Our contributions, of a theoretical nature, are far from a direct impact in our opinion.
We do not see continuous GCNs applied directly in the forseeable future otherwise as a proxy for the study of classic GCNs.
Nevertheless, as a stability result, it is a step forward to handle adversarial attacks as highlighted in~\cite{Gama2019a}.

\section*{Acknowledgements}
AB acknowledges support from the European Research Council (grant SEQUOIA 724063). SV is partly supported by ANR JCJC GraVa (ANR-18-CE40-0005).

\small
\bibliographystyle{myabbrvnat}
\bibliography{library}

\normalsize
\clearpage
\appendix

\section*{Supplementary material}

In Appendix \ref{app:notation}, we introduce additional notations and objects that will be used in the proofs. In Appendix \ref{app:permut}, we study the equivariance of GCNs and prove Props \ref{prop:permut} and \ref{prop:permut-c}. In Appendix \ref{app:conv}, we prove Theorem \ref{thm:conv} on the convergence of GCNs. In Appendix \ref{app:wass}, we prove the Wasserstein bound in Theorem \ref{thm:stab-eq}. In Appendix \ref{app:stability}, we derive the stability bounds of Section \ref{sec:stability}. Finally, in Appendix \ref{app:technical}, we give technical concentration bounds and in Appendix \ref{app:third} we provide some third-party results for completeness.

\section{Notations}\label{app:notation}
Given a GCN, we define some bounds on its parameters that will be used in the multiplicative constants of the theorem. Recall that the filters are written $h_{ij}^{(\ell)}(\lambda) = \sum_{k=0}^\infty \beta_{ijk}^{(\ell)} \lambda^k$. We define $B_k^{(\ell)} = \pa{\beta_{ijk}^{(\ell)}}_{ji} \in \RR^{d_{\ell+1} \times d_\ell}$ the matrix containing the order-$k$ coefficients, and by $B_{k,\abs{\cdot}}^{(\ell)} = \pa{\abs{\beta_{ijk}^{(\ell)}}}_{ji}$ the same matrix with absolute value on all coefficients. Then, we define the following bounds:
\begin{align*}
H^{(\ell)}_2 &= \mathsmaller{\sum_k \norm{B_k^{(\ell)}}} &H^{(\ell)}_{\partial, 2} &= \mathsmaller{\sum_k \norm{B_k^{(\ell)}}k} \\
H^{(\ell)}_\infty &= \mathsmaller{\sum_k \norm{B_{k,\abs{\cdot}}^{(\ell)}}\pa{\frac{2c_{\max}}{c_{\min}}}^k} &H^{(\ell)}_{\partial,\infty} &= \mathsmaller{\sum_k \norm{B_k^{(\ell)}} k \pa{\frac{2c_{\max}}{c_{\min}}}^{k-1}}
\end{align*}
which all converge by our assumptions on the $\beta_k$. We may also denote $H_2$ by $H_{L^2(P)}$ for convenience but this quantity does not depend on $P$.
Note that, only for $H_\infty$, we use the spectral norm of the matrix $B_{k,\abs{\cdot}}$ with non-negative coefficients, which is suboptimal compared to using $B_{k}$. This is due to a part of our analysis where we do not operate in a Hilbert space but only in a Banach space $\Bb(\Xx)$, see Lemma \ref{lem:filter_partial}. We also define $\norm{b^{(\ell)}} = \sqrt{\sum_j (b_j^{(\ell)})^2}$.

Given $X$, we define the empirical degree function
\begin{equation}\label{eq:degree_function}
d_X = d_{W,X} \eqdef \frac{1}{n}\sum_i W(\cdot, x_i) 
\end{equation}
Which will be denoted by $d_X$ when the kernel is clear. 
Although $d_{W,P}$ is bounded away from $0$ by the assumption \eqref{eq:assumption_RG}, this is not necessarily the case for $d_{W,X}$. This is however true with high probability, as shown by the following Lemma.
\begin{lemma}\label{lem:bounded-degree}
Let $\Gamma$ be a model of random graphs. There is a universal constant $C$ such that, if
\begin{equation}\label{eq:proba-bounded-degree}
n \geq C D_\Xx(\rho)^2
\end{equation}
where $D_\Xx(\rho) = \frac{\Clip}{c_{\min}} \sqrt{d_x} + \frac{c_{\max} + \Clip}{c_{\min}}\sqrt{\log\frac{n_\Xx}{\rho}}$, then with probability $1-\rho$, $d_{W,X} \geq c_{\min}/2>0$.
\end{lemma}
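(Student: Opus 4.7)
The plan is to reduce the statement to the uniform concentration
\begin{equation*}
\sup_{x \in \Xx} \abs{d_{W,X}(x) - d_{W,P}(x)} \leq c_{\min}/2,
\end{equation*}
which, combined with the lower bound $d_{W,P}(x) \geq c_{\min}$ from assumption \eqref{eq:assumption_RG}, yields $d_{W,X}(x) \geq c_{\min}/2$ for every $x \in \Xx$ simultaneously. The object on the left is the absolute deviation of the centered empirical process $G_n(x) \eqdef d_{W,X}(x) - d_{W,P}(x) = \frac{1}{n}\sum_i (W(x, x_i) - d_{W,P}(x))$, uniformly bounded by $c_{\max}$ and indexed by $x \in \Xx$.

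For a single $x$, Hoeffding's inequality controls $\abs{G_n(x)} \lesssim c_{\max}\sqrt{\log(1/\delta)/n}$ with probability $1 - \delta$. To upgrade this to a uniform-in-$x$ bound, I would use a Dudley-type chaining argument based on the Minkowski-dimension assumption: there is an $\varepsilon$-net of $\Xx$ of cardinality $N \leq \varepsilon^{-d_x}$, and a union bound of the Hoeffding tail over the net yields a deviation of order $c_{\max}\sqrt{(d_x \log(1/\varepsilon) + \log(1/\rho))/n}$ simultaneously at every net point.

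The delicate step is to pass from an arbitrary $x \in \Xx$ to its closest net point $y$ while exploiting the piecewise Lipschitz structure of $W(\cdot, x_i)$. Writing the increment as $\frac{1}{n}\sum_i [W(x, x_i) - W(y, x_i)]$, each summand is at most $\Clip \varepsilon$ whenever $x$ and $y$ lie in the same piece of the $x_i$-dependent partition of $\Xx$, and at most $c_{\max}$ otherwise. A union bound over the $n_\Xx$ possible piece labels, costing a factor $\sqrt{\log n_\Xx}$, controls the contribution of the ``boundary-crossing'' indices, so that the chaining residual scales like $\Clip \varepsilon + (c_{\max}+\Clip)\sqrt{\log(n_\Xx/\rho)/n}$; the same argument applied to the integral $d_{W,P}(x) - d_{W,P}(y)$ handles the expectation side. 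This is the main obstacle: a naive chaining that ignores the partition structure would inflate the covering numbers and lose the clean $\log n_\Xx$ dependence, whereas the correct bookkeeping cleanly separates the dimension-dependent term $\Clip \sqrt{d_x}$ from the partition/concentration term $(c_{\max}+\Clip)\sqrt{\log(n_\Xx/\rho)}$.

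Tuning $\varepsilon \asymp 1/\sqrt{n}$ and combining the two contributions gives, with probability at least $1 - \rho$,
\begin{equation*}
\sup_{x \in \Xx} \abs{G_n(x)} \lesssim \frac{\Clip \sqrt{d_x} + (c_{\max} + \Clip)\sqrt{\log(n_\Xx/\rho)}}{\sqrt{n}} = \frac{c_{\min} D_\Xx(\rho)}{\sqrt{n}}.
\end{equation*}
Requiring the right-hand side to be at most $c_{\min}/2$ produces exactly $n \geq C D_\Xx(\rho)^2$ for some universal constant $C$, which is the condition \eqref{eq:proba-bounded-degree} in the lemma.
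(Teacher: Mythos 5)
Your reduction is the right one---uniform concentration of $d_{W,X}$ around $d_{W,P}$ at scale $c_{\min}D_\Xx(\rho)/\sqrt{n}$, then the triangle inequality with $d_{W,P}\geq c_{\min}$---and this is exactly how the paper proceeds, by invoking Lemma \ref{lem:chaining-pointwise} with $f=1$. The gap is in the concentration step itself, precisely at the point you flag as delicate. Your proposed treatment of boundary crossings does not work: if $x$ and its nearest net point $y$ lie in different pieces of the partition, the increment $W(x,x_i)-W(y,x_i)$ can be of size $c_{\max}$, and under the way the assumption is used in the paper (one partition of the first argument, valid uniformly in the second argument) it can be of size $c_{\max}$ for \emph{every} index $i$ simultaneously, as can the deterministic term $d_{W,P}(x)-d_{W,P}(y)$ (think of an SBM kernel with a large jump across a block boundary). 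This is a deterministic discrepancy, so no union bound over the $n_\Xx$ piece labels can control it, and no factor $\sqrt{\log n_\Xx}$ arises from such a bound; under your alternative reading (an $x_i$-dependent partition) you would instead need to control how many $x_i$'s have a boundary separating $x$ from $y$, which requires a measure-of-boundary assumption the paper does not make. The correct remedy is to never compare across pieces: cover, or chain over, each $\Xx_j$ separately so the comparison point always lies in the same piece as $x$, and pay for the pieces with a union bound---this is where the $\log(n_\Xx/\rho)$ in $D_\Xx(\rho)$ actually comes from in the paper's proof (Dudley's tail bound on the increment process within each piece, one Hoeffding anchor per piece, union bound over the $n_\Xx$ pieces).

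Even granting the within-piece comparison, your single-scale net argument does not produce the stated constant structure. An $\varepsilon$-net of cardinality $\varepsilon^{-d_x}$ with a Hoeffding union bound gives a deviation of order $c_{\max}\sqrt{(d_x\log(1/\varepsilon)+\log(n_\Xx/\rho))/n}$ at the net points, which with $\varepsilon\asymp n^{-1/2}$ is $c_{\max}\sqrt{d_x\log n}/\sqrt{n}$, not $\Clip\sqrt{d_x}/\sqrt{n}$: the dimension term is multiplied by $c_{\max}$ instead of $\Clip$ and carries an extra $\sqrt{\log n}$. This is not cosmetic for the lemma as stated: for a piecewise-constant kernel one has $\Clip=0$, so $D_\Xx(\rho)$ contains no dimension term at all, and $n\geq C D_\Xx(\rho)^2$ would then not imply your claimed bound $\leq c_{\min}/2$. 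To obtain the $\Clip\sqrt{d_x}+(c_{\max}+\Clip)\sqrt{\log(n_\Xx/\rho)}$ form one needs genuine multi-scale chaining (Dudley's entropy integral applied to the increment process, whose sub-Gaussian parameter is $\Clip\, d(x,x')/\sqrt{n}$), with the full-amplitude Hoeffding bound paid only once per piece rather than at every net point---which is precisely the argument of Lemma \ref{lem:chaining-pointwise}.
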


\begin{proof}
Apply Lemma \ref{lem:chaining-pointwise} with $f=1$ to obtain the result.
\end{proof}

For $W$ and $X$ such that $d_{W,X}>0$, we define the following empirical Laplacian operator:
\begin{equation}\label{eq:def_laplacian_operator}
\Ll_X f = \Ll_{W,X}f \eqdef \frac{1}{n} \sum_i \frac{W(\cdot, x_i)}{\sqrt{d_{X}(\cdot) d_{X}(x_i)}}f(x_i)
\end{equation}
which we will also denote by $\Ll_X$ when $W$ is clear. Assuming that $d_X \geq c_{\min}/2$, $\Ll_{W,X}$ is a bounded operator and $\norm{\Ll_{W,X}}_\infty \leq \frac{2 c_{\max}}{c_{\min}}$.

Given $X = \{x_1,\ldots, x_n\}$ and any dimension $d$, we denote by $S_X$ the normalized sampling operator acting on functions $f:\Xx \to \RR^d$ defined by $S_X f \eqdef \frac{1}{\sqrt{n}}[f(x_1), \ldots, f(x_n)] \in \RR^{n \times d}$. The normalizing factor $\frac{1}{\sqrt{n}}$ is natural: we have $\norm{S_X f}_F \leq \norm{f}_\infty$ and by the Law of Large Numbers $\norm{S_X f}_F \to \norm{f}_{L^2(P)}$ a.s. Finally, given $X$ and $W$, we define $W(X) \eqdef (W(x_i,x_j))_{ij} \in \RR^{n\times n}$, and remark that $L(W(X)) \circ S_X = S_X \circ \Ll_{W,X}$.

\section{Invariance and equivariance}\label{app:permut}

\begin{proof}[Proof of Prop. \ref{prop:permut}]
The proof is immediate, by observing that $L(\sigma \cdot A) = \sigma \cdot L(A)$, therefore $h(L(\sigma \cdot A)) (\sigma \cdot Z) = \sigma \cdot (h(L(A))Z)$, and permutations commute with the pointwise activation function. For the invariant case, we just observe the final pooling on the equivariant case.
\end{proof}



\begin{proof}[Proof of Prop. \ref{prop:permut-c}]
Let us first observe that the degree function is such that
\[
d_{W, P}(\phi(x)) = \int W(\phi(x), x') d P(x') = \int (\phi \cdot W)(x, x') d(\phi^{-1})_\sharp P(x') = d_{\phi \cdot W, (\phi^{-1})_\sharp P}(x)
\]
Then, we have
\begin{align*}
\phi \cdot (\Ll_{W, P} f)(x) &=  \int \frac{W(\phi(x),x')}{\sqrt{d_{W, P}(\phi(x)) d_{W, P}(x')}} f(x') d P(x') \\
&= \int \frac{(\phi \cdot W)(x,x')}{\sqrt{d_{\phi \cdot W, (\phi^{-1})_\sharp P}(x) d_{\phi \cdot W, (\phi^{-1})_\sharp P}(x')}} (\phi \cdot f)(x') d (\phi^{-1})_\sharp P(x') \\
&= \Ll_{\phi \cdot W, (\phi^{-1})_\sharp P} (\phi \cdot f)
\end{align*}
Then, by recursion, we have $\Ll_{\phi \cdot W, (\phi^{-1})_\sharp P}^k (\phi \cdot f) = \Ll_{\phi \cdot W, (\phi^{-1})_\sharp P}^{k-1} (\phi \cdot (\Ll_{W, P} f))=\ldots=\phi \cdot \Ll_{W, P}^k f$, and the same is true for filters $h(\Ll)$. We conclude by observing that permutation commutes with pointwise non-linearity: $\rho \circ (\phi \cdot f) = \phi \cdot(\rho \circ f) = \rho \circ f \circ \phi$. The invariant case follows with a final integration against $P$.
\end{proof}

\section{Convergence of GCNs: proof of Theorem \ref{thm:conv}}\label{app:conv}

We are going to prove Theorem \ref{thm:conv} with the following constants:
\begin{align}
& C_1 \propto \frac{c_{\max}+\Clip}{c_{\min}} \sum_{\ell=0}^{M-1} C^{(\ell)} H^{(\ell)}_{\partial, \infty} \prod_{s=\ell+1}^{M-1} H^{(s)}_2, \notag \\
& C_2 \propto \frac{c_{\max}}{c_{\min}^2} \sum_{\ell=0}^{M-1} C^{(\ell)} H^{(\ell)}_{\partial, 2} \prod_{s=\ell+1}^{M-1} H^{(s)}_2, \notag \\
& C_3 \propto C^{(M)} \notag \\
& \quad \text{with} \quad C^{(\ell)} \eqdef \norm{\theta} \pa{\norm{f}_\infty \prod_{s=0}^{\ell-1} H^{(s)}_\infty + \sum_{s=0}^{\ell-1} \norm{b^{(s)}}\prod_{p=s+1}^{\ell-1} H^{(p)}_\infty} \label{eq:thm-conv-cst}
\end{align}

The proof will mainly rely on an application of Dudley's inequality \cite[Thm 8.1.6]{Vershynin2018} (Lemma \ref{lem:chaining-pointwise-Lapl} in Appendix \ref{app:technical}) and a recent spectral concentration inequality for normalized Laplacian in relatively sparse graphs (Theorem \ref{thm:conc-sparse-Lapl} in Appendix \ref{app:third}).

\begin{proof}

We begin the proof by the equivariant case, the invariant case will simply use an additional concentration inequality. Denoting by $Z^{(\ell)}$ (resp. $f^{(\ell)}$) the signal at each layer of the GCN (resp. the function at each layer of the c-GCN), we have
\[
\text{MSE}_X\pa{\gcneq_A(Z), \cgcneq_{W,P}(f)} = \norm{\frac{\gcneq_A(Z)}{\sqrt{n}} - S_X \cgcneq_{W,P}(f)}_F \leq \norm{\theta} \norm{\frac{Z^{(M)}}{\sqrt{n}} - S_X f^{(M)}}_F
\]
where we recall that $S_X$ is the normalized sampling operator (see App.~\ref{app:notation}). We therefore seek to bound that last term.

Assume that the following holds with probability $1-\rho$: for all $0\leq \ell\leq M-1$
\begin{equation}\label{eq:assum-pointwise}
\sqrt{\sum_j \norm{ \sum_i \pa{h^{(\ell)}_{ij}(L) S_X f^{(\ell)}_i - S_X h^{(\ell)}_{ij}(\Ll_{W,P}) f^{(\ell)}_i}}^2} \leq \Delta^{(\ell)}
\end{equation}
Then, using \eqref{eq:rho}, Lemma \ref{lem:filter_partial}, and the fact that for unnormalized sampling $\sqrt{n}S_X \circ \rho = \rho \circ (\sqrt{n}S_X)$, we can show by recursion that $\norm{\frac{Z^{(\ell)}}{\sqrt{n}} -S_X f^{(\ell)}}_F \leq \varepsilon_\ell$ implies
\begin{align*}
&\norm{\frac{Z^{(\ell+1)}}{\sqrt{n}} -S_X f^{(\ell+1)}}_F \\
&=\pa{\sum_j \norm{\frac{1}{\sqrt{n}}\rho\Big(\sum_{i=1}^{d_\ell} h^{(\ell)}_{ij}(L) z^{(\ell)}_i + b_{j}^{(\ell)}1_n \Big) - S_X \rho\Big(\sum_{i=1}^{d_\ell} h^{(\ell)}_{ij}(\Ll_{W,P}) f^{(\ell)}_i + b_{j}^{(\ell)}1(\cdot) \Big)}^2}^\frac12 \\
&=\pa{\sum_j \frac{1}{n}\norm{\rho\Big(\sum_{i=1}^{d_\ell} h^{(\ell)}_{ij}(L) z^{(\ell)}_i + b_{j}^{(\ell)}1_n \Big) - \rho\Big(\sqrt{n}S_X\Big(\sum_{i=1}^{d_\ell} h^{(\ell)}_{ij}(\Ll_{W,P}) f^{(\ell)}_i + b_{j}^{(\ell)}1(\cdot)\Big) \Big)}^2}^\frac12 \\
&\leq\pa{\sum_j \norm{\sum_{i=1}^{d_\ell} h^{(\ell)}_{ij}(L) \frac{z^{(\ell)}_i}{\sqrt{n}} - S_X h^{(\ell)}_{ij}(\Ll_{W,P}) f^{(\ell)}_i}^2}^\frac12 \\
&\leq\pa{\sum_j \norm{\sum_{i=1}^{d_\ell} h^{(\ell)}_{ij}(L) \pa{\frac{z^{(\ell)}_i}{\sqrt{n}} - S_X f^{(\ell)}_i}}^2}^\frac12 \\
&\quad + \pa{\sum_j \norm{\sum_{i=1}^{d_\ell} h^{(\ell)}_{ij}(L) S_X f^{(\ell)}_i - S_X h^{(\ell)}_{ij}(\Ll_{W,P}) f^{(\ell)}_i}^2}^\frac12 \\
&\leq \varepsilon_{\ell+1} \eqdef H^{(\ell)}_2 \varepsilon_\ell + \Delta^{(\ell)}
\end{align*}
Since $\frac{Z^{(0)}}{\sqrt{n}} = S_X f^{(0)}$ we have $\varepsilon_0 = 0$ and an easy recursion shows that
\begin{equation}\label{eq:conv-inter1}
\norm{\frac{Z^{(M)}}{\sqrt{n}} -S_X f^{(M)}}_F \leq \sum_{\ell=0}^{M-1} \Delta^{(\ell)}\prod_{s=\ell+1}^{M-1} H_2^{(s)}
\end{equation}

We now need to prove that \eqref{eq:assum-pointwise} holds with probability $1-\rho$ for all $\ell$ with the appropriate $\Delta^{(\ell)}$.
Recall that $L(W(X)) \circ S_X = S_X \circ \Ll_{W,X}$, and that by \eqref{eq:proba-bounded-degree}, with probability $1-\rho/2$ we have $\norm{\Ll_{W,X}}_\infty \leq \frac{2c_{\max}}{c_{\min}}$. Assuming this is satisfied, by Lemma \ref{lem:filter_partial} we have
\begin{align}
&\sqrt{\sum_j \norm{\sum_i \pa{h^{(\ell)}_{ij}(L) S_X f^{(\ell)}_i - S_X h^{(\ell)}_{ij}(\Ll_{W,P}) f^{(\ell)}_i}}^2} \notag \\
&\quad\leq \sqrt{\sum_j \norm{\sum_i \pa{h^{(\ell)}_{ij}(L) - h^{(\ell)}_{ij}(L(W(X))}S_X f^{(\ell)}_i}^2} \notag \\
&\qquad + \sqrt{\sum_j\norm{\sum_i S_X\pa{h^{(\ell)}_{ij}(\Ll_{W,X}) - h^{(\ell)}_{ij}(\Ll_{W,P})} f^{(\ell)}_i}^2} \notag \\
&\quad\leq H^{(\ell)}_{\partial,2}\norm{L-L(W(X))}\norm{f^{(\ell)}}_\infty \notag \\
&\qquad+ \sum_k \norm{B_k} \sqrt{\sum_i \pa{\sum_{\ell=0}^{k-1}\pa{\frac{2c_{\max}}{c_{\min}}}^\ell \norm{(\Ll_{W,X}-\Ll_{W,P})\Ll_{W,P}^{k-1-\ell} f_i^{(\ell)}}_\infty}^2} \label{eq:conv-inter2}
\end{align}

The first term in \eqref{eq:conv-inter2} is handled with a recent concentration inequality for normalized Laplacian in the relatively sparse graphs with random edges \citep{Keriven2020}, recalled as Theorem \ref{thm:conc-sparse-Lapl} in Appendix \ref{app:third}. We use the following version.
\begin{corollary}[of Theorem \ref{thm:conc-sparse-Lapl}]\label{cor:conc-sparse-Lapl}
Assume \eqref{eq:proba-bounded-degree} is satisfied and $n\geq 1/\rho$ for simplicity. There is a universal constant $C$ such that, if
\begin{equation}\label{eq:alpha_cond}
\alpha_n \geq \frac{C c_{\max}}{c_{\min}^2} \cdot \frac{\log n}{n}
\end{equation}
Then, with probability at least $1-\rho$, we have
\[
\norm{L - L(W(X))} \lesssim \frac{c_{\max}}{c_{\min}^2} \cdot \frac{1}{\sqrt{\alpha_n n}}
\]
\end{corollary}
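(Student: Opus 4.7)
The plan is to deduce the corollary as a direct specialization of the imported Theorem \ref{thm:conc-sparse-Lapl} of \cite{Keriven2020}, after verifying that its hypotheses hold with the quantitative constants implied by our assumption \eqref{eq:assumption_RG}. The target inequality involves two normalized Laplacians built from the same latent points $X = (x_1,\ldots,x_n)$: the data-driven $L = L(A)$ obtained from the random adjacency matrix $A$ whose entries are independent $\mathrm{Ber}(\alpha_n W(x_i,x_j))$, and the ``noiseless'' $L(W(X))$ built from the deterministic kernel matrix $W(X)$. The bound of \cite{Keriven2020} controls exactly this quantity provided that (i) kernel values lie in $[0,c_{\max}]$, (ii) both degree functions are uniformly bounded away from zero by some constant $c_\star>0$, and (iii) the sparsity factor satisfies $\alpha_n \gtrsim (c_{\max}/c_\star^2)\cdot n^{-1}\log n$.

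First, (i) follows from \eqref{eq:assumption_RG}. For (ii), the continuous lower bound $d_{W,P}\ge c_{\min}$ is part of \eqref{eq:assumption_RG}, so only the empirical degree function $d_{W,X}$ needs to be handled. This is precisely the content of Lemma \ref{lem:bounded-degree}: under the sample-size condition \eqref{eq:proba-bounded-degree}, we have $d_{W,X}\ge c_{\min}/2$ with probability at least $1-\rho/2$. On that event we can take $c_\star = c_{\min}/2$, so (iii) becomes $\alpha_n \gtrsim (c_{\max}/c_{\min}^2)\cdot n^{-1}\log n$, which matches our hypothesis \eqref{eq:alpha_cond} up to an absorption into the universal constant $C$.

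Conditionally on that event, I then apply Theorem \ref{thm:conc-sparse-Lapl}. Its conclusion delivers a bound of the form
\begin{equation*}
\norm{L - L(W(X))} \;\lesssim\; \frac{c_{\max}}{c_\star^2}\cdot\frac{1}{\sqrt{\alpha_n n}}\;\lesssim\;\frac{c_{\max}}{c_{\min}^2}\cdot\frac{1}{\sqrt{\alpha_n n}}
\end{equation*}
with failure probability at most $1/\mathrm{poly}(n)$ (polynomial concentration is what motivates the extra assumption $n\ge 1/\rho$, absorbing that failure probability into $\rho/2$). A union bound over the two events---the degree lower-bound event from Lemma \ref{lem:bounded-degree} and the spectral concentration event from Theorem \ref{thm:conc-sparse-Lapl}---yields the claim with total probability at least $1-\rho$.

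There is essentially no genuine obstacle here: the result is a packaging corollary whose purpose is to expose the shape $(c_{\max}/c_{\min}^2)/\sqrt{\alpha_n n}$ that will be propagated through the GCN recursion earlier in the proof of Theorem \ref{thm:conv}. The only place requiring mild care is the bookkeeping when replacing $c_\star$ by $c_{\min}/2$: this introduces a factor $4$ in the bound and a factor $4$ in the required lower bound on $\alpha_n$, both absorbed into the universal constant $C$. The hypothesis $n\ge 1/\rho$ is used exclusively to convert the polynomial concentration of \cite{Keriven2020} into the clean form $1-\rho$ stated in the corollary.
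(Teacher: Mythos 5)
Your proposal is correct and follows essentially the same route as the paper: invoke Lemma \ref{lem:bounded-degree} under \eqref{eq:proba-bounded-degree} to get $d_{W,X}\geq c_{\min}/2$ with probability $1-\rho/2$, apply Theorem \ref{thm:conc-sparse-Lapl} conditionally on $X$ with $c_{\min}$ replaced by $c_{\min}/2$, and conclude by a union bound, with $n\geq 1/\rho$ serving to absorb the polynomial tail. The only detail the paper makes slightly more explicit is that the parameter $c$ in Theorem \ref{thm:conc-sparse-Lapl} must be chosen of order $1+\log(1/\rho)/\log n$ (which stays a universal constant precisely because $n\geq 1/\rho$) so that the term $n^{-c/4+6}$ is actually below $\rho$, but this is the same bookkeeping you gesture at.
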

\begin{proof}
By \eqref{eq:proba-bounded-degree} with the appropriate constant, with probability $1-\rho/2$ we have $d_X \geq c_{\min}/2$, we can therefore apply Theorem \ref{thm:conc-sparse-Lapl} to bound $\norm{L - L(W(X))}$ conditionally on $X$ using $c \sim 1+ \frac{\log(1/\rho)}{\log(n)} \sim 1$, then use a union bound to conclude.
\end{proof}

We now bound the second term in \eqref{eq:conv-inter2}. Define $\rho_k = \frac{C \rho}{(k+1)^2 \sum_\ell d_\ell}$ with $C$ such that $\sum_{k \ell} d_\ell \rho_k = \rho/4$ (even when the filters are not of finite order). Using an application of Dudley's inequality detailed in Lemma \ref{lem:chaining-pointwise-Lapl} in Appendix \ref{app:technical} and a union bound, we obtain with probability $1-\rho/4$ that: for all $i,\ell, k$, we have
\begin{align*}
\norm{(\Ll_{W,X} - \Ll_{W,P}) \Ll^k_{W,P} f^{(\ell)}_i}_\infty &\lesssim \frac{c_{\max}\norm{\Ll^k_{W,P} f_i^{\ell}}_\infty D_\Xx\pa{\rho_k}}{c_{\min}\sqrt{n}} \\
&\leq \pa{\frac{c_{\max}}{c_{\min}}}^k\frac{c_{\max}\norm{f_i^{\ell}}_\infty D_\Xx\pa{\frac{C \rho}{(k+1)^2 \sum_\ell d_\ell}}}{c_{\min}\sqrt{n}} \\
&\lesssim \pa{\frac{2c_{\max}}{c_{\min}}}^k\frac{(c_{\max}+\Clip)\norm{f_i^{\ell}}_\infty D_\Xx\pa{\frac{\rho}{\sum_\ell d_\ell}}}{c_{\min}\sqrt{n}}
\end{align*}
Coming back to the second term of \eqref{eq:conv-inter2}, with probability $1-\rho/4$:
\begin{align*}
\sum_k &\norm{B_k} \sqrt{\sum_i \pa{\sum_{\ell=0}^{k-1}\pa{\frac{2c_{\max}}{c_{\min}}}^\ell \norm{(\Ll_{W,X}-\Ll_{W,P})\Ll_{W,P}^{k-1-\ell} f_i^{(\ell)}}_\infty}^2} \\
&\lesssim \frac{(c_{\max}+\Clip) D_\Xx\pa{\frac{\rho}{\sum_\ell d_\ell}}}{c_{\min}\sqrt{n}} \sum_k \norm{B_k} k\pa{\frac{2c_{\max}}{c_{\min}}}^k \sqrt{\sum_i \norm{f_i^{(\ell)}}_\infty^2} \\
&\leq \frac{(c_{\max}+\Clip) D_\Xx\pa{\frac{\rho}{\sum_\ell d_\ell}}}{c_{\min}\sqrt{n}} H^{(\ell)}_{\partial, \infty} \norm{f^{(\ell)}}_\infty
\end{align*}

At the end of the day we obtain that with probability $1-\rho$, \eqref{eq:assum-pointwise} is satisfied with
\begin{equation*}
\Delta^{(\ell)} \propto \norm{f^{(\ell)}}_\infty \pa{\frac{H^{(\ell)}_{\partial,2} c_{\max}}{c_{\min}^2\sqrt{\alpha_n n}} +\frac{ H_{\partial, \infty}^{(\ell)}(c_{\max}+\Clip) D_\Xx\pa{\frac{\rho}{\sum_\ell d_\ell}}}{c_{\min}\sqrt{n}}}
\end{equation*}
We then use Lemma \ref{lem:bound_cgcn} to bound $\norm{f^{(\ell)}}_\infty$ and conclude.

Finally, in the invariant case we have
\begin{align*}
\norm{\gcninv_A(Z) - \cgcninv_{W,P}(f)} \leq\textup{MSE}_X\pa{\gcneq_A(Z), \cgcneq_{W,P}(f)} + \norm{\theta}\norm{\frac{1}{n}\sum_i f^{(M)}(x_i) - \mathbb{E}f^{(M)}(X)}
\end{align*}
Using a vector Hoeffding's inequality (Lemma \ref{lem:conc-hilbert}) and a bound on $\norm{f^{(M)}}_\infty$ by Lemma \ref{lem:bound_cgcn} we bound the second term and conclude.
\end{proof}

\section{Wasserstein convergence: proof of Theorem \ref{thm:stab-eq}}\label{app:wass}

We are going to prove Theorem \ref{thm:stab-eq} with the following constants:
\begin{align}
&C_1 \propto \norm{\theta} \pa{\max_{r=1,2}\norm{f_r}_\infty \prod_{s=0}^{\ell-1} H^{(s)}_\infty + \sum_{s=0}^{\ell-1} \norm{b^{(s)}}\prod_{p=s+1}^{\ell-1} H^{(p)}_\infty} &&C_2 = 27^{d_z/4} \label{eq:thm-stab-eq-cst} \\
&C'_1 \propto (n_\Xx n_f)^{\frac{1}{d_x}} \max_{r=1,2} D_r && C'_2 = 27^{d_x/4} \notag
\end{align}
where $D_r$ are defined as \eqref{eq:cgcn-lip} with the function $f_r$ as input.
The proof will mainly rely on results of the concentration rate of the empirical distribution of $iid$ data to its true value in Wasserstein norm \cite{Weed2017} (Theorem \ref{thm:wass} in Appendix \ref{app:third}).

\begin{proof}
For $r=1,2$, define $y_{r,i} = \cgcneq_{W_r, P_r}(f_r)(x_{r,i})$ which are drawn $iid$ from $Q_{r} \eqdef \cgcneq_{W_r,P_r}(f_r)_\sharp P_r$, and we denote by $\hat Q_r = n^{-1}\sum_i \delta_{y_{r,i}}$ the empirical distributions. By Theorem \ref{thm:conv} and triangle inequality, we have
\begin{align*}
\min_\sigma \sqrt{n^{-1} \sum_i \norm{\gcneq_{A_1}(Z_1)_i - \gcneq_{A_2}(Z_2)_{\sigma(i)}}^2} &\leq \min_\sigma \sqrt{\frac{1}{n}\sum_i \norm{y_{1,i} - y_{2,\sigma(i)}}^2} + 2R_n
\end{align*}
The first term is known, among other appellations, as the so-called ``Monge'' formulation of optimal transport (OT) \citep[Chap. 2]{Peyre2019}. For uniform weights, as is the case here, it is known that Monge formulation of OT is equivalent to its Kantorovich relaxation, which in turns gives the traditional Wasserstein metric: by \citep[Prop. 2.1]{Peyre2019}, we have
\[
\sqrt{\frac{1}{n}\sum_i \norm{y_{1,i} - y_{2,\sigma(i)}}^2} = \Ww_2(\hat Q_1, \hat Q_2) \leq \Ww_2(Q_1, Q_2) + \sum_{r=1,2} \Ww_2(\hat Q_r, Q_r)
\]
We must therefore bound the distance between an empirical distribution $\hat Q$ and its true value $Q = g_\sharp P$ for some function $g = \cgcneq_{W,P}(f)$.

The distribution $Q$ is supported on $g\Xx \subset \RR^{d_z}$, which is bounded by $\norm{g}_\infty$ which can be bounded by Lemma \ref{lem:bound_cgcn}. Hence its covering numbers are as $N(g\Xx, \varepsilon, \norm{\cdot}) \leq (\norm{g}_\infty/\varepsilon)^{d_z}$. We can then conclude by Theorem \ref{thm:wass}.

When $f$ is $(c_f, n_f)$-piecewise Lipschitz however, by Lemma \ref{lem:lip_cgcn} $g$ is $(C,n_\Xx n_f)$-Lipschitz where $C$ is defined as \eqref{eq:cgcn-lip}, and in this case it is easy to see that the covering numbers of $g\Xx$ also satisfy $N(g\Xx, \varepsilon, \norm{\cdot}) \leq n_\Xx n_f(C/\varepsilon)^{d_x}$. Applying again Theorem \ref{thm:wass}, we conclude.
\end{proof}

\section{Stability}\label{app:stability}

In this section, norms $\norm{\cdot}$ always refer to~$L^2(P)$ norms (for functions and operators), and we drop the subscript for simplicity. We denote the pooling operator by $U_P f = \int f dP$. We denote $U = U_P$ and $\Ll = \Ll_{W,P}$ for short. For $\tau:\Xx \to \Xx$, we denote $W_\tau = (\Id-\tau) \cdot W$, $P_\tau = (\Id-\tau)_\sharp P$, and $f_\tau = (\Id-\tau) \cdot f$. Then we define the shorthands $\Ll_{W_\tau} = \Ll_{W_\tau, P}$, $\Ll_{P_\tau} = \Ll_{W,P_\tau}$, the composition operator $T_\tau f = (\Id - \tau) \cdot f$ and $U_\tau = U_{P_\tau}$. Finally, we define $A$ and~$A_\tau$ the integral operators of kernels~$W$ and~$W_\tau$ w.r.t.~the measure~$P$, the corresponding diagonal degree operators by~$D$ and~$D_{W_\tau}$,
so that we have $\Ll = D^{-1/2} A D^{-1/2}$ and similarly for~$\Ll_{W_\tau}$. Similarly, we define $D_{P_\tau}$ the diagonal degree operator of $W$ but with respect to $P_\tau$.

We observe that for two functions $f,f'$ and any $P$, we have
\begin{equation}\label{eq:Wass2LP}
\left. \begin{matrix*}[r]
& \abs{\int f dP - \int f' dP} \\
& \Ww_2(f_\sharp P, f'_\sharp P)
\end{matrix*}\right\rbrace \leq \norm{f-f'}_{L^2(P)}
\end{equation}
The second inequality is immediate by considering the definition $\Ww_2^2(f_\sharp P, f'_\sharp P) = \inf \left\lbrace \mathbb{E}\norm{f(X) - f'(Y)}^2 ~|~ X \sim P, Y \sim P\right\rbrace$ and taking $X=Y$ as a coupling. We will therefore manipulate mainly $L^2(P)$ norms.

\subsection{Proof of Theorem~\ref{thm:deform_w} (deformation change to~$W$)}
\label{sub:proof_deform_w}

We are going to prove Theorem \ref{thm:deform_w} with constant
\begin{equation}
\label{eq:changeW_cst}
C \propto \norm{\theta}c_{\min}^{-2}\sum_{\ell=0}^{M-1} H_{\partial, 2}^{(\ell)}\prod_{s=0,~s \neq \ell}^{M-1} H_2^{(s)}
\end{equation}

In this setting, we have two random graph models whose only difference is in the choice of kernel, from~$W$ to~$W_\tau$, while fixing~$P$ and~$f$.
This in turn leads to a change in the Laplacian, which is the main quantity that we will need to control.

Since we have assumed the bias to be zero, we have the following Lemma, which we apply with $f=f'$ in this section.
\begin{lemma}\label{lem:changeW_stab}
We have
\begin{equation*}
\norm{\cgcneq_{W,P}(f) - \cgcneq_{W_\tau, P}(f')} \leq C \norm{f} \norm{\Ll - \Ll_{W_\tau}} + C' \norm{f-f'}
\end{equation*}
with
\begin{align}
C &= \norm{\theta}\sum_{\ell=0}^{M-1} H_{\partial, 2}^{(\ell)}\prod_{s=0,~s \neq \ell}^{M-1} H_2^{(s)} \label{eq:changeW_cst1} \\
C' &= \norm{\theta}\prod_{\ell=0}^{M-1} H_2^{(\ell)} \label{eq:changef_cst1}
\end{align}
\end{lemma}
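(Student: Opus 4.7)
The plan is to propagate the difference between the two c-GCN outputs layer by layer. Denote by $f^{(\ell)}$ and $\tilde f^{(\ell)}$ the intermediate functions at layer $\ell$ of $\cgcneq_{W,P}(f)$ and $\cgcneq_{W_\tau,P}(f')$ respectively, and let $\varepsilon_\ell \eqdef \norm{f^{(\ell)} - \tilde f^{(\ell)}}_{L^2(P)}$, using the multivariate convention from the notations section. Since both networks have zero bias and share the output weights $\theta$, the final linear/pooling step contributes only a factor $\norm{\theta}$, so the lemma reduces to bounding $\varepsilon_M$ starting from $\varepsilon_0 = \norm{f-f'}$.

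At each layer, I would use the algebraic identity
\[
h_{ij}^{(\ell)}(\Ll) f_i^{(\ell)} - h_{ij}^{(\ell)}(\Ll_{W_\tau}) \tilde f_i^{(\ell)} = \bigl(h_{ij}^{(\ell)}(\Ll) - h_{ij}^{(\ell)}(\Ll_{W_\tau})\bigr) f_i^{(\ell)} + h_{ij}^{(\ell)}(\Ll_{W_\tau}) \bigl(f_i^{(\ell)} - \tilde f_i^{(\ell)}\bigr),
\]
apply the triangle inequality, and use that $\rho$ is 1-Lipschitz by \eqref{eq:rho}. The first term is bounded by $H_{\partial,2}^{(\ell)} \norm{\Ll - \Ll_{W_\tau}} \norm{f^{(\ell)}}$ via Lemma \ref{lem:filter_partial}, applied in $L^2(P)$ operator norm (this is where the constants $H_{\partial,2}$ rather than $H_{\partial,\infty}$ appear). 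The second is bounded by $H_2^{(\ell)} \varepsilon_\ell$ since the filter block has operator norm at most $H_2^{(\ell)}$ on $L^2(P)^{d_\ell}$. This yields the recursion $\varepsilon_{\ell+1} \leq H_2^{(\ell)} \varepsilon_\ell + H_{\partial,2}^{(\ell)} \norm{\Ll - \Ll_{W_\tau}} \norm{f^{(\ell)}}$.

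To close the recursion I also need to control $\norm{f^{(\ell)}}$. With zero bias and $|\rho(x)| \leq |x|$ from \eqref{eq:rho} (which in particular forces $\rho(0) = 0$), the same style of argument gives $\norm{f^{(\ell+1)}} \leq H_2^{(\ell)} \norm{f^{(\ell)}}$ and hence $\norm{f^{(\ell)}} \leq \norm{f} \prod_{s<\ell} H_2^{(s)}$. Unrolling the recursion with initial error $\varepsilon_0 = \norm{f-f'}$ gives
\[
\varepsilon_M \leq \norm{f-f'} \prod_{s=0}^{M-1} H_2^{(s)} + \norm{f} \norm{\Ll - \Ll_{W_\tau}} \sum_{\ell=0}^{M-1} H_{\partial,2}^{(\ell)} \prod_{s \neq \ell} H_2^{(s)},
\]
and multiplying by $\norm{\theta}$ yields exactly the constants $C$ and $C'$ of \eqref{eq:changeW_cst1} and \eqref{eq:changef_cst1}.

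The main obstacle is justifying that the filter-sensitivity estimate really produces the $L^2(P)$-based constants $H_2$ and $H_{\partial,2}$ (rather than the $L^\infty$ analogues $H_\infty, H_{\partial,\infty}$ carrying the factor $(2c_{\max}/c_{\min})^k$ used in the convergence proof). This is fine here because both normalized Laplacians $\Ll$ and $\Ll_{W_\tau}$ have $L^2(P)$ spectral norm bounded by one, so the telescoping identity $A^k - B^k = \sum_{j<k} A^j (A-B) B^{k-1-j}$ gives the clean $k \norm{A-B}$ bound without a geometric blow-up factor, which is precisely what is needed for the constants in \eqref{eq:changeW_cst1}.
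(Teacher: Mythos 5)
Your proof is correct and follows essentially the same route as the paper's: the same layer-wise decomposition $(h(\Ll)-h(\Ll_{W_\tau}))f^{(\ell)} + h(\Ll_{W_\tau})(f^{(\ell)}-\tilde f^{(\ell)})$, the bound from Lemma~\ref{lem:filter_partial} with the $1$-Lipschitz property~\eqref{eq:rho}, the bound $\norm{f^{(\ell)}} \leq \norm{f}\prod_{s<\ell}H_2^{(s)}$ (the no-bias case of Lemma~\ref{lem:bound_cgcn}), and the same unrolled recursion yielding the constants~\eqref{eq:changeW_cst1} and~\eqref{eq:changef_cst1}. Your explicit remark that both normalized Laplacians have $L^2(P)$ operator norm at most one is exactly the fact the paper uses implicitly to get the $H_2, H_{\partial,2}$ constants rather than their $\infty$ analogues.
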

\begin{proof}
Denoting $f^{(\ell)}$ and $(f^{(\ell)})'$ the functions at each layer, we have using Lemma \ref{lem:filter_partial} and \eqref{eq:rho}:
\begin{align*}
\norm{f^{(\ell)} - (f^{(\ell)})'} &\leq \sqrt{\sum_j \norm{\sum_{i=1}^{d_{\ell-1}} h_{ij}^{(\ell-1)}(\Ll) f_i^{(\ell-1)} - h_{ij}^{(\ell-1)}(\Ll_{W_\tau}) (f_i^{(\ell-1)})'}^2} \\
&\leq \sqrt{\sum_j \norm{\sum_{i=1}^{d_{\ell-1}} (h_{ij}^{(\ell-1)}(\Ll) - h_{ij}^{(\ell-1)}(\Ll_{W_\tau})) f_i^{(\ell-1)}}_*^2}  \\
&\quad + \sqrt{\sum_j \norm{\sum_{i=1}^{d_{\ell-1}} h_{ij}^{(\ell-1)}(\Ll_{W_\tau}) (f_i^{(\ell-1)} - (f_i^{(\ell-1)})')}^2} \\
&\leq H^{(\ell-1)}_{\partial, 2} \norm{\Ll - \Ll_{W_\tau}} \norm{f^{(\ell-1)}} + H^{(\ell-1)}_2 \norm{f^{(\ell-1)} - (f^{(\ell-1)})'}
\end{align*}
An easy recursion and Lemma \ref{lem:bound_cgcn} give the result.
\end{proof}

The rest of the proof then consists in obtaining a bound on the quantity~$\|\Ll_{W_\tau} - \Ll\|$ in $L^2(P)$.
We have
\begin{align}
\Ll_{W_\tau} - \Ll &= D_{W_\tau}^{-\frac12} A_\tau D_{W_\tau}^{-\frac12} - D^{-\frac12} A D^{-\frac12} \nonumber \\
&= D_{W_\tau}^{-\frac12} A_\tau(D_{W_\tau}^{-\frac12} - D^{-\frac12}) + D_{W_\tau}^{-\frac12} (A_\tau - A) D^{-\frac12} + (D_{W_\tau}^{-\frac12} - D^{-\frac12}) A D^{-\frac12}. \label{eq:ldiff_decomp}
\end{align}
We now bound different operators in this decomposition separately.

\paragraph{Bound on~$\|A_\tau - A\|$.}
Define
\begin{equation}
	\label{eq:kdiffdef}
k(x, x') = w(x - \tau(x) - x' + \tau(x')) - w(x - x'),
\end{equation}
so that $A_\tau - A$ is an integral operator with kernel~$k$.
We have, by the fundamental theorem of calculus,
\[
k(x, x') = \int_0^1 \langle \nabla w(x - x' + t (\tau(x') - \tau(x))), \tau(x') - \tau(x) \rangle dt.
\]
Now, note that we have
\begin{align*}
|\tau(x') - \tau(x)| &\leq \|\nabla \tau\|_\infty \cdot \norm{x - x'} \\
|x - x' + t (\tau(x') - \tau(x))| &\geq \frac{1}{2} \norm{x - x'},
\end{align*}
where the last inequality follows from the reverse triangle inequality and the assumption~$\|\nabla \tau\|_\infty \leq 1/2$.
By Cauchy-Schwarz, and since $\norm{\nabla w(x)}$ decreases with~$\norm{x}$, we have
\begin{align*}
	\int |k(x, x')| dP(x') &\leq \|\nabla \tau\|_\infty \int \norm{\nabla w((x - x')/2)} \cdot \norm{x' - x} d P(x') \\
	&\leq C_{\nabla w} \|\nabla \tau\|_\infty.
\end{align*}
Similarly, we obtain $\int |k(x, x')| d P(x) \leq C_{\nabla w} \|\nabla \tau\|_\infty$.
Then, Schur's test (Lemma \ref{lem:schur}) yields
\begin{equation}
	\label{eq:final_bound_deform_w}
\|A_\tau - A\| \leq C_{\nabla w} \|\nabla \tau\|_\infty.
\end{equation}

\paragraph{Bound on~$\|D_{W_\tau}^{-1/2} - D^{-1/2}\|$.}
Define $d = d_{W,P}$ and $d_\tau = d_{W_\tau,P}$. The operator $D_{W_\tau}^{-1/2} - D^{-1/2}$ is diagonal with elements~$d_\tau(x)^{-1/2} - d(x)^{-1/2}$, such that $\|D_{W_\tau}^{-1/2} - D^{-1/2}\| \leq \norm{d_\tau^{-1/2} - d^{-1/2}}_\infty$.
Note that we have
\[
|d_\tau(x) - d(x)| = \abs{\int k(x, x') dP(x')} \leq C_{\nabla w} \|\nabla \tau\|_\infty,
\]
Then as in the proof of Lemma \ref{lem:chaining-pointwise-Lapl} we have $\norm{d_\tau^{-1/2} - d^{-1/2}}_\infty \leq c_{\min}^{-\frac{3}{2}} \norm{d_\tau - d}_\infty \leq C_{\nabla w} c_{\min}^{-\frac{3}{2}}\|\nabla \tau\|_\infty$.

\paragraph{Final bound.}
Note that by Schur's test, we have $\|A\| \leq C_w$, $\|A_\tau\| \leq C_w + C_{\nabla w} \|\nabla \tau\|_\infty$.
Further, we have $\|D^{-1/2}\|, \|D_{W_\tau}^{-1/2}\| \leq c_{\min}^{-1/2}$.
Plugging back into~\eqref{eq:ldiff_decomp}, we have the desired bound.

\subsection{Proof of Theorem~\ref{thm:deform_p_TI} and \ref{thm:deform_p} (deformation change to~$P$)}

In this case, we have two random graph models with distributions~$P$ and $P_\tau$, while~$W$ remains fixed. Depending on the case, the input $f$ will change or not.

\paragraph{Translation-invariant case.} We are going to prove Theorem \ref{thm:deform_p_TI} with $C$ defined as \eqref{eq:changeW_cst} and $C'$ as \eqref{eq:changef_cst1}. Using \eqref{eq:Wass2LP} and Prop \ref{prop:permut-c}, we obtain that
\begin{equation*}
\Ww_2\pa{\cgcneq_{W, P_\tau}(f')_\sharp P_\tau, \cgcneq_{W, P}(f)_\sharp P} \leq \norm{T_\tau \cgcneq_{W, P_\tau}(f') - \cgcneq_{W, P}(f)} = \norm{\cgcneq_{W_\tau, P}(f'_\tau) - \cgcneq_{W, P}(f)}
\end{equation*}
In the invariant case, we have similarly that $\abs{\cgcninv_{W, P_\tau}(f') - \cgcninv_{W, P}(f)} = \abs{\cgcninv_{W, P_\tau}(f') - \cgcninv_{W, P}(f)} \leq \norm{\cgcneq_{W_\tau, P}(f'_\tau) - \cgcneq_{W, P}(f)}$.
Using Lemma \ref{lem:changeW_stab} and the computation of the previous section, we obtain
\begin{equation*}
\norm{\cgcneq_{W_\tau, P}(f'_\tau) - \cgcneq_{W, P}(f)} \leq C(C_W+C_{\nabla w}) \norm{f} \norm{\nabla \tau}_\infty + C' \norm{f'_\tau - f},
\end{equation*}
where $C$ is defined by \eqref{eq:changeW_cst} and $C'$ is defined by \eqref{eq:changef_cst1}.

\begin{proof}[Proof of Prop. \ref{prop:degree-NF}] When using degree functions as input, using the proof of Prop \ref{prop:permut-c} and the computations of the previous sections we have
\[
\norm{T_\tau d_{W, P_\tau} - d_{W,P}} = \norm{d_{W_\tau, P} - d_{W,P}} \leq C_{\nabla w} \norm{\nabla \tau}_\infty
\]
\end{proof}

\paragraph{Non Translation-invariant case.} We are going to prove Theorem \ref{thm:deform_p} with constants $C$ defined as \eqref{eq:changeW_cst} and $C'$ defined as \eqref{eq:changef_cst1}. By Assumption \eqref{eq:assume_density} we easily have the following:
\begin{align}\label{eq:equivalent_norm_tau}
C_{P_\tau}^{-1/2}\norm{f}_{L^2(P_\tau)} \leq \norm{f} \leq C_{P_\tau}^{1/2} \norm{f}_{L^2(P_\tau)}
\end{align}
such that for any $k$ we have $\norm{\Ll_{P_\tau}^k} \leq C_{P_\tau}$ by observing that
\[
\|\Ll_{P_\tau}^k f\| \leq C_{P_\tau}^{1/2} \|\Ll_{P_\tau}^k f\|_{L^2(P_\tau)} \leq C_{P_\tau}^{1/2} \|f\|_{L^2(P_\tau)} \leq C_{P_\tau} \|f\|
\]
We can then prove the following Lemma.
\begin{lemma}[Stability in terms of measure change]
We have
\begin{equation}
	\|\cgcninv_{W, P_\tau}(f) - \cgcninv_{W, P}(f)\|
	\leq C_{P_\tau}^2 C_1 \norm{f} \|\Ll_{P_\tau} - \Ll\| + C_2 \norm{f} \|U_\tau - U\|
\end{equation}
with $C_1$ is the same as \eqref{eq:changeW_cst1} and $C_2 = \norm{\theta} \prod_{\ell=0}^{M-1} H_2^{(\ell)}$.
\end{lemma}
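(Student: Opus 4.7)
The plan is to decompose the difference via the triangle inequality as
\[
\cgcninv_{W,P_\tau}(f) - \cgcninv_{W,P}(f) = (U_\tau - U)\, g_\tau \;+\; U\pa{g_\tau - g},
\]
where I set $g_\tau \eqdef \cgcneq_{W,P_\tau}(f)$ and $g \eqdef \cgcneq_{W,P}(f)$, and handle the two pieces separately. The first piece is a pure change-of-measure term: bounding $\norm{g_\tau}$ by Lemma \ref{lem:bound_cgcn} applied to the c-GCN built from $\Ll_{P_\tau}$, and using \eqref{eq:equivalent_norm_tau} to switch between $L^2(P)$ and $L^2(P_\tau)$, will yield the desired $C_2 \norm{f}\norm{U_\tau - U}$. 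The second piece reduces via contractivity of $U$ to bounding $\norm{g_\tau - g}_{L^2(P)}$, and by one application of \eqref{eq:equivalent_norm_tau} this amounts to a stability estimate for the equivariant c-GCN under a change of Laplacian, entirely analogous to Lemma \ref{lem:changeW_stab} but with $\Ll_{W_\tau}$ replaced by $\Ll_{P_\tau}$.

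For that stability estimate I will recurse through the layers, but crucially in the $L^2(P_\tau)$ geometry, so that $\Ll_{P_\tau}$ is self-adjoint with spectrum in $[-1,1]$ and hence a contraction. Writing $f^{(\ell)}_\tau, f^{(\ell)}$ for the successive layer functions, the $1$-Lipschitz property of $\rho$ combined with Lemma \ref{lem:filter_partial} applied in $L^2(P_\tau)$ gives, at every layer, an inequality of the shape
\[
\norm{f^{(\ell+1)}_\tau - f^{(\ell+1)}}_{L^2(P_\tau)} \leq H^{(\ell)}_2 \norm{f^{(\ell)}_\tau - f^{(\ell)}}_{L^2(P_\tau)} + H^{(\ell)}_{\partial,2} \cdot E^{(\ell)} \cdot \norm{f^{(\ell)}}_{L^2(P_\tau)},
\]
where $E^{(\ell)}$ controls $h^{(\ell)}(\Ll_{P_\tau}) - h^{(\ell)}(\Ll)$ in $L^2(P_\tau)$. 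To bound $E^{(\ell)}$ I will use the telescoping identity $\Ll_{P_\tau}^k - \Ll^k = \sum_{j=0}^{k-1}\Ll_{P_\tau}^j(\Ll_{P_\tau} - \Ll)\Ll^{k-1-j}$: each $\Ll_{P_\tau}^j$ is a contraction on $L^2(P_\tau)$, while $\norm{\Ll}_{L^2(P_\tau)} \leq C_{P_\tau}$ by \eqref{eq:equivalent_norm_tau} and self-adjointness of $\Ll$ in $L^2(P)$, producing a single overall factor $C_{P_\tau}$. Lemma \ref{lem:bound_cgcn} together with \eqref{eq:equivalent_norm_tau} then give $\norm{f^{(\ell)}}_{L^2(P_\tau)} \leq C_{P_\tau}^{1/2}\prod_{s<\ell} H^{(s)}_2 \norm{f}$, contributing a further $C_{P_\tau}^{1/2}$. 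Unrolling the recursion across the $M$ layers assembles the constant $C_1$, and the final $L^2(P_\tau) \to L^2(P)$ conversion used to reintroduce $U$ contributes the last $C_{P_\tau}^{1/2}$, producing the advertised $C_{P_\tau}^2$ prefactor.

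The delicate point will be the bookkeeping of these $C_{P_\tau}$ factors. Working naively in $L^2(P)$ would incur one factor $C_{P_\tau}$ per layer via $\norm{h(\Ll_{P_\tau})}_{L^2(P) \to L^2(P)} \leq C_{P_\tau} \sum_k \abs{\beta_k}$, yielding an unwanted exponential dependence in $M$; the proof must therefore execute \emph{all} layer estimates in $L^2(P_\tau)$ and switch to $L^2(P)$ only at the two boundaries (once to pull out $\norm{f}$, once to reintroduce $U$). Beyond this accounting, the argument is purely functional-analytic, parallel to the proof of Lemma \ref{lem:changeW_stab}, and requires no new probabilistic input.
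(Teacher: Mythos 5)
Your argument is essentially the paper's proof run in mirror image, and it is sound, but two bookkeeping points deserve flagging. The paper splits $U_\tau g_\tau - U g = U_\tau(g_\tau - g) + (U_\tau - U)g$ with $g = \cgcneq_{W,P}(f)$, $g_\tau = \cgcneq_{W,P_\tau}(f)$, and runs the layer recursion entirely in $L^2(P)$: the propagated difference is carried by $h^{(\ell)}(\Ll)$, which is a contraction on $L^2(P)$, while the perturbation $(h^{(\ell)}(\Ll)-h^{(\ell)}(\Ll_{P_\tau}))$ acts on the $P_\tau$-layer functions, costing only the single factor $\norm{\Ll_{P_\tau}^m}\le C_{P_\tau}$ from \eqref{eq:equivalent_norm_tau} together with $\norm{(f^{(\ell)})'}\le C_{P_\tau}\norm{f}\prod_s H_2^{(s)}$ from Lemma \ref{lem:bound_cgcn}. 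So your claim that a recursion in $L^2(P)$ necessarily accumulates one $C_{P_\tau}$ per layer is not accurate: the exponential blow-up is avoided by choosing which operator carries the difference, not by which measure hosts the estimates. Second, your opposite split $(U_\tau - U)g_\tau + U(g_\tau-g)$ puts the $P_\tau$-representation under $(U_\tau-U)$, and bounding $\norm{g_\tau}_{L^2(P)}$ costs two applications of \eqref{eq:equivalent_norm_tau}, so this term comes out as $C_{P_\tau}C_2\norm{f}\norm{U_\tau-U}$ rather than the stated $C_2\norm{f}\norm{U_\tau-U}$; the paper's split yields exactly $C_2$ because $(U_\tau-U)$ hits $g$, whose $L^2(P)$ norm is $C_2\norm{f}$ directly. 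Also note that $\norm{\Ll_{P_\tau}-\Ll}$ in the statement is an $L^2(P)$ operator norm, so in your $L^2(P_\tau)$ telescoping the single factor $C_{P_\tau}$ should be obtained by bounding the composite, $\norm{(\Ll_{P_\tau}-\Ll)\Ll^{k-1-j}y}_{L^2(P_\tau)} \le C_{P_\tau}\norm{\Ll_{P_\tau}-\Ll}\,\norm{y}_{L^2(P_\tau)}$, with one norm conversion at each end, rather than attributing it to $\norm{\Ll}_{L^2(P_\tau)}$. With these adjustments your route delivers $C_{P_\tau}^2 C_1$ on the Laplacian term and $C_{P_\tau}C_2$ on the pooling term, i.e., the lemma up to one extra $C_{P_\tau}$ on the second constant, which does not change the form of Theorem \ref{thm:deform_p}.
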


\begin{proof}
From a simple triangle inequality and the estimate~$\|U_\tau\| \leq C_{P_\tau}$ since $|U_\tau f| = \abs{\int f d P_\tau} = \abs{\int f q_\tau dP} \leq C_{P_\tau} \norm{f}$, we have
\begin{align*}
\|\cgcninv_{W, P_\tau}(f) - \cgcninv_{W, P}(f)\| &= \norm{ U_\tau \cgcneq_{W, P_\tau}(f) - U \cgcneq_{W, P}(f)} \\
&\leq C_{P_\tau} \norm{\cgcneq_{W, P_\tau}(f) - \cgcneq_{W, P}(f)} + \norm{U_\tau - U} \norm{\cgcneq_{W,P}(f)}
\end{align*}

We must now bound the first term. Denoting $f^{(\ell)}$ and $(f^{(\ell)})'$ the functions at each layer, we have using Lemma \ref{lem:filter_partial}, \eqref{eq:rho} and the fact that $\norm{\Ll_{P_\tau}^k} \leq C_{P_\tau}$:
\begin{align*}
\norm{f^{(\ell)} - (f^{(\ell)})'} &\leq \sqrt{\sum_j \norm{\sum_{i=1}^{d_{\ell-1}} h_{ij}^{(\ell-1)}(\Ll) f_i^{(\ell-1)} - h_{ij}^{(\ell-1)}(\Ll_{P_\tau}) (f_i^{(\ell-1)})'}^2} \\
&\leq \sqrt{\sum_j \norm{\sum_{i=1}^{d_{\ell-1}} (h_{ij}^{(\ell-1)}(\Ll) - h_{ij}^{(\ell-1)}(\Ll_{P_\tau})) (f_i^{(\ell-1)})'}^2}  \\
&\quad + \sqrt{\sum_j \norm{\sum_{i=1}^{d_{\ell-1}} h_{ij}^{(\ell-1)}(\Ll) (f_i^{(\ell-1)} - (f_i^{(\ell-1)})')}^2} \\
&\leq C_{P_\tau} H^{(\ell-1)}_{\partial, 2} \norm{\Ll - \Ll_{P_\tau}} \norm{(f^{(\ell-1)})'} + H^{(\ell-1)}_2 \norm{f^{(\ell-1)} - (f^{(\ell-1)})'}
\end{align*}
Then, we use Lemma \ref{lem:bound_cgcn} and the fact that there is no bias to obtain
\[
\norm{(f^{(\ell-1)})'} \leq C_{P_\tau}^{1/2} \norm{(f^{(\ell-1)})'}_{L^2(P_\tau)} \leq C_{P_\tau}^{1/2} \norm{f}_{L^2(P_\tau)} \prod_{s=0}^{\ell-1} H_2^{(s)} \leq C_{P_\tau} \norm{f} \prod_{s=0}^{\ell-1} H_2^{(s)}
\]
an easy recursion gives the result.
\end{proof}

We first bound~$\|U_\tau - U\|$ easily, by
	\begin{align*}
		\abs{U_\tau f - Uf} &= \abs{\int f(x) d P_\tau(x) - \int f(x) dP(x)} = \abs{\int f(x) \left( q_\tau(x) - 1 \right) dP(x)} \leq N_P(\tau) \norm{f}
	\end{align*}
which is also true for multivariate functions.

We now bound~$\|\Ll_{P_\tau} - \Ll\|$.
If we denote $J_\tau$ the diagonal change of variables operator with elements $q_\tau(x)$, we may write
\begin{align*}
	\Ll_{P_\tau} - \Ll &= D_{P_\tau}^{-1/2} A D_{P_\tau}^{-1/2} J_\tau - D^{-1/2} A D^{-1/2} \\
	&= (D_{P_\tau}^{-1/2} - D^{-1/2}) A D_{P_\tau}^{-1/2} J_\tau + D^{-1/2} A (D_{P_\tau}^{-1/2}  - D^{-1/2}) J_\tau + D^{-1/2} A D^{-1/2} (J_\tau - \Id) 
\end{align*}

The following estimates are easily obtained using Schur's test or by a pointwise supremum: $\|A\| \leq C_w$, $\|J_\tau\| \leq C_{P_\tau}$, $\|J_\tau - \Id\| \leq N_P(\tau)$ and $\|D^{-1/2}\|, \|D_{P_\tau}^{-1/2}\| \leq c_{\min}^{-1/2}$. It is left to bound  and $\|D_{P_\tau}^{-1/2}  - D^{-1/2}\|$. As before,
\begin{align*}
	\|D_{P_\tau}^{-1/2}  - D^{-1/2}\| &\leq c_{\min}^{-3/2}\norm{d_\tau - d}_\infty.
\end{align*}
and
\begin{align*}
	\abs{d_\tau(x) - d(x)} &= \abs{\int W(x, x') d P_\tau(x') - \int W(x, x') dP(x')} \\
	&= \abs{\int W(x, x') \left( q_\tau(x') - 1 \right) dP(x')} \leq C_w N_P(\tau)
\end{align*}

\subsection{Proof of Proposition~\ref{prop:deform_f} (signal deformations to~$f$)}

This is just a triangle inequality, combined with the previous theorems and Prop \ref{prop:permut-c}:
\begin{align*}
\norm{\cgcninv_{W,P}(T_\tau f) - \cgcninv_{W,P}(f)} &\leq \norm{\cgcninv_{W,P}(T_\tau f) - \cgcninv_{W_\tau,P}(T_\tau f)} + \norm{\cgcninv_{W_\tau,P}(T_\tau f) - \cgcninv_{W,P}(f)} \\
&\leq C(C_W + C_{\nabla w})\norm{\nabla \tau}_\infty \norm{T_\tau f} + \norm{\cgcninv_{W,P_\tau}(f) - \cgcninv_{W,P}(f)} \\
&\leq C(C_W + C_{\nabla w}) C_{P_\tau}^{1/2} \norm{f} \norm{\nabla \tau}_\infty + (CC_{P_\tau}^3 C_W + C') \norm{f} N_P(\tau)
\end{align*}
where $C$ is given by \eqref{eq:changeW_cst} and $C'$ is given by \eqref{eq:changef_cst1}.

\section{Technical Lemma}\label{app:technical}

\subsection{Concentration inequalities}

\begin{lemma}[Chaining on non-normalized kernels]\label{lem:chaining-pointwise}
Consider a kernel $W$ and a probability distribution $P$ satisfying \eqref{eq:assumption_RG}, any function $f \in \Bb(\Xx)$, and $x_1,\ldots, x_n$ drawn $iid$ from $P$. Then, with probability at least $1-\rho$,
\begin{equation*}
\norm{\frac{1}{n} \sum_i W(\cdot, x_i) f(x_i) - \int W(\cdot, x) f(x) dP(x)}_\infty \lesssim \frac{\norm{f}_\infty\pa{\Clip \sqrt{d_x} + (c_{\max} + \Clip)\sqrt{\log\frac{n_\Xx}{\rho}}}}{\sqrt{n}}
\end{equation*}

\end{lemma}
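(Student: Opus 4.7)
The goal is to bound the supremum over $x \in \Xx$ of the empirical process
\[
Z_n(x) \eqdef \frac{1}{n}\sum_i g_x(x_i) - \mathbb{E}_{x'\sim P}[g_x(x')],\qquad g_x(y) \eqdef W(x,y) f(y),
\]
which is a textbook target for a chaining argument. The plan is to (i) localise to each of the $n_\Xx$ pieces of the piecewise-Lipschitz partition so that the index metric becomes $d$ itself, (ii) apply a high-probability Dudley bound using the covering numbers $N(\Xx,\varepsilon,d) \leq \varepsilon^{-d_x}$, and (iii) union-bound over the $n_\Xx$ pieces together with the pointwise Hoeffding bound at a reference point in each piece.

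More concretely, fix one piece $\Xx_j$ of the partition along which $W(\cdot, x)$ is $\Clip$-Lipschitz (using symmetry of $W$, this translates to $|W(x, y) - W(x', y)| \leq \Clip\, d(x, x')$ for $x, x'$ in the same piece, uniformly in $y$). Pick an arbitrary anchor $x_j^\star \in \Xx_j$. For $x, x' \in \Xx_j$ the increments are bounded: $|g_x(y) - g_{x'}(y)| \leq \Clip \|f\|_\infty d(x, x')$, and $|g_x(y)| \leq c_{\max}\|f\|_\infty$. Hence, by Hoeffding applied to the $n$ i.i.d. bounded summands, the centred increments $Z_n(x) - Z_n(x')$ are sub-Gaussian with proxy $\sigma(x,x') = \Clip \|f\|_\infty d(x,x')/\sqrt{n}$, and likewise $Z_n(x_j^\star)$ is sub-Gaussian with proxy $c_{\max}\|f\|_\infty/\sqrt{n}$.

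Now I invoke the high-probability Dudley bound for sub-Gaussian processes on a bounded metric space (e.g.\ Theorem 8.1.6 in Vershynin). Since $\text{diam}(\Xx_j) \leq \text{diam}(\Xx) \leq 1$ and $N(\Xx_j, \varepsilon, d) \leq \varepsilon^{-d_x}$, the Dudley integral gives
\[
\int_0^1 \sqrt{\log N(\Xx_j,\varepsilon,d)}\, d\varepsilon \;\leq\; \sqrt{d_x}\int_0^1 \sqrt{\log(1/\varepsilon)}\, d\varepsilon \;\lesssim\; \sqrt{d_x},
\]
so that with probability at least $1 - \rho/(2 n_\Xx)$,
\[
\sup_{x\in\Xx_j} |Z_n(x) - Z_n(x_j^\star)| \;\lesssim\; \frac{\Clip \|f\|_\infty}{\sqrt{n}}\Bigl(\sqrt{d_x} + \sqrt{\log(n_\Xx/\rho)}\Bigr).
\]
A standard scalar Hoeffding bound controls the anchor: with probability $1 - \rho/(2 n_\Xx)$, $|Z_n(x_j^\star)| \lesssim c_{\max} \|f\|_\infty \sqrt{\log(n_\Xx/\rho)/n}$. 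A union bound over the $n_\Xx$ pieces, a triangle inequality, and collecting the $\Clip$ and $c_{\max}$ terms yields exactly the announced bound.

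The main obstacle is not the algebra but obtaining the \emph{high-probability} (not just in-expectation) Dudley bound: the tail form of chaining produces the extra $\sqrt{\log(n_\Xx/\rho)}$ summand that multiplies $\Clip$ alongside the ``deterministic'' $\sqrt{d_x}$ entropy term, and one must be careful that this tail term combines cleanly with the $c_{\max}$ contribution from the anchor after the union bound. A small technical point is reconciling the partition: because $W$ is symmetric, the piecewise-Lipschitz assumption on $y \mapsto W(y,x)$ for each fixed $x$ transfers to piecewise-Lipschitz behaviour of $x \mapsto W(x,y)$ uniformly in $y$, which is what the chaining argument requires.
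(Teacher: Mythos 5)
Your proposal is correct and follows essentially the same route as the paper's proof: anchor each piece $\Xx_j$ of the partition at a reference point controlled by scalar Hoeffding, bound the supremum of increments over $\Xx_j$ via the tail version of Dudley's inequality (Vershynin, Thm.\ 8.1.6) after checking the sub-Gaussian increment condition with proxy $\Clip\|f\|_\infty d(x,x')/\sqrt{n}$, then union-bound over the $n_\Xx$ pieces to collect the $\Clip\sqrt{d_x}$ and $(c_{\max}+\Clip)\sqrt{\log(n_\Xx/\rho)}$ terms. The only cosmetic difference is your symmetry remark: the assumption in \eqref{eq:assumption_RG} already gives Lipschitzness of $W$ in the argument indexing the process, so no transfer is needed.
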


\begin{proof}
Without lost of generality, we do the proof for $\norm{f}_\infty \leq 1$. For any $x\in \Xx$, define
\[
Y_x = \frac{1}{n}\sum_i W(x, x_i)f(x_i) - \int W(x, x') f(x') dP(x')
\]
Since $\norm{W(\cdot, x)f}_\infty \leq c_{\max}$, for any fixed $x_0 \in \Xx$, by Hoeffding's inequality we have: with probability at least $1-\rho$, 
\[
\abs{Y_{x_0}} \lesssim \frac{c_{\max} \sqrt{\log(1/\rho)}}{\sqrt{n}}
\]
Consider any $j\leq n_\Xx$. For any $x_0 \in \Xx_j$, we have
\[
\sup_{x \in \Xx_j} \abs{Y_x} \leq \sup_{x, x' \in \Xx_j} \abs{Y_x - Y_{x'}} + \abs{Y_{x_0}}
\]
The second term is bounded by the inequality above.
For the first term, we are going to use Dudley's inequality ``tail bound'' version \citep[Thm 8.1.6]{Vershynin2018}. We first need to check the sub-gaussian increments of the process $Y_x$. For any $x,x'\in \Xx_j$, we have
\begin{align*}
\norm{Y_x - Y_{x'}}_{\psi_2} &\lesssim \frac{1}{n} \pa{\sum_{i=1}^n \norm{(W(x, x_i) -W(x', x_i))f(x_i) - (T_{W,P}f(x)- T_{W,P}f(x'))}_{\psi_2}^2}^\frac12 \\
&\lesssim \frac{1}{n} \pa{\sum_{i=1}^n \norm{(W(x, x_i) -W(x', x_i))f(x_i)}_{\psi_2}^2}^\frac12 \\
&\lesssim \frac{1}{n} \pa{\sum_{i=1}^n \norm{(W(x, \cdot) -W(x', \cdot))f(\cdot)}_{\infty}^2}^\frac12 \\
&\leq \frac{\Clip}{\sqrt{n}} d(x,x')
\end{align*}
where we have used, from \citep{Vershynin2018}, Prop. 2.6.1 for the first line, Lemma 2.6.8 for the second, Example 2.5.8 for the third, and the Lipschitz property of $W$ for the last.

Now, we apply Dudley's inequality \citep[Thm 8.1.6]{Vershynin2018} to obtain that with probability $1-\rho$,
\begin{align*}
\sup_{x, x' \in \Xx_j} \abs{Y_x - Y_{x'}} &\lesssim \frac{\Clip}{\sqrt{n}}\pa{\int_0^1 \sqrt{\log N(\Xx, d, \varepsilon)}d\varepsilon + \sqrt{\log(1/\rho)}} \\
&\lesssim \frac{\Clip}{\sqrt{n}}\pa{\sqrt{d_x} + \sqrt{\log(1/\rho)}}
\end{align*}

Combining with the decomposition above and applying a union bound over the $\Xx_j$ yields the desired result.
\end{proof}

\begin{lemma}[Chaining on normalized Laplacians]\label{lem:chaining-pointwise-Lapl}
Consider a kernel $W$ and a probability distribution $P$ satisfying \eqref{eq:assumption_RG}, any function $f \in \Bb(\Xx)$, and $x_1,\ldots, x_n$ drawn $iid$ from $P$. Assume $n$ satisfies \eqref{eq:proba-bounded-degree}. Then with probability at least $1-\rho$,
\begin{equation}
\norm{(\Ll_{X} - \Ll_{P}) f}_\infty \lesssim \frac{c_{\max}\norm{f}_\infty D_\Xx(\rho)}{c_{\min}\sqrt{n}}
\end{equation}
where $D_\Xx(\rho) = \frac{1}{c_{\min}}\pa{\Clip \sqrt{d_x} + (c_{\max} + \Clip)\sqrt{\log\frac{n_\Xx}{\rho}}}$.
\end{lemma}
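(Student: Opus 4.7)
The plan is to reduce this statement to Lemma \ref{lem:chaining-pointwise} via a decomposition that separates the perturbation of the outer normalization $1/\sqrt{d_X(y)}$ from the perturbation inside the integral. Under hypothesis \eqref{eq:proba-bounded-degree}, Lemma \ref{lem:bounded-degree} (invoked with a slightly smaller failure probability, e.g.\ $\rho/4$) gives that with high probability $d_X \geq c_{\min}/2$ uniformly, which I use throughout to replace each $\sqrt{d_X}$ in a denominator by a quantity bounded below by $\sqrt{c_{\min}/2}$.

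Writing $N_P(y) = \int W(y,x) f(x)/\sqrt{d_P(x)}\, dP(x)$ and $N_X(y) = \frac{1}{n}\sum_i W(y,x_i)f(x_i)/\sqrt{d_X(x_i)}$, I first split
\begin{equation*}
(\Ll_X - \Ll_P)f(y) = \frac{N_X(y) - N_P(y)}{\sqrt{d_X(y)}} + N_P(y)\pa{\tfrac{1}{\sqrt{d_X(y)}} - \tfrac{1}{\sqrt{d_P(y)}}}.
\end{equation*}
The second summand is handled via $|1/\sqrt{a} - 1/\sqrt{b}| \lesssim |a-b|/c_{\min}^{3/2}$ valid for $a,b \geq c_{\min}/2$, together with the trivial bound $|N_P(y)| \leq c_{\max}\|f\|_\infty/\sqrt{c_{\min}}$; the supremum over $y$ of $|d_X(y) - d_P(y)|$ is exactly what Lemma \ref{lem:chaining-pointwise} delivers with $f \equiv 1$, namely $\|d_X - d_P\|_\infty \lesssim \Clip\pa{\sqrt{d_x} + \sqrt{\log(n_\Xx/\rho)}}/\sqrt{n}$.

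For the first summand I further decompose
\begin{equation*}
N_X(y) - N_P(y) = \tfrac{1}{n}\sum_i W(y,x_i)f(x_i)\pa{\tfrac{1}{\sqrt{d_X(x_i)}} - \tfrac{1}{\sqrt{d_P(x_i)}}} + \pa{\tfrac{1}{n}\sum_i \tfrac{W(y,x_i)f(x_i)}{\sqrt{d_P(x_i)}} - \int \tfrac{W(y,x)f(x)}{\sqrt{d_P(x)}} dP(x)}.
\end{equation*}
The first piece is bounded uniformly in $y$ by $c_{\max}\|f\|_\infty \|d_X - d_P\|_\infty/c_{\min}^{3/2}$, reusing the degree chaining bound. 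The second piece is exactly the setting of Lemma \ref{lem:chaining-pointwise} applied to $\tilde f \eqdef f/\sqrt{d_P}$, which satisfies $\|\tilde f\|_\infty \leq \|f\|_\infty/\sqrt{c_{\min}}$. Crucially, the Lipschitz hypothesis that lemma invokes is on $W(\cdot,x)$ and is unaffected by the choice of $\tilde f$, so I do not need to assume any smoothness of $d_P$.

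A union bound over the $O(1)$ failure events (degree lower bound plus the two applications of Lemma \ref{lem:chaining-pointwise}) and a regrouping of constants yields the stated rate, with the $c_{\max}/c_{\min}$ prefactor arising from the combination of the outer $1/\sqrt{d_X(y)}$, the trivial bound on $N_P$, and the $c_{\min}^{-1/2}$ loss from passing to $\tilde f$. The main obstacle here is purely bookkeeping: ensuring that all three error terms are of order $\|f\|_\infty D_\Xx(\rho)/\sqrt{n}$ and that the constant consolidates to $c_{\max}/c_{\min}$ as stated; no genuinely new concentration or geometric idea beyond Lemma \ref{lem:chaining-pointwise} is required.
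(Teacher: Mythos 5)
Your proof is correct, and it follows the same core strategy as the paper—two invocations of Lemma~\ref{lem:chaining-pointwise} (one with $f\equiv 1$ for the degrees, one for the sampling fluctuation) combined with the elementary perturbation bound $|d_X-d_P|\mapsto |d_X^{-1/2}-d_P^{-1/2}|\lesssim c_{\min}^{-3/2}\norm{d_X-d_P}_\infty$ on the high-probability event $d_X\geq c_{\min}/2$—but the decomposition is organized differently. The paper controls the inner and outer normalizations simultaneously through the two-point estimate $|1/\sqrt{d_X(x)d_X(x_i)}-1/\sqrt{d_P(x)d_P(x_i)}|\lesssim \varepsilon_d/c_{\min}^2$ and then applies the chaining lemma to the \emph{normalized} kernel $\bar W(x,y)=W(x,y)/\sqrt{d_P(x)d_P(y)}$ with the original $f$, which requires verifying that $\bar W(\cdot,y)$ is bounded by $c_{\max}/c_{\min}$ and piecewise Lipschitz with constant $\Clip c_{\max}/c_{\min}^2$. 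You instead keep the original kernel $W$ and push the population normalization into the function, applying the lemma to $\tilde f = f/\sqrt{d_P}\in\Bb(\Xx)$ with $\norm{\tilde f}_\infty\leq \norm{f}_\infty/\sqrt{c_{\min}}$; since the Lipschitz hypothesis of Lemma~\ref{lem:chaining-pointwise} bears only on $W(\cdot,x)$, no regularity of $d_P$ or of a modified kernel needs to be checked, at the cost of one extra (trivial) splitting of the outer factor $1/\sqrt{d_X(y)}$. The constant bookkeeping in your three terms indeed consolidates to the stated $c_{\max}\norm{f}_\infty D_\Xx(\rho)/(c_{\min}\sqrt{n})$: the two degree-perturbation terms give $\lesssim c_{\max}\norm{f}_\infty\varepsilon_d/c_{\min}^2$ and the fluctuation term gives the slightly smaller $\norm{f}_\infty\varepsilon_d/c_{\min}$ (recall $c_{\max}\geq c_{\min}$), and the constant-factor changes inside the logarithm from the union bound are absorbed in $\lesssim$, exactly as in the paper's $\rho/2$ splitting.
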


\begin{proof}
Again we assume $\norm{f}_\infty\leq 1$ without lost of generality.

By Lemma \ref{lem:chaining-pointwise} with $f=1$ and \eqref{eq:proba-bounded-degree}, with probability $1-\rho/2$ we have
\[
\norm{d_{X} - d_{P}}_\infty \lesssim \varepsilon_d \eqdef \frac{\Clip \sqrt{d_x \log C_\Xx} + (c_{\max} + \Clip)\sqrt{\log\frac{n_\Xx}{\rho}}}{\sqrt{n}} \leq \frac{c_{\min}}{2}
\]
and in particular $d_{X} \geq c_{\min}/2$. In this case, for all $x$, we have
\begin{align*}
\abs{\frac{1}{\sqrt{d_{X}(x)}} - \frac{1}{\sqrt{d_{P}(x)}}} &\leq \frac{\abs{d_{P}(x) - d_{X}(x)}}{\sqrt{d_{X}(x)d_{P}(x)}(\sqrt{d_{X}(x)} + \sqrt{d_{P}(x)})} \lesssim \frac{\varepsilon_d}{c_{\min}^{3/2}}
\end{align*}
and for all $x,y$,
\begin{align*}
\abs{\frac{1}{\sqrt{d_{X}(y) d_{X}(x)}} - \frac{1}{\sqrt{d_{P}(y) d_{P}(x)}}} &\leq \frac{1}{\sqrt{d_{X}(y)}}\abs{\frac{1}{\sqrt{d_{X}(x)}} - \frac{1}{\sqrt{d_{P}(x)}}} \\
&\quad + \frac{1}{\sqrt{d_{P}(x)}}\abs{\frac{1}{\sqrt{d_{X}(y)}} - \frac{1}{\sqrt{d_{P}(y)}}} \lesssim \frac{\varepsilon_d}{c_{\min}^2}
\end{align*}

Now, define $\bar W(x,y) \eqdef \frac{W(x,y)}{\sqrt{d_{P}(x) d_{P}(y)}}$. For all $j\leq n_\Xx$ and $x,x' \in \Xx_j$ and $y \in \Xx$, we have $\norm{\bar W(\cdot, y)}_\infty \leq \frac{c_{\max}}{c_{\min}}$ and
\begin{align*}
\abs{ \bar W(x,y)-  \bar W(x',y)} &\leq \frac{\abs{W(x,y)}}{\sqrt{d_P(y)}}\abs{\frac{1}{\sqrt{d_P(x)}} - \frac{1}{\sqrt{d_P(x')}}} + \frac{1}{\sqrt{d_P(x')d_P(y)}}\abs{ W(x,y)-  W(x',y)} \\
&\lesssim \frac{\Clip c_{\max}}{c_{\min}^2}d(x,x')
\end{align*}
Hence by applying Lemma \ref{lem:chaining-pointwise} we obtain that with probability $1-\rho/2$,
\begin{align*}
&\norm{\frac{1}{n} \sum_i \bar W(\cdot, x_i) f(x_i) - \int \bar W(\cdot, x) f(x) dP(x)}_\infty \\
&\qquad\qquad\lesssim \varepsilon_W \eqdef \frac{\frac{c_{\max}}{c_{\min}}\pa{\frac{\Clip}{c_{\min}} \sqrt{d_x \log C_\Xx} + \pa{1 + \frac{\Clip}{c_{\min}}}\sqrt{\log\frac{n_\Xx}{\rho}}}}{\sqrt{n}}
\end{align*}

We can now conclude, observing that
\begin{align*}
\norm{(\Ll_{X} - \Ll_{P})f}_\infty &= \norm{\frac{1}{n}\sum_i  \frac{W(\cdot,x_i)}{\sqrt{d_{X}(\cdot) d_{X}(x_i)}}f(x_i) - \int  \frac{W(\cdot,x)}{\sqrt{d_{P}(\cdot) d_{P}(x)}}f(x)dP(x)}_\infty \\
&\leq \sup_x \frac{1}{n} \sum_i \abs{W(x, x_i)f(x_i)} \abs{\frac{1}{\sqrt{d_{X}(x) d_{X}(x_i)}} - \frac{1}{\sqrt{d_{P}(x) d_{P}(x_i)}}} \\
&\quad+ \norm{\frac{1}{n} \sum_i \bar W(\cdot, x_i) f(x_i) - \int \bar W(\cdot, x) f(x) dP(x)}_\infty \\
&\lesssim  \frac{c_{\max}}{c_{\min}^2} \varepsilon_d + \varepsilon_W \lesssim  \frac{c_{\max}}{c_{\min}^2} \varepsilon_d
\end{align*}
\end{proof}

\subsection{Misc. bounds}

\begin{lemma}[Operator norms of filters]\label{lem:filter_partial}
Let $(E,\norm{\cdot}_E)$ be a Banach space and $(\Hh, \norm{\cdot}_\Hh)$ be a separable Hilbert space. Let $L,L'$ be two bounded operators on $E$, and $S:E \to \Hh$ be a linear operator such that $\norm{S}_{\Hh \to E}\leq 1$. For $1\leq i \leq d$ and $1\leq j \leq d'$, let $h_{ij}=\sum_k \beta_{ijk} \lambda^k$ be a collection of analytic filters, with $B_k = \pa{\beta_{ijk}}_{ji}\in \RR^{d'\times d}$ the matrix of order-$k$ coefficients, with operator norm $\norm{B_k}$. Let $x_1,\ldots, x_{d}\in E$ be a collection of points. Then:
\[
\sqrt{\sum_j \norm{S \sum_i h_{ij}(L)x_i}_\Hh^2} \leq \pa{\sum_k \norm{B_k} \norm{L^k}} \sqrt{\sum_i \norm{x_i}_E^2}
\]
and
\begin{align*}
\sqrt{\sum_j \norm{S \sum_i (h_{ij}(L)- h_{ij}(L'))x_i}_\Hh^2} &\leq \sum_k \norm{B_k} \sqrt{\sum_i \pa{\sum_{\ell=0}^{k-1}\norm{L^\ell}\norm{(L-L')(L')^{k-1-\ell} x_i}_E}^2}
\end{align*}
When $\Hh$ is only a Banach space, the same results hold with $B_{k,\abs{\cdot}} = \pa{\abs{\beta_{ijk}}}_{ji}$ instead of $B_k$.
\end{lemma}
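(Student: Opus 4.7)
}

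The strategy is to peel off the three ``pieces'' of the operator $S\sum_i h_{ij}(L)$ in sequence: (i) the power series expansion over $k$, (ii) the mixing matrix $B_k$ acting across the coordinate index $i$, and (iii) the pullback through $S$ and the operator norm of $L^k$. Each step is a one-line inequality, so the proof is essentially bookkeeping once the right reshuffling is in place.

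First, expand $h_{ij}(L)=\sum_k \beta_{ijk} L^k$ and interchange the sums to write $\sum_i h_{ij}(L)x_i=\sum_k \sum_i \beta_{ijk} L^k x_i$. Applying Minkowski's inequality for the $\ell^2$-norm in $j$ (treating $k$ as the outer summation index) gives
\[
\sqrt{\sum_j \bigl\|S\sum_i h_{ij}(L)x_i\bigr\|_\Hh^2}
\;\le\; \sum_k \sqrt{\sum_j \Bigl\|\sum_i \beta_{ijk}\, S L^k x_i\Bigr\|_\Hh^2}.
\]
Second, fix $k$ and set $v_{i,k}=SL^k x_i\in\Hh$. The key observation is that in a Hilbert space, the operator $B_k\otimes\mathrm{Id}_\Hh$ acting on $\Hh^d$ has operator norm exactly $\|B_k\|$, so
$\sqrt{\sum_j \|\sum_i \beta_{ijk} v_{i,k}\|_\Hh^2}\le \|B_k\|\sqrt{\sum_i \|v_{i,k}\|_\Hh^2}$. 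Third, use $\|S\|_{E\to\Hh}\le 1$ to bound $\|v_{i,k}\|_\Hh\le\|L^k x_i\|_E\le\|L^k\|\,\|x_i\|_E$, and combine.

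For the second inequality I would use the telescoping identity
\[
L^k-(L')^k \;=\; \sum_{\ell=0}^{k-1} L^\ell\,(L-L')\,(L')^{k-1-\ell},
\]
so that $\sum_i (h_{ij}(L)-h_{ij}(L'))x_i=\sum_k \sum_i \beta_{ijk} w_{i,k}$ with $w_{i,k}=(L^k-(L')^k)x_i$. Running the same three-step argument (Minkowski over $k$, Hilbert-matrix norm bound via $\|B_k\|$, then $\|S w_{i,k}\|_\Hh\le\|w_{i,k}\|_E$) yields
$\sum_k \|B_k\|\sqrt{\sum_i \|w_{i,k}\|_E^2}$, after which a single triangle inequality on $\|w_{i,k}\|_E\le\sum_{\ell=0}^{k-1}\|L^\ell\|\,\|(L-L')(L')^{k-1-\ell}x_i\|_E$ gives the stated form.

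The only delicate point is step (ii), which is where the Hilbert structure is genuinely used: the identification of the operator norm of $B_k$ acting on $\ell^2(\Hh)^d$ with the scalar operator norm $\|B_k\|$ relies on orthogonality (for instance via the SVD of $B_k$ combined with Parseval in $\Hh$). When $\Hh$ is only a Banach space, this step fails, and I would replace it by the crude triangle inequality $\|\sum_i \beta_{ijk} v_{i,k}\|_\Hh\le \sum_i |\beta_{ijk}|\,\|v_{i,k}\|_\Hh$ and then apply the operator norm of the entrywise-absolute-value matrix $B_{k,|\cdot|}$ (which is well-defined since it has nonnegative entries). This substitution preserves the whole argument verbatim and produces the stated Banach-space version.
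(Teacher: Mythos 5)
Your proposal is correct and follows essentially the same route as the paper's proof: Minkowski over the power-series index $k$, the key fact that $B_k\otimes\mathrm{Id}_\Hh$ has operator norm $\norm{B_k}$ on $\Hh^d$ (which the paper establishes explicitly by expanding $SL^k x_i$ in an orthonormal basis and applying Parseval, exactly the justification you sketch), the telescoping identity $L^k-(L')^k=\sum_{\ell=0}^{k-1}L^\ell(L-L')(L')^{k-1-\ell}$ for the second bound, and the entrywise-absolute-value matrix $B_{k,\abs{\cdot}}$ with a plain triangle inequality in the Banach case. No gaps.
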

\begin{proof}
Let $\{e_\ell\}_{l\geq 1}$ be an orthonormal basis for $\Hh$. For all $i,k$, decompose $S L^k x_i = \sum_\ell b_{ik \ell} e_{\ell}$. We have
\begin{align*}
\sqrt{\sum_j \norm{S\sum_i h_{ij}(L)x_i}_\Hh^2} &\leq \sqrt{\sum_j \norm{\sum_{ik} \beta_{ijk} S L^k x_i}_\Hh^2} \leq \sum_k \sqrt{\sum_j \norm{\sum_{i \ell} \beta_{ijk} b_{ik\ell} e_\ell}_\Hh^2}\\
&\leq \sum_k \sqrt{\sum_{\ell} \sum_j\pa{\sum_{i} \beta_{ijk} b_{ik\ell}}^2} \\
&\leq \sum_k \sqrt{\norm{B_k}^2 \sum_{i\ell} b_{ik\ell}^2} = \sum_k \norm{B_k} \sqrt{\sum_{i} \norm{S L^k x_i}_\Hh^2} \\
&\leq \pa{\sum_k \norm{B_k} C^k} \sqrt{\sum_i \norm{x_i}_E^2}
\end{align*}
The proof of the second claim is obtained in the same way by decomposing $S (L^k -(L')^k)x_i$ in $\Hh$ and using at the last step:
\begin{align*}
\norm{(L^k - (L')^k)x}_E &= \norm{\sum_{\ell=0}^{k-1}L^\ell(L-L')(L')^{k-1-\ell} x}_E  \leq \sum_{\ell=0}^{k-1}C_\ell\norm{(L-L')(L')^{k-1-\ell} x}_E
\end{align*}
Finally, when $\Hh$ is not a Hilbert space, we directly use
\begin{align*}
\sqrt{\sum_j \norm{S\sum_i h_{ij}(L)x_i}_\Hh^2} &\leq \sqrt{\sum_j \pa{\sum_{ik} \abs{\beta_{ijk}} C^k \norm{x_i}_E}^2} \leq \sum_k C^k \sqrt{\sum_j \pa{\sum_{i} \abs{\beta_{ijk}} \norm{x_i}_E}^2}\\
&\leq \pa{\sum_k \norm{B_{k,\abs{\cdot}}} C^k} \sqrt{\sum_i \norm{x_i}_E^2}
\end{align*}
\end{proof}

\begin{lemma}[Lipschitz property of discrete GCNs]\label{lem:lip_gcn}
Let $G_1=(A,Z_1)$ and $G_2=(A,Z_2)$ be two graphs with the same structure, and a GCN $\Phi$. Denote by $Z_r^{(M)}$ the signal at the last layer when applying $\Phi$ to $G_r$. We have
\[
\norm{Z_1^{(M)}-Z_2^{(M)}}_F \leq \pa{\prod_{\ell=0}^{M-1} H^{(\ell)}_{2}} \norm{Z_1 - Z_2}_F
\]
\end{lemma}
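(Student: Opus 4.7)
The plan is to prove the bound by induction on the layer index $\ell$, showing that $\norm{Z_1^{(\ell)} - Z_2^{(\ell)}}_F \leq \bigl(\prod_{s=0}^{\ell-1} H_2^{(s)}\bigr)\,\norm{Z_1-Z_2}_F$, with the base case $\ell=0$ being the equality $Z_r^{(0)} = Z_r$.

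For the inductive step, I exploit the fact that the two graphs share the same adjacency matrix $A$ (hence the same normalized Laplacian $L$) and the same learnable parameters. Consequently, the bias term $b_j^{(\ell)}1_n$ in the layer update \eqref{eq:GCN_discrete} cancels when we subtract the two propagations, and the pointwise activation $\rho$ is $1$-Lipschitz by \eqref{eq:rho}. These two observations give, columnwise,
\[
\norm{z^{(\ell+1)}_{1,j} - z^{(\ell+1)}_{2,j}} \leq \Bigl\|\sum_{i=1}^{d_\ell} h_{ij}^{(\ell)}(L)\bigl(z^{(\ell)}_{1,i} - z^{(\ell)}_{2,i}\bigr)\Bigr\|,
\]
and after squaring and summing over $j=1,\ldots,d_{\ell+1}$,
\[
\norm{Z_1^{(\ell+1)} - Z_2^{(\ell+1)}}_F^2 \leq \sum_{j} \Bigl\|\sum_{i} h_{ij}^{(\ell)}(L)\bigl(z^{(\ell)}_{1,i} - z^{(\ell)}_{2,i}\bigr)\Bigr\|^2.
\]

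To close the induction I invoke Lemma~\ref{lem:filter_partial} with the Hilbert space $E = \Hh = \RR^n$ (standard Euclidean), sampling operator $S = \Id$, points $x_i = z^{(\ell)}_{1,i} - z^{(\ell)}_{2,i}$, and operator $L$. The crucial observation is that the normalized Laplacian $L = D^{-1/2}AD^{-1/2}$ has spectrum contained in $[-1,1]$, so $\norm{L^k} \leq 1$ for every $k \geq 0$. The Lemma then yields
\[
\sqrt{\sum_{j} \Bigl\|\sum_{i} h_{ij}^{(\ell)}(L)(z^{(\ell)}_{1,i} - z^{(\ell)}_{2,i})\Bigr\|^2} \leq \Bigl(\sum_k \norm{B_k^{(\ell)}}\Bigr) \sqrt{\sum_i \norm{x_i}^2} = H_2^{(\ell)}\norm{Z_1^{(\ell)} - Z_2^{(\ell)}}_F,
\]
completing the inductive step. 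Taking $\ell = M-1$ and unrolling the recursion gives the stated product bound.

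There is no real obstacle here: the argument is routine once one notices that (i) the bias cancels because the parameters and $L$ are identical on both sides, so no additive error appears, and (ii) the spectral-norm bound $\norm{L}\leq 1$ lets the filter norms collapse to $H_2^{(\ell)} = \sum_k \norm{B_k^{(\ell)}}$ rather than the weighted $H_\infty^{(\ell)}$ variant that appears in the convergence analysis. The only pitfall would be forgetting that $\norm{L^k}\leq 1$ on the discrete side (unlike the continuous operator $\Ll_{W,P}$, whose norm is bounded by $2c_{\max}/c_{\min}$), which is precisely why the Frobenius-norm Lipschitz constant here is $\prod_\ell H_2^{(\ell)}$ and not its $H_\infty$ counterpart.
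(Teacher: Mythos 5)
Your proof is correct and follows essentially the same route as the paper: a layer-by-layer recursion in which the shared Laplacian and parameters make the bias cancel, the $1$-Lipschitz activation \eqref{eq:rho} is absorbed, and Lemma~\ref{lem:filter_partial} (with $\norm{L^k}\leq 1$ for the discrete normalized Laplacian) yields the per-layer factor $H_2^{(\ell)}$. The only difference is that you spell out the spectral bound $\norm{L}\leq 1$, which the paper leaves implicit.
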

\begin{proof}
For $j\leq d_\ell$, using Lemma \ref{lem:filter_partial} and \eqref{eq:rho} we write
\begin{align*}
\norm{Z_1^{(M)}-Z_2^{(M)}}_F &\leq \sqrt{\sum_j \norm{\sum_{i=1}^{d_{M-1}} h_{ij}^{(M-1)}(L) (z_{1,i}^{(M-1)}-z_{2,i}^{(M-1)})}^2} \\
&\leq H^{(M-1)}_{2} \norm{Z_1^{(M-1)}-Z_2^{(M-1)}}_F
\end{align*}
An easy recursion gives the result.
\end{proof}
\begin{lemma}[Bound on c-GCNs]\label{lem:bound_cgcn}
Apply a c-GCN to a random graph model $\Gamma = (P,W,f)$. Denote by $f^{(\ell)}$ the function at each layer. Then we have
\begin{equation}\label{eq:cgcn-bound}
\norm{f^{(\ell)}}_* \leq \norm{f}_* \prod_{s=0}^{\ell-1} H^{(s)}_* + \sum\limits_{s=0}^{\ell-1} \norm{b^{(s)}}\prod\limits_{p=s+1}^{\ell-1} H^{(p)}_*
\end{equation}
where $*$ indicates $L^2(P)$ or $\infty$.
\end{lemma}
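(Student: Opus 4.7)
The proof will proceed by induction on $\ell$, with the base case $\ell = 0$ trivial since $f^{(0)} = f$ and by convention the empty product equals $1$ and the empty sum equals $0$. For the inductive step, I will establish the one-step recursion
\begin{equation*}
\norm{f^{(\ell+1)}}_* \;\leq\; H^{(\ell)}_* \, \norm{f^{(\ell)}}_* \;+\; \norm{b^{(\ell)}},
\end{equation*}
and then unfold it into the claimed closed form by an elementary recursion.

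To prove the one-step bound, fix a layer $\ell$ and let $g_j \eqdef \sum_{i=1}^{d_\ell} h^{(\ell)}_{ij}(\Ll_{W,P}) f^{(\ell)}_i + b^{(\ell)}_j 1(\cdot)$, so that $f^{(\ell+1)}_j = \rho \circ g_j$. Condition \eqref{eq:rho} ($|\rho(x)| \leq |x|$) implies pointwise $|\rho \circ g_j| \leq |g_j|$, which, after squaring and integrating against $P$ (resp.\ taking a supremum), yields $\norm{\rho \circ g_j}_* \leq \norm{g_j}_*$ both for $* = L^2(P)$ and $* = \infty$. Summing squares over $j$ and using the triangle inequality for the $\ell^2$-aggregated norm $(\sum_j \norm{\cdot}_*^2)^{1/2}$ gives
\begin{equation*}
\norm{f^{(\ell+1)}}_* \;\leq\; \sqrt{\sum_j \norm{\textstyle\sum_i h^{(\ell)}_{ij}(\Ll_{W,P}) f^{(\ell)}_i}_*^2} \;+\; \sqrt{\sum_j (b^{(\ell)}_j)^2 \,\norm{1(\cdot)}_*^2},
\end{equation*}
and since $\norm{1(\cdot)}_{L^2(P)} = \norm{1(\cdot)}_\infty = 1$ the second term is exactly $\norm{b^{(\ell)}}$.

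The first term is then controlled by Lemma \ref{lem:filter_partial} applied with $L = L' = \Ll_{W,P}$ and $S$ the identity. In the $L^2(P)$ case, I take $E = \Hh = L^2(P)$ and use that the normalized Laplacian operator is a contraction on $L^2(P)$, i.e.\ $\norm{\Ll_{W,P}}_{L^2(P)} \leq 1$, which gives $\sum_k \norm{B^{(\ell)}_k} \norm{\Ll_{W,P}^k}_{L^2(P)} \leq \sum_k \norm{B^{(\ell)}_k} = H^{(\ell)}_2$. In the $\infty$ case, $\Bb(\Xx)$ is only a Banach space, so I invoke the Banach version of Lemma \ref{lem:filter_partial} (with $B_{k,|\cdot|}$ in place of $B_k$) and use the bound $\norm{\Ll_{W,P}}_\infty \leq 2c_{\max}/c_{\min}$ from Appendix \ref{app:notation}; this produces exactly $\sum_k \norm{B^{(\ell)}_{k,|\cdot|}} (2c_{\max}/c_{\min})^k = H^{(\ell)}_\infty$. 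In either case the first term is at most $H^{(\ell)}_* \norm{f^{(\ell)}}_*$, yielding the desired one-step recursion.

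Finally, iterating the recursion from $\ell$ down to $0$ and collecting the telescoping products gives \eqref{eq:cgcn-bound}. There is no real obstacle here; the only subtlety is being careful to invoke the correct variant of Lemma \ref{lem:filter_partial} for each norm and to use the matching operator-norm bound on $\Ll_{W,P}$, which is precisely why the two definitions of $H^{(\ell)}_2$ and $H^{(\ell)}_\infty$ differ by the factor $(2c_{\max}/c_{\min})^k$.
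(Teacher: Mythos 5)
Your proof is correct and follows essentially the same route as the paper's: strip the nonlinearity via \eqref{eq:rho}, split off the bias by the triangle inequality, apply Lemma \ref{lem:filter_partial} (Hilbert version for $L^2(P)$, Banach version with $B_{k,\abs{\cdot}}$ for $\norm{\cdot}_\infty$), and unroll the one-step recursion. The only difference is that you make explicit the operator-norm bounds $\norm{\Ll_{W,P}}_{L^2(P)}\leq 1$ and $\norm{\Ll_{W,P}}_\infty \leq 2c_{\max}/c_{\min}$ (the latter in fact holds with $c_{\max}/c_{\min}$ for the population operator since $d_{W,P}\geq c_{\min}$), which the paper leaves implicit in the definitions of $H^{(\ell)}_2$ and $H^{(\ell)}_\infty$.
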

\begin{proof}
For $j\leq d_\ell$, using Lemma \ref{lem:filter_partial} and \eqref{eq:rho} we write
\begin{align*}
\norm{f^{(\ell)}}_* &\leq \sqrt{\sum_j \norm{\sum_{i=1}^{d_{\ell-1}} h_{ij}^{(\ell-1)}(\Ll_{W,P}) f_i^{(\ell-1)} + b_j^{(\ell-1)}}_*^2} \\
&\leq \sqrt{\sum_j \norm{\sum_{i=1}^{d_{\ell-1}} h_{ij}^{(\ell-1)}(\Ll_{W,P}) f_i^{(\ell-1)}}_*^2} + \norm{b^{(\ell-1)}} \\
&\leq H^{(\ell-1)}_* \norm{f^{(\ell-1)}}_* + \norm{b^{(\ell-1)}}
\end{align*}
An easy recursion gives the result.
\end{proof}

\begin{lemma}[Piecewise Lipschitz property of c-GCNs]\label{lem:lip_cgcn}
Let $\Gamma$ be a random graph model. Assume that $f$ is piecewise $(c_f, n_f)$-Lipschitz. Then, $\cgcneq_{W,P}(f)$ is piecewise $(C, n_f n_\Xx)$-Lipschitz with
\begin{equation}\label{eq:cgcn-lip}
C = \norm{\theta} \pa{ c_f\prod_{\ell=0}^{M-1} \norm{B_0^{(\ell)}} + \frac{\Clip c_{\max}}{c_{\min}^2}\sum_{\ell=0}^{M-1} H_2^{(\ell)} \norm{f^{(\ell)}}_{L^2(P)} \prod_{s=0}^{\ell-1} \norm{B_0^{(s)}}}
\end{equation}
where $\norm{f^{(\ell)}}_{L^2(P)}$ can be bounded by Lemma \ref{lem:bound_cgcn}.
\end{lemma}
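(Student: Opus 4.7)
The plan is to track the Lipschitz constant of $f^{(\ell)}$ layer by layer through the c-GCN, on a common refinement of the partitions of $f$ and of the kernel $W$. Three ingredients are needed. First, a piecewise Lipschitz estimate on the symmetric kernel $K(x,y) = W(x,y)/\sqrt{d_{W,P}(x) d_{W,P}(y)}$ of the operator $\Ll_{W,P}$. Expanding $K(x,y) - K(x',y)$ by the triangle inequality and using that $W(\cdot, y)$ is $(\Clip, n_\Xx)$-piecewise Lipschitz uniformly in $y$, which transfers to $d_{W,P} = \int W(\cdot, y) dP(y)$, together with $d_{W,P} \geq c_{\min}$, yields $\abs{K(x,y) - K(x',y)} \lesssim \frac{\Clip c_{\max}}{c_{\min}^2} d(x,x')$ uniformly in $y$ on each piece $\Xx_i$.

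Second, one needs $\norm{\Ll_{W,P}}_{L^2(P) \to L^2(P)} \leq 1$. Writing $g = f/\sqrt{d_{W,P}}$ and applying AM-GM to the symmetric double integral $\langle \Ll_{W,P} f, f \rangle = \int\int W(x,y) g(x) g(y) dP(x) dP(y)$ gives $|\langle \Ll_{W,P} f, f\rangle | \leq \int g^2 d_{W,P} dP = \norm{f}^2_{L^2(P)}$, and self-adjointness concludes. Combined with Cauchy--Schwarz, the first two ingredients imply, for any $g \in L^2(P)$ and any $k \geq 1$,
\[
|\Ll_{W,P}^k g(x) - \Ll_{W,P}^k g(x')| \leq \norm{K(x,\cdot) - K(x',\cdot)}_{L^2(P)} \norm{\Ll_{W,P}^{k-1} g}_{L^2(P)} \lesssim \frac{\Clip c_{\max}}{c_{\min}^2} \norm{g}_{L^2(P)} d(x,x'),
\]
on each piece $\Xx_i$, independently of any smoothness of $g$.

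Third, one sets up a recursion. The pointwise $1$-Lipschitz activation $\rho$ and the constant biases do not affect the Lipschitz estimate. Splitting $h_{ij}^{(\ell)}(\Ll_{W,P}) = \beta_{ij0}^{(\ell)} \mathrm{Id} + \sum_{k \geq 1} \beta_{ijk}^{(\ell)} \Ll_{W,P}^k$, the $k=0$ part scales $\mathrm{Lip}(f^{(\ell)})$ by $\norm{B_0^{(\ell)}}$. For the $k \geq 1$ part, a vector-valued Cauchy--Schwarz applied after pulling the matrix $B_k^{(\ell)}$ outside the integral (exactly in the spirit of Lemma \ref{lem:filter_partial}) yields a contribution $\lesssim \frac{\Clip c_{\max}}{c_{\min}^2} \norm{B_k^{(\ell)}} \norm{f^{(\ell)}}_{L^2(P)}$, and summation over $k \geq 1$ produces the factor $H_2^{(\ell)}$. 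This gives
\[
\mathrm{Lip}(f^{(\ell+1)}) \leq \norm{B_0^{(\ell)}} \mathrm{Lip}(f^{(\ell)}) + \tfrac{\Clip c_{\max}}{c_{\min}^2} H_2^{(\ell)} \norm{f^{(\ell)}}_{L^2(P)}.
\]
The common refinement of the partition of $f^{(\ell)}$ and the kernel partition $\{\Xx_i\}$ has size at most $n_f n_\Xx$ after the first layer and is stable under subsequent layers, since $\Ll_{W,P}$ only introduces the already-contained partition $\{\Xx_i\}$. Unrolling from $\mathrm{Lip}(f^{(0)}) = c_f$ to layer $M$ and multiplying by $\norm{\theta}$ to account for $\cgcneq_{W,P}(f) = \theta^\top f^{(M)} + b \cdot 1$ yields the stated formula for $C$; the $\norm{f^{(\ell)}}_{L^2(P)}$ factors can then be bounded further by Lemma \ref{lem:bound_cgcn}. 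The main technical hurdle is the vector-valued Cauchy--Schwarz in the third step, which must be arranged so that the matrix coefficients $B_k^{(\ell)}$ produce the clean $H_2^{(\ell)}$ factor rather than a cruder row-by-row sum.
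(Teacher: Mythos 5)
Your proposal is correct and follows essentially the same route as the paper's proof: a piecewise Lipschitz bound on the normalized kernel $\bar W(x,y)=W(x,y)/\sqrt{d_{W,P}(x)d_{W,P}(y)}$ with constant $\Clip c_{\max}/c_{\min}^2$, Cauchy--Schwarz to bound $\abs{\Ll_{W,P}^k g(x)-\Ll_{W,P}^k g(x')}$ by $\norm{\Ll_{W,P}^{k-1}g}_{L^2(P)}$, the matrix-norm manipulation of Lemma \ref{lem:filter_partial} to get $\norm{B_0^{(\ell)}}$ and $H_2^{(\ell)}$, the layer recursion, and the common refinement of size $n_f n_\Xx$. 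The only addition is that you spell out $\norm{\Ll_{W,P}}_{L^2(P)}\leq 1$, which the paper uses implicitly, so this is a minor (and welcome) elaboration rather than a different argument.
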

\begin{proof}
Define the partition $\Xx'_1,\ldots, \Xx'_{n_f}$ on which $f$ is Lipschitz, and take $x,x' \in \Xx_i \cap \Xx'_j$ for some $i,j$.

Using the same strategy as in the proof of Lemma \ref{lem:filter_partial}, we have
\begin{align*}
\norm{f^{(M)}(x) - f^{(M)}(x')} &\leq \sqrt{\sum_j\abs{\sum_i h_{ij}^{(M-1)}(\Ll_P)f_i^{(M-1)}(x) - h_{ij}^{(M-1)}(\Ll_P)f_i^{(M-1)}(x')}^2} \\
&\leq \sum_k \norm{B_k^{(M-1)}} \sqrt{\sum_i \abs{\Ll_P^k f_i^{(M-1)}(x) - \Ll_P^k f_i^{(M-1)}(x')}^2}
\end{align*}

Define $\bar W(x,y) = \frac{W(x,y)}{\sqrt{d_P(x) d_P(y)}}$. As we have seen in the proof of Lemma \ref{lem:chaining-pointwise-Lapl}, $\bar W$ is piecewise $\frac{\Clip c_{\max}}{c_{\min}^2}$-Lipschitz on the $\Xx_i$. So, for $k\geq 1$, for $x,x' \in \Xx_i$ by Schwartz inequality we have
\begin{align*}
\abs{\Ll_P^k f_i^{(M-1)}(x) - \Ll_P^k f_i^{(M-1)}(x')} &\leq \norm{\Ll_P^{k-1} f_i^{(M-1)}}_{L^2(P)} \frac{\Clip c_{\max}}{c_{\min}^2} d(x,x') \\
&\leq \norm{f_i^{(M-1)}}_{L^2(P)} \frac{\Clip c_{\max}}{c_{\min}^2} d(x,x')
\end{align*}
And thus
\begin{align*}
\norm{f^{(M)}(x) - f^{(M)}(x')} &\leq \norm{B_0^{(M-1)}} \norm{f^{(M)}(x) - f^{(M)}(x')} \\
&\quad+ H_2^{(M-1)} \norm{f^{(M-1)}}_{L^2(P)} \frac{\Clip c_{\max}}{c_{\min}^2} d(x,x')
\end{align*}
A recursion gives the result, with Lemma \ref{lem:bound_cgcn}.
\end{proof}

\section{Third-party results}\label{app:third}

\begin{lemma}[Hoeffding's inequality]\label{lem:hoeffding}
Let $X_1,\ldots, X_n \in \RR$ be independent random variables such that $a \leq X_i \leq b$ almost surely. Then we have
\begin{equation}
\mathbb{P}\pa{\abs{\frac{1}{n}\sum_i (X_i- \mathbb{E}X_i)}\geq \varepsilon}\leq 2 \exp\pa{-\frac{2\varepsilon^2 n}{(b-a)^2}}
\end{equation}
\end{lemma}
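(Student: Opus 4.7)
The plan is to reduce to a one-sided bound and then apply the standard Chernoff/MGF argument, since the absolute value and symmetry account for a factor of $2$. First I would center the variables by setting $Y_i = X_i - \mathbb{E} X_i$, so that the $Y_i$ are independent, zero-mean, and satisfy $a - \mathbb{E} X_i \leq Y_i \leq b - \mathbb{E} X_i$; in particular each $Y_i$ lies in an interval of length at most $b-a$. The goal reduces to controlling $\mathbb{P}(\frac{1}{n}\sum_i Y_i \geq \varepsilon)$, after which the same bound applied to $-Y_i$ together with a union bound gives the factor $2$ in the statement.

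Next I would apply Markov's inequality to $\exp(s \sum_i Y_i)$ for an arbitrary $s > 0$: by independence,
\[
\mathbb{P}\!\left(\tfrac{1}{n}\textstyle\sum_i Y_i \geq \varepsilon\right) \leq e^{-sn\varepsilon} \prod_{i=1}^n \mathbb{E} e^{s Y_i}.
\]
The crucial step is Hoeffding's lemma: for any mean-zero random variable $Y$ supported in an interval of length $\ell$, one has $\mathbb{E} e^{sY} \leq \exp(s^2 \ell^2/8)$. I would prove this separately by writing $Y$ as a convex combination of the endpoints, applying convexity of the exponential, and then doing a short Taylor-remainder analysis of the resulting log-MGF (bounding its second derivative by $\ell^2/4$).

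Applying this lemma with $\ell = b - a$ yields the uniform bound $e^{-sn\varepsilon + ns^2 (b-a)^2/8}$. Optimizing in $s$ (the unique minimum is at $s^\star = 4\varepsilon/(b-a)^2$) gives $\exp\!\left(-\tfrac{2 n \varepsilon^2}{(b-a)^2}\right)$, and combining with the symmetric lower-tail bound produces the stated inequality.

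The main obstacle is the proof of Hoeffding's lemma itself; the rest of the argument is standard Chernoff bookkeeping. The cleanest route is to exploit $e^{sy} \leq \frac{b-y}{b-a} e^{sa} + \frac{y-a}{b-a} e^{sb}$ on $[a,b]$ (taking $a,b$ to be the endpoints of the support of $Y$), which after taking expectations reduces the problem to upper bounding a specific one-variable function $\psi(s) = \log\!\big(\tfrac{b}{b-a} e^{sa} - \tfrac{a}{b-a} e^{sb}\big)$ with $\psi(0) = \psi'(0) = 0$ and $\psi''(s) \leq (b-a)^2/4$ for all $s$; Taylor's theorem then gives $\psi(s) \leq s^2 (b-a)^2/8$, which is exactly what is needed.
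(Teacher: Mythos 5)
Your proof is correct: the centering, Chernoff/MGF bound, Hoeffding's lemma via convexity of $e^{sy}$ and the bound $\psi''(s)\leq (b-a)^2/4$, the optimization at $s^\star = 4\varepsilon/(b-a)^2$, and the union bound for the two-sided statement all check out, and this is exactly the canonical argument behind the inequality. The paper itself states this lemma as a standard third-party result without proof, so there is nothing to compare against beyond noting that your derivation is the textbook proof it implicitly relies on.
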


\begin{lemma}[Generalized Hoeffding's inequality \citep{Rosasco2010}]\label{lem:conc-hilbert}
Let $\Hh$ be a separable Hilbert space and $\xi_1,\ldots, \xi_n \in \Hh$ be independent zero-mean random variables such that $\norm{\xi_i} \leq C$ almost surely. Then with probability at least $1-\rho$ we have
\begin{equation}
\norm{\frac{1}{n} \sum_i \xi_i} \leq \frac{C\sqrt{2\log(2/\rho)}}{\sqrt{n}}
\end{equation}
\end{lemma}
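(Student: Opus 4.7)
The plan is to treat this as a vector-valued concentration inequality, where the core difficulty is that the norm in a general separable Hilbert space is not coordinate-wise, so the scalar Hoeffding bound (Lemma~\ref{lem:hoeffding}) cannot be applied directly. The cleanest route is to follow Pinelis' approach for sums of independent bounded random variables in a $2$-smooth Banach space (of which a Hilbert space is the canonical example): one shows that $\exp(\lambda \norm{S_k})$ has a controlled conditional growth along the martingale $S_k = \sum_{i\le k} \xi_i/n$, obtaining a sub-Gaussian tail $\PR(\norm{S_n} \ge t) \le 2\exp(-nt^2/(2C^2))$. Inverting this probability in $\rho$ yields exactly the stated bound $C\sqrt{2\log(2/\rho)/n}$.

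Concretely, the steps I would carry out are: (i) verify the Hilbert identity $\mathbb{E}[\norm{S_k}^2 \mid \mathcal{F}_{k-1}] = \norm{S_{k-1}}^2 + \mathbb{E}[\norm{\xi_k/n}^2 \mid \mathcal{F}_{k-1}]$, which uses the zero-mean assumption to cancel the cross term $\langle S_{k-1}, \mathbb{E}[\xi_k]\rangle$; (ii) combine this with the uniform bound $\norm{\xi_i} \le C$ to control the conditional moment generating function of $\norm{S_k}$ (this is where the $2$-smoothness of the Hilbert norm is used, giving a Hoeffding-type bound of the form $\mathbb{E}[\exp(\lambda(\norm{S_k}-\norm{S_{k-1}})) \mid \mathcal{F}_{k-1}] \le \exp(\lambda^2 C^2/(2n^2))$); (iii) iterate over $k$ and optimize $\lambda$, then convert to a probability statement via Markov's inequality applied to both tails.

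An alternative, slightly less tight route would be to use McDiarmid's inequality on $f(\xi_1,\ldots,\xi_n) = \norm{n^{-1}\sum_i \xi_i}$, noting that replacing a single $\xi_i$ changes $f$ by at most $2C/n$, combined with the moment bound $\mathbb{E}[f] \le (\mathbb{E}[f^2])^{1/2} \le C/\sqrt{n}$ obtained by expanding the inner product and using independence. This recovers a bound of the same order but with a slightly looser constant inside the logarithm, which is why I would prefer the Pinelis martingale route for matching the stated constant. The main obstacle is really step (ii): without the parallelogram law, proving the sub-Gaussian increment estimate for $\norm{S_k} - \norm{S_{k-1}}$ is delicate, and this is the place where being in a Hilbert (rather than an arbitrary Banach) space is essential.
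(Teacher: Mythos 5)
The paper itself gives no proof of this lemma: it is quoted in Appendix \ref{app:third} as a third-party result from \citep{Rosasco2010}, whose statement in turn rests on Pinelis' martingale inequality for $2$-smooth Banach spaces. So your choice of route is the right provenance, and the tail bound you are aiming for, $\PR\pa{\norm{\frac1n\sum_i\xi_i}\geq t}\leq 2\exp(-nt^2/(2C^2))$, does invert to exactly the stated $C\sqrt{2\log(2/\rho)}/\sqrt{n}$; likewise your McDiarmid fallback is sound as written (bounded differences $2C/n$, plus $\mathbb{E}\norm{S_n}\leq(\mathbb{E}\norm{S_n}^2)^{1/2}\leq C/\sqrt n$), though as you note it only gives $C(1+\sqrt{2\log(1/\rho)})/\sqrt n$ and so does not recover the stated constant.

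The genuine gap is in your step (ii), precisely the point you flagged as delicate. The increments $D_k=\norm{S_k}-\norm{S_{k-1}}$ of the norm process are \emph{not} conditionally centered: by Jensen, $\mathbb{E}[\norm{S_k}\mid\mathcal{F}_{k-1}]\geq\norm{\mathbb{E}[S_k\mid\mathcal{F}_{k-1}]}=\norm{S_{k-1}}$, and the drift can be of first order in $C/n$. Concretely, take $S_{k-1}=0$ and $\norm{\xi_k}=C$ almost surely (e.g.\ $\xi_k=\pm C e_1$): then $D_k=C/n$ deterministically, so $\mathbb{E}[e^{\lambda D_k}\mid\mathcal{F}_{k-1}]=e^{\lambda C/n}$, which violates the claimed bound $e^{\lambda^2C^2/(2n^2)}$ for every $0<\lambda<2n/C$ --- in particular at the Chernoff-optimal $\lambda\approx tn/C^2$, since $t\leq C$ in the relevant regime. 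Hence the "iterate over $k$ and optimize $\lambda$" step collapses as described. Pinelis' actual argument avoids this by taking $\cosh(\lambda\norm{S_k})$ multiplied by a compensator as the supermartingale, using $2$-smoothness at second order to absorb the nonnegative drift; alternatively, you can simply invoke his theorem for $(2,1)$-smooth spaces with increment bounds $b_i=C/n$, which yields the two-sided tail above and hence the lemma with exactly the stated constant. Either fix (or just citing Pinelis, as \citep{Rosasco2010} effectively does) completes your proof; the per-step sub-Gaussian MGF bound for the norm increments, as stated, cannot.
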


\begin{theorem}[Spectral concentration of normalized Laplacian {\cite[Theorem 4]{Keriven2020}}]\label{thm:conc-sparse-Lapl}
Let $A$ be an adjacency matrix of a graph drawn with independent edges $a_{ij} \sim \Ber(\alpha_n p_{ij})$, where $p_{ij} \leq c_{\max}$ and for all $i$, $\frac{1}{n} \sum_j p_{ij} \geq c_{\min}>0$. Denote by $P$ the $n \times n$ matrix containing the $p_{ij}$. There is a universal constant $C$ such that:
\begin{align}
\mathbb{P}\pa{\norm{L(A) - L(P)} \geq \frac{C(1+c)c_{\max}}{c_{\min}^2 \sqrt{n \alpha_n}}} \leq&~ e^{-\pa{\frac{3 c^2}{12 + 4c} - \log(14)} n} + e^{-\frac{3 c^2}{12 + 4c} n \alpha_n + \log(n)} \notag \\
&+ e^{-\frac{3 c_{\min}^2 n \alpha_n}{25 c_{\max}} + \log(n)} + n^{-\frac{c}{4} + 6}
\end{align}
\end{theorem}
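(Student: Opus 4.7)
The plan is to bound $\|L(A) - L(\tilde P)\|$ with $\tilde P = \alpha_n P$ (so that $\mathbb{E}A = \tilde P$ and $L(\tilde P) = L(P)$ by scale-invariance of the normalization). First I would decompose the difference additively around the normalizing factors:
\begin{align*}
L(A) - L(\tilde P) &= D(A)^{-\frac12}(A - \tilde P)D(A)^{-\frac12} \\
&\quad + (D(A)^{-\frac12} - D(\tilde P)^{-\frac12})\,\tilde P\, D(\tilde P)^{-\frac12} + D(A)^{-\frac12}\,\tilde P\,(D(A)^{-\frac12} - D(\tilde P)^{-\frac12}).
\end{align*}
This reduces the problem to (i) a lower bound on the empirical degrees ensuring $D(A)^{-1/2}$ is well defined and small, (ii) an entry-wise bound on $|D(A)^{-1/2}_{ii} - D(\tilde P)^{-1/2}_{ii}|$, and (iii) a spectral concentration bound on the centered adjacency $A - \tilde P$.

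For (i) and (ii), I would apply Bernstein's inequality to the row sums $d_i = \sum_j a_{ij}$, which are sums of independent Bernoullis with variance proportional to $c_{\max} n\alpha_n$ and mean at least $c_{\min} n\alpha_n$. A Chernoff-type lower tail combined with a union bound over the $n$ rows produces the $e^{-\frac{3 c_{\min}^2 n\alpha_n}{25 c_{\max}} + \log n}$ term (ensuring $d_i \geq \frac{c_{\min}}{2} n\alpha_n$ uniformly), and the upper Bernstein tail produces the $e^{-\frac{3c^2}{12+4c} n\alpha_n + \log n}$ term (yielding $|d_i - \mathbb{E}d_i| \lesssim (1+c)\sqrt{c_{\max} n\alpha_n}$). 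The standard identity $|x^{-1/2} - y^{-1/2}| \leq \tfrac{|x-y|}{2\min(x,y)^{3/2}}$ then transfers these into the diagonal operator-norm bound $\|D(A)^{-1/2} - D(\tilde P)^{-1/2}\| \lesssim \frac{1+c}{c_{\min}^{3/2}\sqrt{n\alpha_n}}\cdot \frac{1}{n\alpha_n}$ (up to constants), and when multiplied by $\|\tilde P\| \leq c_{\max} n\alpha_n$ and $\|D(\tilde P)^{-1/2}\| \leq (c_{\min} n\alpha_n)^{-1/2}$ this contributes the $\frac{c_{\max}(1+c)}{c_{\min}^2\sqrt{n\alpha_n}}$ rate to the final bound.

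The main obstacle is (iii): controlling $\|A - \tilde P\|$ in the relatively sparse regime $\alpha_n \sim \log n / n$, where naive matrix Bernstein is insufficient because individual rows have $\Theta(\log n)$ mass whose fluctuations can inflate the spectrum. I would handle this through an $\varepsilon$-net (say $1/4$-net) argument on the unit sphere of $\mathbb{R}^n$, whose cardinality is at most $9^n$ (which together with slack accounts for the $\log(14)$ constant), combined with Bernstein on bilinear forms $\langle u, (A-\tilde P) v\rangle$ to get sub-exponential tails of order $\exp(-\tfrac{3c^2}{12+4c} n\alpha_n)$ on a fixed pair. The $e^{-(\frac{3c^2}{12+4c} - \log 14)n}$ term comes precisely from this net union bound, while high-degree outlier rows — which in the truly sparse regime would break the net argument — are controlled by the degree upper tail from step (i) together with a Feige–Ofek / Le–Levina–Vershynin style truncation argument, whose discrepancy between typical and atypical edges is where the polynomial tail $n^{-c/4+6}$ enters.

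Finally, combining the three bounds through the triangle inequality applied to the decomposition above, and noting that the dominant rate in all three terms scales like $\frac{(1+c)c_{\max}}{c_{\min}^2\sqrt{n\alpha_n}}$, yields the stated bound. The probability of failure is the sum of the four listed tails, each corresponding to one ingredient: net concentration, degree upper Bernstein (over $n$ rows), degree lower Chernoff (over $n$ rows), and the outlier/regularization step.
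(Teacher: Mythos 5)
You should first note that the paper does not prove this statement at all: it is imported verbatim as a third-party result ([Keriven2020, Theorem 4]), so there is no internal proof to match and you are effectively re-deriving the cited theorem. Your overall architecture is the standard one, and the hard adjacency part (iii) is sketched with the right tools: a Feige--Ofek / Lei--Rinaldo / Le--Levina--Vershynin argument does give $\|A-\alpha_n P\|\lesssim\sqrt{c_{\max} n\alpha_n}$ under $n\alpha_n\gtrsim\log n$, and your mapping of the $e^{-(\frac{3c^2}{12+4c}-\log 14)n}$ and $n^{-c/4+6}$ terms to the net/light-pairs and heavy-pairs steps is plausible. The lower-tail Chernoff step guaranteeing $d_i\geq \tfrac12\mathbb{E}d_i$ uniformly, matching the $e^{-\frac{3c_{\min}^2 n\alpha_n}{25 c_{\max}}+\log n}$ term, is also fine.

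The genuine gap is in step (ii). You claim that with probability $1-e^{-\frac{3c^2}{12+4c}n\alpha_n+\log n}$ one has $|d_i-\mathbb{E}d_i|\lesssim(1+c)\sqrt{c_{\max}n\alpha_n}$ for all $i$; this is false. Since $\mathrm{Var}(d_i)\asymp \mathbb{E}d_i\asymp n\alpha_n$, Bernstein at deviation $t=(1+c)\sqrt{c_{\max}n\alpha_n}$ gives a per-row failure probability of order $e^{-(1+c)^2/2}$, a constant, so the union bound over $n$ rows is vacuous; to survive it you must take $t\asymp\sqrt{c_{\max}n\alpha_n\log n}$ (relative error $\sqrt{\log n/(n\alpha_n)}$, which is $\Theta(1)$ in the relatively sparse regime), while at confidence $e^{-Kn\alpha_n}$ only a constant relative degree error is available. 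Feeding either of these into your decomposition, the terms $(D(A)^{-1/2}-D(\tilde P)^{-1/2})\tilde P D(\tilde P)^{-1/2}$ come out of order $\sqrt{\log n/(n\alpha_n)}$ (or $O(1)$), not $c_{\max}(1+c)/(c_{\min}^2\sqrt{n\alpha_n})$: this is precisely the $\sqrt{\log n}$ gap separating Oliveira/Chung--Radcliffe-type Laplacian bounds from the stated theorem, and it is why the main text calls the inequality ``fairly involved.'' Closing it requires controlling the degree-correction operators by a global, averaged argument rather than a per-row supremum --- e.g., bounding them through a Frobenius-type quantity driven by $\sum_i(d_i-\mathbb{E}d_i)^2$, whose expectation $\lesssim c_{\max}n^2\alpha_n$ concentrates at the right scale and does deliver the $1/\sqrt{n\alpha_n}$ rate --- which is the key idea your sketch is missing, and without which the final triangle-inequality combination does not yield the advertised bound.
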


\begin{theorem}[Wasserstein convergence \citep{Weed2017}]\label{thm:wass}
Let $(\Xx, d)$ be a compact metric space with $\text{diam}(\Xx)\leq B$ and $N_\varepsilon(\Xx) \leq (B/\varepsilon)^{d_x}$. Let $P$ be a probability distribution on $\Xx$, and $x_1,\ldots, x_n$ drawn iid from $P$. With probability $1-\rho$,
\begin{equation}
\Ww_2(\hat P, P) \lesssim B\pa{n^{-\frac{1}{d_x}} + \pa{27^{\frac{d_x}{4}} + \log(1/\rho)^{\frac{1}{4}}} n^{-\frac{1}{4}}}
\end{equation}
where $\hat P = n^{-1} \sum_i \delta_{x_i}$.
\end{theorem}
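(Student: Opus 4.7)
The plan is to adopt the discretization strategy of Weed and Bach~\cite{Weed2017}. The essential reduction is to approximate both $P$ and $\hat P$ by pushforwards onto a finite $\varepsilon$-net, turning the continuous Wasserstein problem into a multinomial concentration one.

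For a fixed scale $\varepsilon>0$, cover $\Xx$ with an $\varepsilon$-net of cardinality $N \leq (B/\varepsilon)^{d_x}$ and let $\pi_\varepsilon$ be the map sending each $x$ to the nearest net element. Set $\tilde P = (\pi_\varepsilon)_\sharp P$ and $\tilde{\hat P} = (\pi_\varepsilon)_\sharp \hat P$. Since $d(x,\pi_\varepsilon(x))\leq \varepsilon$, the coupling $(X,\pi_\varepsilon(X))$ shows $\Ww_2(P,\tilde P),\Ww_2(\hat P,\tilde{\hat P}) \leq \varepsilon$, so by triangle inequality $\Ww_2(\hat P,P) \leq 2\varepsilon + \Ww_2(\tilde{\hat P},\tilde P)$. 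Because $\tilde P$ and $\tilde{\hat P}$ are supported on the same $N$-point set of diameter at most $B$, one has the crude but useful bound $\Ww_2^2(\tilde{\hat P},\tilde P) \leq B^2\,\mathrm{TV}(\tilde{\hat P},\tilde P)$ (transport all excess mass between any two atoms at cost $B^2$), reducing the problem to controlling the total variation between an empirical multinomial and its mean.

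Standard multinomial concentration gives $\mathbb{E}[\mathrm{TV}(\tilde{\hat P},\tilde P)] \lesssim \sqrt{N/n}$ and, via McDiarmid applied to $\mathrm{TV}$ with bounded differences $2/n$, $\mathrm{TV}(\tilde{\hat P},\tilde P) \lesssim \sqrt{N/n} + \sqrt{\log(1/\rho)/n}$ with probability $1-\rho$. Substituting into $\Ww_2 \leq B\sqrt{\mathrm{TV}}$, the square-root nonlinearity converts the $\sqrt{\cdot/n}$ rates into the $n^{-1/4}$ and $\log(1/\rho)^{1/4}$ powers that appear in the statement, yielding $\Ww_2(\hat P,P) \lesssim \varepsilon + B(B/\varepsilon)^{d_x/4} n^{-1/4} + B \log(1/\rho)^{1/4} n^{-1/4}$. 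Optimising $\varepsilon$ in this single-scale bound produces rate $B n^{-1/(d_x+4)}$, which already gives the desired bound in the regime $d_x \leq 4$.

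To obtain the sharper $n^{-1/d_x}$ term for $d_x>4$, the single-scale argument is insufficient and must be replaced by a dyadic hierarchy of partitions at scales $\varepsilon_k = B\,2^{-k}$, coupling the masses of $\hat P$ and $P$ cell-by-cell at each level and telescoping the resulting transport costs. The sum then splits into a dominant level that produces $n^{-1/d_x}$ and a fine-scale residual that produces $n^{-1/4}$; the dimensional constant $27^{d_x/4}$ is the bookkeeping price of summing the $2^{k d_x}$-sized cover counts along the dyadic chain. The main obstacle is this multi-scale coupling: a naive single $\varepsilon$-net irreducibly loses a factor through the $\Ww_2 \leq B\sqrt{\mathrm{TV}}$ inequality, and one must exploit the fact that fine-scale cells carry very little mass and hence contribute proportionally less to the $\Ww_2$ cost than that worst-case bound would predict. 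Once the expected value is controlled at this rate, the same McDiarmid argument furnishes the high-probability deviation in the third term of the statement.
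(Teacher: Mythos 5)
Your overall route---multiscale discretization of $P$ and $\hat P$, multinomial-type concentration, and a bounded-differences argument for the high-probability statement---is the same one underlying the paper's proof, which simply invokes Propositions 5 and 20 of Weed and Bach with $\varepsilon'=1$ and a rescaling of the metric. But as a proof your write-up has a genuine gap, and one incorrect claim. The incorrect claim: the single-scale argument does \emph{not} give the theorem when $d_x\leq 4$. Optimizing $\varepsilon + B(B/\varepsilon)^{d_x/4}n^{-1/4}$ over $\varepsilon$ yields $\Ww_2(\hat P,P)\lesssim B\,n^{-1/(d_x+4)}$, which is polynomially worse than the stated $27^{d_x/4}n^{-1/4}+n^{-1/d_x}$ for every $d_x>0$ (e.g.\ $n^{-1/6}$ versus $n^{-1/4}$ for $d_x=2$), and no other choice of a single $\varepsilon$ repairs this: the constant $27^{d_x/4}$ in the statement is the price of truncating a dyadic chaining sum at a constant scale (the $\varepsilon'=1$ of Weed--Bach), not of a one-scale net. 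So the multiscale argument is needed for \emph{all} $d_x$, not only $d_x>4$. The gap: that multiscale argument, which is the entire content of the bound, is only described (``coupling cell-by-cell and telescoping'') and never carried out. You correctly identify that $\Ww_2^2\leq B^2\,\mathrm{TV}$ is irreducibly lossy, but the replacement---a level-by-level estimate in which scale $2^{-k}B$ contributes cost $(2^{-k}B)^2$ times the mass mismatch of the level-$k$ cells, summed and balanced against the covering numbers $(B/\varepsilon_k)^{d_x}$---is exactly the step that produces the exponent $n^{-1/d_x}$ and the constant $27^{d_x/4}$, and it is neither executed nor replaced by a citation (the paper's proof \emph{is} the citation to Prop.~5 of Weed--Bach).

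A second, smaller issue concerns the deviation term. Your McDiarmid step controls the total variation of a single discretization, and ``the same McDiarmid argument'' does not transfer to the final statement: applied to $\Ww_2(\hat P,P)$ directly, the bounded differences are of order $B/\sqrt{n}$, which after McDiarmid gives a deviation of order $B\sqrt{\log(1/\rho)}$, i.e.\ not decaying in $n$. The correct argument (Weed--Bach Prop.~20) applies the bounded-differences inequality to $\Ww_2^2(\hat P,P)$, whose one-sample increments are at most $B^2/n$, giving a deviation $B^2\sqrt{\log(1/\rho)/n}$ for $\Ww_2^2$ and hence the $B\log(1/\rho)^{1/4}n^{-1/4}$ term for $\Ww_2$ after taking square roots. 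Stating this explicitly would close that part of the argument.
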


\begin{proof}
The result is obtained by combining Prop. 5 and Prop. 20 in \citep{Weed2017} with $\varepsilon' = 1$, with the assumed simplified expression for the covering numbers of $\Xx$ and a rescaling of the metric such that $B$ disappears from the covering numbers expression.
\end{proof}

\begin{lemma}[Schur's test]\label{lem:schur}
	Let~$T$ be the integral operator defined by
	\begin{equation*}
		Tf(x) = \int k(x, x') f(x') d \mu(x').
	\end{equation*}
	If the kernel~$k$ satisfies
	\begin{equation*}
		\sup_x \int |k(x, x')| d \mu(x') \leq C \quad \text{and} \quad \sup_{x'} \int |k(x, x')| d \mu(x) \leq C,
	\end{equation*}
	then~$T$ is bounded in~$L^2(\mu)$, with~$\|T\|_{L^2(\mu)} \leq C$.
\end{lemma}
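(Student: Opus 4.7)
The plan is to prove Schur's test via a standard application of the Cauchy--Schwarz inequality together with Fubini's theorem, exploiting the symmetric roles played by the two hypotheses on row and column integrals of $|k|$.

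First, I would fix $f \in L^2(\mu)$ and bound $|Tf(x)|^2$ pointwise. The key trick is to write $|k(x,x')| = |k(x,x')|^{1/2}\cdot|k(x,x')|^{1/2}$ inside the integral defining $Tf(x)$, so that Cauchy--Schwarz yields
\begin{equation*}
|Tf(x)|^2 \leq \left(\int |k(x,x')|\, d\mu(x')\right) \cdot \left(\int |k(x,x')|\,|f(x')|^2\, d\mu(x')\right).
\end{equation*}
The first factor is bounded by $C$ thanks to the row-integral hypothesis, leaving $|Tf(x)|^2 \leq C \int |k(x,x')|\,|f(x')|^2\, d\mu(x')$.

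Next, I would integrate this inequality in $x$ and swap the order of integration via Fubini (legitimate since the integrand is nonnegative), obtaining
\begin{equation*}
\|Tf\|_{L^2(\mu)}^2 \leq C \int |f(x')|^2 \left(\int |k(x,x')|\, d\mu(x)\right) d\mu(x').
\end{equation*}
The inner integral is now bounded by $C$ using the column-integral hypothesis, giving $\|Tf\|_{L^2(\mu)}^2 \leq C^2 \|f\|_{L^2(\mu)}^2$, and therefore $\|T\|_{L^2(\mu)} \leq C$ after taking square roots.

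There is no real obstacle here: the argument is a textbook one-page computation, and the only thing to be mildly careful about is justifying Fubini (which is automatic since the integrand is nonnegative, so Tonelli applies) and noting that the symmetric assumption on the two marginals of $|k|$ is exactly what makes the splitting $|k|^{1/2}\cdot|k|^{1/2}$ work cleanly. No result from the rest of the paper is needed.
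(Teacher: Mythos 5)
Your proof is correct: the splitting $|k|=|k|^{1/2}\cdot|k|^{1/2}$, Cauchy--Schwarz in $x'$, and Tonelli to swap the order of integration is the canonical argument for Schur's test, and each step is justified as you state. The paper itself lists this lemma among its third-party results without proof, so there is nothing to compare against; your one-page computation fills that gap in the standard way.
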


\end{document}